\newcommand*{\ARXIV}{}
\newcommand*{\CAMREADY}{}
	\renewcommand{\cite}[1]{\citep{#1}}
	\definecolor{mydarkblue}{rgb}{0,0.08,0.5}
	\def\footnoterule{\kern-3pt \hrule width 12pc \kern 2.6pt }
	\renewenvironment{abstract}%
	{%
		\vskip -0.1in%
		\centerline%
		{\large\bf Abstract}%
		\vspace{-1ex}%
		\begin{quote}%
		}
		{
			\par%
		\end{quote}%
		\vskip 0ex%
		\vspace{-2mm}
	}
	\title{\bf{On the Ability of Graph Neural Networks to \\ Model Interactions Between Vertices}}
	\author{%
		\textbf{\normalsize Noam Razin}\\ % Add \thanks{Equal contribution.} for equal contribution and \footnotemark[1] in the other equal contributing authors
		\normalsize Tel Aviv University\\[-0.2em]
		\normalsize\texttt{\fontsize{8.5}{8.5}\selectfont noamrazin@mail.tau.ac.il}\\
		\and
		\textbf{\normalsize Tom Verbin}\\
		\normalsize Tel Aviv University\\[-0.2em]
		\normalsize\texttt{\fontsize{8.5}{8.5}\selectfont tomverbin@mail.tau.ac.il}\\
		\and
		\textbf{\normalsize Nadav Cohen}\\
		\normalsize Tel Aviv University\\[-0.2em]
		\normalsize\texttt{\fontsize{8.5}{8.5}\selectfont cohennadav@cs.tau.ac.il}\\
	}	
	\date{}
	\renewcommand{\citet}[1]{\cite{#1}}  
	\newtheorem{claim}[theorem]{Claim}
	\newtheorem{fact}[theorem]{Fact}
	\newtheorem{procedure}{Procedure}
	\newtheorem{conjecture}{Conjecture}	
	\newtheorem{hypothesis}{Hypothesis}	
	\newcommand{\qed}{\hfill\ensuremath{\blacksquare}}
	\newtheorem{lemma}{Lemma}
	\newtheorem{theorem}{Theorem}
	\newtheorem*{theorem*}{Theorem}	
	\newtheorem{proposition}{Proposition}
	\theoremstyle{definition}
	\newtheorem{definition}{Definition}
\definecolor{darkgray}{rgb}{0.5, 0.5, 0.5}
\newcommand\darkgray[1]{{\color{darkgray}#1}}
\definecolor{green}{rgb}{0.0, 0.5, 0.0}
\def\be{\begin{equation}}
	\def\ee{\end{equation}}
\def\beas{\begin{eqnarray*}}
	\def\eeas{\end{eqnarray*}}
\def\bea{\begin{eqnarray}}
	\def\eea{\end{eqnarray}}
\newcommand{\hbf}{{\mathbf h}}
\newcommand{\xbf}{{\mathbf x}}
\newcommand{\zbf}{{\mathbf z}}
\newcommand{\ubf}{{\mathbf u}}
\newcommand{\vbf}{{\mathbf v}}
\newcommand{\sbf}{{\mathbf s}}
\newcommand{\ebf}{{\mathbf e}}
\newcommand{\abf}{{\mathbf a}}
\newcommand{\qbf}{{\mathbf q}}
\newcommand{\K}{{\mathcal K}}
\newcommand{\1}{{\mathbf 1}}
\newcommand{\Abf}{{\mathbf A}}
\newcommand{\Bbf}{{\mathbf B}}
\newcommand{\Ibf}{{\mathbf I}}
\newcommand{\Mbf}{{\mathbf M}}
\newcommand{\Sbf}{{\mathbf S}}
\newcommand{\Wbf}{{\mathbf W}}
\newcommand{\Xbf}{{\mathbf X}}
\newcommand{\Qbf}{{\mathbf Q}}
\newcommand{\Ubf}{{\mathbf U}}
\newcommand{\Zbf}{{\mathbf Z}}
\newcommand{\Atensor}{\boldsymbol{\mathcal{A}}}
\newcommand{\Btensor}{\boldsymbol{\mathcal{B}}}
\newcommand{\Ftensor}{\boldsymbol{\mathcal{F}}}
\newcommand{\Ttensor}{\boldsymbol{\mathcal{T}}}
\newcommand{\TT}{{\mathcal T}}
\newcommand{\WW}{{\mathcal W}}
\newcommand{\XX}{{\mathcal X}}
\newcommand{\OO}{{\mathcal O}}
\newcommand{\I}{{\mathcal I}}
\newcommand{\J}{{\mathcal J}}
\newcommand{\R}{{\mathbb R}}
\newcommand{\N}{{\mathbb N}}
\newcommand{\normbig}[1]{\big \| #1 \big \|}
\newcommand{\inprod}[2]{\left\langle{#1},{#2}\right\rangle}
\newcommand{\inprodbig}[2]{\big \langle{#1},{#2} \big \rangle}
\newcommand{\inprodnoflex}[2]{\langle{#1},{#2}\rangle}
\DeclareMathOperator*{\argmax}{argmax}
\newcommand{\mat}[2]{\delim{\llbracket}{\rrbracket}*{#1 ; #2}}
\newcommand{\kronp}{\circ}
\newcommand{\hadmp}{\odot}
\newcommand{\sign}{\mathrm{sign}}
\newcommand{\rank}{\mathrm{rank}}
\newcommand{\seprank}[2]{\mathrm{sep}\brk*{#1 ; #2}}
\newcommand{\sepranknoflex}[2]{\mathrm{sep}\brk{#1 ; #2}}
\newcommand{\seprankbig}[2]{\mathrm{sep}\brk1{#1 ; #2}}
\newcommand{\graph}{\mathcal{G}}
\newcommand{\vertices}{\mathcal{V}}
\newcommand{\edges}{\mathcal{E}}
\newcommand{\neigh}{\mathcal{N}}
\newcommand{\neighin}{\mathcal{N}_{in}}
\newcommand{\neighout}{\mathcal{N}_{out}}
\newcommand{\cut}{\mathcal{C}}
\newcommand{\cutset}{\mathcal{S}}
\newcommand{\cutdir}{\mathcal{C}^{\to}}
\newcommand{\cutsetdir}{\mathcal{S}^{\to}}
\newcommand{\agg}{\text{\sc{aggregate}}}
\newcommand{\fmat}{\Xbf}
\newcommand{\fvec}[1]{\xbf^{(#1)}}
\newcommand{\hidvec}[2]{\hbf^{(#1, #2)}}
\newcommand{\weightmat}[1]{\Wbf^{(#1)}}
\newcommand{\mindim}{D}
\newcommand{\indim}{D_{x}}
\newcommand{\hdim}{D_{h}}
\newcommand{\params}{\theta}
\newcommand{\funcvert}[3]{f^{(#1, #2, #3)}}
\newcommand{\funcgraph}[2]{f^{(#1, #2)}}
\newcommand{\nwalk}[3]{\rho_{#1} \brk{#2, #3}}
\newcommand{\deltatensor}[1]{\boldsymbol{\delta}^{(#1)}}
\newcommand{\tensorendgraph}{\Ftensor}
\newcommand{\fvecdup}[2]{\xbf^{(#1, #2)}}
\newcommand{\deltatensordup}[3]{\boldsymbol{\delta}^{(#1, #2, #3)}}
\newcommand{\weightmatdup}[3]{\Wbf^{(#1, #2, #3)}}
\newcommand{\contract}[1]{*_{#1}}
\newcommand{\contractlast}{*}
\newcommand{\modemapgraph}{\mu}
\newcommand{\dupmap}[3]{\phi_{#1, #2, #3}}
\newcommand{\dupmapvertex}[3]{\phi^{(t)}_{#1, #2, #3}}
\newcommand{\multiset}[1]{\brk[c]*{ \!\! \brk[c]*{ #1 } \!\! } }
\newcommand{\multisetbig}[1]{\brk[c]!{ \!\! \brk[c]!{ #1 } \!\! } }
\newcommand{\tngraph}{\TT}
\newcommand{\tnvertices}[1]{\vertices_{#1}}
\newcommand{\tnedges}[1]{\edges_{#1}}
\newcommand{\tnedgeweights}[1]{w_{#1}}
\newcommand{\mulcutweight}[1]{w_{#1}^{\Pi}}
\newcommand{\modcutweight}[1]{\widetilde{w}_{#1}^{\Pi}}
\newcommand{\modcutedges}[1]{ \widetilde{\edges}_{#1} }
\newcommand{\tnverticesleaf}[2]{\vertices_{#1} \brk[s]{ #2 }}
\newcommand{\gridmatnoflex}[1]{\Bbf \brk{#1}}
\newcommand{\gridtensor}[1]{\Btensor \brk*{#1}}
\newcommand{\gridtensorbig}[1]{\Btensor \brk1{#1}}
\newcommand{\gridtensornoflex}[1]{\Btensor \brk{#1}}
\newcommand{\walkin}[2]{ \mathrm{WI}_{#1} \brk{ #2 } }
\newcommand{\walkinvert}[3]{ \mathrm{WI}_{#1, #2} \brk{ #3 } }
\newcommand{\edgetypemap}{\tau}
\def\multisetcoeff#1#2{\ensuremath{\brk*{ \kern-.3em\brk*{\genfrac{}{}{0pt}{}{#1}{#2} }\kern-.3em}}}
\def\multisetcoeffnoflex#1#2{\ensuremath{\brk1{ \kern-.3em\brk1{\genfrac{}{}{0pt}{}{#1}{#2} }\kern-.3em}}}
\DeclareFontFamily{U}{mathx}{\hyphenchar\font45}
\DeclareFontShape{U}{mathx}{m}{n}{<-> mathx10}{}
\DeclareSymbolFont{mathx}{U}{mathx}{m}{n}
\DeclareMathAccent{\widebar}{0}{mathx}{"73}
\definecolor{darkspringgreen}{rgb}{0.09, 0.45, 0.27}
	\renewcommand{\endnote}[1]{\null} 
	\newcommand*{\ABBR}{}
	\newcommand*{\ABBR}{}
	\newcommand*{\ABBR}{}
	\newcommand*{\ABBR}{}
	\newcommand*{\ABBR}{}
	\newcommand{\eg}{{\it e.g.}}
	\newcommand{\ie}{{\it i.e.}}
	\newcommand{\cf}{{\it cf.}}
\titlespacing*{\paragraph}{0pt}{1ex plus 1ex minus .5ex}{1em}
\begin{document}
	
	% SPACING (only use when absolutely necessary!)
	% Use to control size below and above equations
	%	\setlength{\abovedisplayskip}{1.2pt}
	%	\setlength{\belowdisplayskip}{1.2pt}
	%	\setlength{\abovedisplayshortskip}{1.2pt}
	%	\setlength{\belowdisplayshortskip}{1.2pt}
	% Use to control size of itemize and enumerate environments
%		\setenumerate{itemsep=1pt}
%		\setitemize{itemsep=1pt}
	
	% TITLE AND AUTHORS
	\ifdefined\ARXIV
		\maketitle
	\fi
	\ifdefined\NEURIPS
	\title{On the Ability of Graph Neural Networks to \\ Model Interactions Between Vertices}
		\author{
			Noam Razin\\
			Tel Aviv University\\	
%			\texttt{noamrazin@mail.tau.ac.il}\\
			\And
			Tom Verbin\\
			Tel Aviv University\\	
%			\texttt{tomverbin@mail.tau.ac.il}\\
			\And 
			Nadav Cohen\\
			Tel Aviv University\\
%			\texttt{cohennadav@cs.tau.ac.ill}\\			
		}
		\maketitle
	\fi
	\ifdefined\CVPR
		\title{Paper Title}
		\author{
			Author 1 \\
			Author 1 Institution \\	
			\texttt{author1@email} \\
			\and
			Author 2 \\
			Author 2 Institution \\
			\texttt{author2@email} \\	
			\and
			Author 3 \\
			Author 3 Institution \\
			\texttt{author3@email} \\
		}
		\maketitle
	\fi
	\ifdefined\AISTATS
		\twocolumn[
		\aistatstitle{Paper Title}
		\ifdefined\CAMREADY
			\aistatsauthor{Author 1 \And Author 2 \And Author 3}
			\aistatsaddress{Author 1 Institution \And Author 2 Institution \And Author 3 Institution}
		\else
			\aistatsauthor{Anonymous Author 1 \And Anonymous Author 2 \And Anonymous Author 3}
			\aistatsaddress{Unknown Institution 1 \And Unknown Institution 2 \And Unknown Institution 3}
		\fi
		]	
	\fi
	\ifdefined\ICML
		\icmltitlerunning{On the Ability of Graph Neural Networks to Model Interactions Between Vertices}
		\twocolumn[
		\icmltitle{On the Ability of Graph Neural Networks to \\ Model Interactions Between Vertices} 
		\icmlsetsymbol{equal}{*}
		\begin{icmlauthorlist}
			\icmlauthor{Noam Razin}{TAU} % Add ''equal'' next to institution identifier if appropriate
			\icmlauthor{Tom Verbin}{TAU}
			\icmlauthor{Nadav Cohen}{TAU}
		\end{icmlauthorlist}
		\icmlaffiliation{TAU}{Blavatnik School of Computer Science, Tel Aviv University, Israel}
		\icmlcorrespondingauthor{Noam Razin}{noamrazin@mail.tau.ac.il}
		\icmlkeywords{Graph Neural Networks, Expressivity, Interactions, Separation Rank, Edge Sparsification}
		\vskip 0.3in
		]
		\printAffiliationsAndNotice{} % Add \icmlEqualContribution inside {} if appropriate
	\fi
	\ifdefined\ICLR
		\title{On the Ability of Graph Neural Networks to \\ Model Interactions Between Vertices}
		\author{Noam Razin, Tom Verbin, Nadav Cohen\\
			Tel Aviv University\\
			\texttt{\{noamrazin,tomverbin,cohennadav\}@mail.tau.ac.il}\\	
		}
		\maketitle
	\fi
	\ifdefined\COLT
		\title{Paper Title}
		\coltauthor{
			\Name{Author 1} \Email{author1@email} \\
			\addr Author 1 Institution
			\And
			\Name{Author 2} \Email{author2@email} \\
			\addr Author 2 Institution
			\And
			\Name{Author 3} \Email{author3@email} \\
			\addr Author 3 Institution}
		\maketitle
	\fi

	% ABSTRACT
	\begin{abstract}
\noindent
Graph neural networks (GNNs) are widely used for modeling complex interactions between entities represented as vertices of a graph.
Despite recent efforts to theoretically analyze the expressive power of GNNs, a formal characterization of their ability to model interactions is lacking.
The current paper aims to address this gap.
Formalizing strength of interactions through an established measure known as \emph{separation rank}, we quantify the ability of certain GNNs to model interaction between a given subset of vertices and its complement, \ie~between the sides of a given partition of input vertices.
Our results reveal that the ability to model interaction is primarily determined by the partition's \emph{walk index}~---~a graph-theoretical characteristic defined by the number of walks originating from the boundary of the partition.
Experiments with common GNN architectures corroborate this finding.
As a practical application of our theory, we design an edge sparsification algorithm named \emph{Walk Index Sparsification} (\emph{WIS}), which preserves the ability of a GNN to model interactions when input edges are removed.
WIS is simple, computationally efficient, and in our experiments has markedly outperformed alternative methods in terms of induced prediction accuracy.\footnote{
	An implementation of WIS is available at \url{https://github.com/noamrazin/gnn_interactions}.
	\label{note:code}
}
More broadly, it showcases the potential of improving GNNs by theoretically analyzing the interactions they can model.
\end{abstract}

	% KEYWORDS
	\ifdefined\COLT
		\medskip
		\begin{keywords}
			\emph{TBD}, \emph{TBD}, \emph{TBD}
		\end{keywords}
	\fi

	% Add main paper sections here
	
	% INTRODUCTION
	\section{Introduction}
\label{sec:intro}

\emph{Graph neural networks} (\emph{GNNs}) are a family of deep learning architectures, designed to model complex interactions between entities represented as vertices of a graph.
In recent years, GNNs have been successfully applied across a wide range of domains, including social networks, biochemistry, and recommender systems (see, \eg,~\citet{duvenaud2015convolutional,kipf2017semi,gilmer2017neural,hamilton2017inductive,velivckovic2018graph,ying2018graph,wu2020comprehensive,bronstein2021geometric}).
Consequently, significant interest in developing a mathematical theory behind GNNs has arisen.

One of the fundamental questions a theory of GNNs should address is \emph{expressivity}, which concerns the class of functions a given architecture can realize.
Existing studies of expressivity largely fall into three categories.
First, and most prominent, are characterizations of ability to distinguish non-isomorphic graphs~\citep{xu2019powerful,morris2019weisfeiler,maron2019provably,loukas2020hard,balcilar2021breaking,bodnar2021weisfeiler,barcelo2021graph,bouritsas2022improving,geerts2021let,geerts2022expressiveness,papp2022theoretical}, as measured by equivalence to classical Weisfeiler-Leman graph isomorphism tests~\citep{weisfeiler1968reduction}.
Second, are proofs for universal approximation of continuous permutation invariant or equivariant functions, possibly up to limitations in distinguishing some classes of graphs~\citep{maron2019universality,keriven2019universal,chen2019equivalence,loukas2020graph,azizian2021expressive,geerts2022expressiveness}.
Last, are works examining specific properties of GNNs such as frequency response~\citep{nt2019revisiting,balcilar2021analyzing} or computability of certain graph attributes, \eg~moments, shortest paths, and substructure multiplicity~\citep{dehmamy2019understanding,barcelo2020logical,chen2020can,garg2020generalization,loukas2020graph,chen2021graph,bouritsas2022improving,zhang2023rethinking}.

A major drawback of many existing approaches~---~in particular proofs of equivalence to Weisfeiler-Leman tests and those of universality~---~is that they operate in asymptotic regimes of unbounded network width or depth.
Moreover, to the best of our knowledge, none of the existing approaches formally characterize the strength of interactions GNNs can model between vertices, and how that depends on the structure of the input graph and the architecture of the neural network.

The current paper addresses the foregoing gaps.
Namely, it theoretically quantifies the ability of fixed-size GNNs to model interactions between vertices, delineating the impact of the input graph structure and the neural network architecture (width and depth).
Strength of modeled interactions is formalized via \emph{separation rank}~\citep{beylkin2002numerical}~---~a commonly used measure for the interaction a function models between a subset of input variables and its complement (the rest of the input variables).
Given a function and a partition of its input variables, the higher the separation rank, the more interaction the function models between the sides of the partition.
Separation rank is prevalent in quantum mechanics, where it can be viewed as a measure of entanglement~\citep{levine2018deep}.
It was previously used for analyzing variants of convolutional, recurrent, and self-attention neural networks, yielding both theoretical insights and practical tools~\citep{cohen2017inductive,cohen2017analysis,levine2018benefits,levine2018deep,levine2020limits,wies2021transformer,levine2022inductive,razin2022implicit}.
We employ it for studying GNNs.

\begin{figure*}[t]
	\vspace{0mm}
	\begin{center}
		\includegraphics[width=0.99\textwidth]{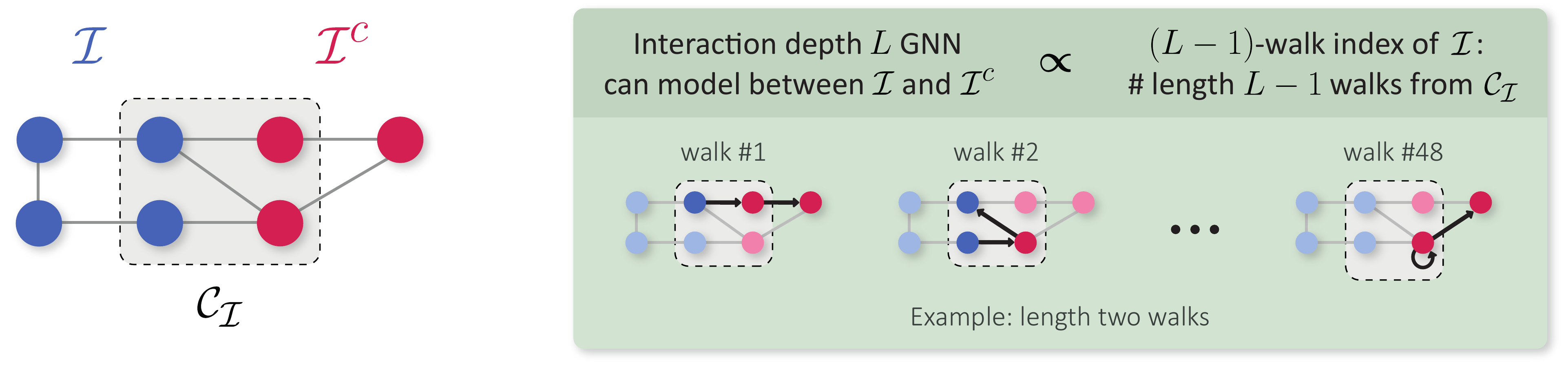}
	\end{center}
	\vspace{-2mm}
	\caption{
		Illustration of our main theoretical contribution: quantifying the ability of GNNs to model interactions between vertices of an input graph.
		Consider a partition of vertices $(\I, \I^c)$, illustrated on the left, and a depth~$L$ GNN with product aggregation (\cref{sec:gnns}).
		For graph prediction, as illustrated on the right, the strength of interaction the GNN can model between $\I$ and $\I^c$, measured via separation rank (\cref{sec:prelim:sep_rank}), is primarily determined by the partition's \emph{$(L - 1)$-walk index}~---~the number of length $L - 1$ walks emanating from $\cut_\I$, which is the set of vertices with an edge crossing the partition.
		The same holds for vertex prediction, except that there walk index is defined while only considering walks ending at the target vertex.
	}
	\label{fig:sep_rank_high_level}
\end{figure*}

Key to our theory is a widely studied correspondence between neural networks with polynomial non-linearity and \emph{tensor networks}\footnote{
	Tensor networks form a graphical language for expressing contractions of tensors~---~multi-dimensional arrays.
	They are widely used for constructing compact representations of quantum states in areas of physics (see,~\eg,~\citet{vidal2008class,orus2014practical}).
} \citep{cohen2016expressive,cohen2016convolutional,cohen2017inductive,cohen2018boosting,sharir2018expressive,levine2018benefits,levine2018deep,balda2018tensor,khrulkov2018expressive,khrulkov2019generalized,levine2019quantum,levine2020limits,razin2020implicit,wies2021transformer,razin2021implicit,razin2022implicit,levine2022inductive}. 
We extend this correspondence, and use it to analyze message-passing GNNs with product aggregation.
We treat both graph prediction, where a single output is produced for an entire input graph, and vertex prediction, in which the network produces an output for every vertex.
For graph prediction, we prove that the separation rank of a depth $L$ GNN with respect to a partition of vertices is primarily determined by the partition's \emph{$(L - 1)$-walk index}~---~a graph-theoretical characteristic defined to be the number of length $L - 1$ walks originating from vertices with an edge crossing the partition.
The same holds for vertex prediction, except that there walk index is defined while only considering walks ending at the target vertex.
Our result, illustrated in \cref{fig:sep_rank_high_level}, implies that for a given input graph, the ability of GNNs to model interaction between a subset of vertices~$\I$ and its complement~$\I^c$, predominantly depends on the number of walks originating from the boundary between $\I$ and~$\I^c$.
We corroborate this proposition through experiments with standard GNN architectures, such as Graph Convolutional Network (GCN)~\citep{kipf2017semi} and Graph Isomorphism Network (GIN)~\citep{xu2019powerful}.

Our theory formalizes conventional wisdom by which GNNs can model stronger interaction between regions of the input graph that are more interconnected.
More importantly, we show that it facilitates an \emph{edge sparsification} algorithm that preserves the expressive power of GNNs (in terms of ability to model interactions).
Edge sparsification concerns removal of edges from a graph for reducing computational and/or memory costs, while attempting to maintain selected properties of the graph (\cf~\citet{baswana2007simple,spielman2011graph,hamann2016structure,chakeri2016spectral,sadhanala2016graph,voudigari2016rank,li2020sgcn,chen2021unified}).
In the context of GNNs, our interest lies in maintaining prediction accuracy as the number of edges removed from the input graph increases.
We propose an algorithm for removing edges, guided by our separation rank characterization.
The algorithm, named \emph{Walk Index Sparsification} (\emph{WIS}), is demonstrated to yield high predictive performance for GNNs (\eg~GCN and GIN) over standard benchmarks of various scales, even when removing a significant portion of edges.
WIS is simple, computationally efficient, and in our experiments has markedly outperformed alternative methods in terms of induced prediction accuracies across edge sparsity levels.
More broadly, WIS showcases the potential of improving GNNs by theoretically analyzing the interactions they can model, and we believe its further empirical investigation is a promising direction for future research.

\medskip

The remainder of the paper is organized as follows.
\cref{sec:prelim} introduces notation and the concept of separation rank.
\cref{sec:gnns} presents the theoretically analyzed GNN architecture.
\cref{sec:analysis} theoretically quantifies (via separation rank) its ability to model interactions between vertices of an input graph.
\cref{sec:sparsification} proposes and evaluates WIS~---~an edge sparsification algorithm for arbitrary GNNs, born from our theory. 
Lastly,~\cref{sec:conclusion} concludes.
Related work is discussed throughout, and for the reader's convenience, is recapitulated in~\cref{app:related}.
	
	% PRELIMINARIES
	%%%% PRELIMINARIES
\section{Preliminaries}
\label{sec:prelim}

\subsection{Notation}
\label{sec:prelim:notation}

For $N \in \N$, let $[N] := \brk[c]{1, \ldots, N}$.
We consider an undirected input graph $\graph = \brk{\vertices, \edges}$ with vertices $\vertices = [ \abs{\vertices} ]$ and edges $\edges \subseteq \{ \{ i, j \} : i, j \in \vertices \}$.
Vertices are equipped with features $\fmat := \brk{ \fvec{1}, \ldots, \fvec{\abs{\vertices}} } \in \R^{\indim \times \abs{\vertices}}$~---~one $\indim$-dimensional feature vector per vertex ($\indim \in \N$).
For $i \in \vertices$, we use $\neigh (i) := \brk[c]{j \in \vertices : \{ i, j \} \in \edges }$ to denote its set of neighbors, and, as customary in the context of GNNs, assume the existence of all self-loops, \ie~$i \in \neigh (i)$ for all $i \in \vertices$ (\cf~\citet{kipf2017semi,hamilton2020graph}).
Furthermore, for $\I \subseteq \vertices$ we let $\neigh (\I) := \cup_{i \in \I} \neigh (i)$ be the neighbors of vertices in $\I$, and $\I^c := \vertices \setminus \I$ be the complement of $\I$.
We use $\cut_\I$ to denote the boundary of the partition $(\I, \I^c)$, \ie~the set of vertices with an edge crossing the partition, defined by $\cut_\I := \{ i \in \I : \neigh (i) \cap \I^c \neq \emptyset \} \cup \{ j \in \I^c : \neigh (j) \cap \I \neq \emptyset \}$.\footnote{
Due to the existence of self-loops, $\cut_\I$ is exactly the shared neighbors of $\I$ and $\I^c$, \ie~$\cut_\I = \neigh (\I) \cap \neigh (\I^c)$.
}
Lastly, we denote the number of length $l \in \N_{\geq 0}$ walks from any vertex in $\I \subseteq \vertices$ to any vertex in $\J \subseteq \vertices$ by $\nwalk{l}{\I}{\J}$.\footnote{
	For $l \in \N_{\geq 0}$, a sequence of vertices $i_{0}, \ldots, i_{l} \in \vertices$ is a length $l$ walk if $\{ i_{l' - 1} , i_{l'} \} \in \edges$ for all $l' \in [l]$.
}
In particular, $\nwalk{l}{\I}{\J} = \sum_{i \in \I, j \in \J} \nwalk{l}{ \{i\}}{\{j\}}$.

Note that we focus on undirected graphs for simplicity of presentation.
As discussed in~\cref{sec:analysis}, our results are extended to directed graphs in~\cref{app:extensions}.

\subsection{Separation Rank: A Measure of Modeled Interaction}
\label{sec:prelim:sep_rank}

A prominent measure quantifying the interaction a multivariate function models between a subset of input variables and its complement (\ie~all other variables) is known as \emph{separation rank}.
The separation rank was introduced in~\citet{beylkin2002numerical}, and has since been employed for various applications~\citep{harrison2003multiresolution,hackbusch2006efficient,beylkin2009multivariate}.
It is also a common measure of \emph{entanglement}, a profound concept in quantum physics quantifying interaction between particles~\citep{levine2018deep}.
In the context of deep learning, it enabled analyses of expressivity and generalization in certain convolutional, recurrent, and self-attention neural networks, resulting in theoretical insights and practical methods (guidelines for neural architecture design, pretraining schemes, and regularizers~---~see~\citet{cohen2017inductive,cohen2017analysis,levine2018benefits,levine2018deep,levine2020limits,wies2021transformer,levine2022inductive,razin2022implicit}).

Given a multivariate function $f : \brk{\R^{\indim}}^N \to \R$, its separation rank with respect to a subset of input variables $\I \subseteq [N]$ is the minimal number of summands required to express it, where each summand is a product of two functions~---~one that operates over variables indexed by~$\I$, and another that operates over the remaining variables.
Formally:
\begin{definition}
	\label{def:sep_rank}
	The \emph{separation rank} of $f : \brk{\R^{\indim}}^N \to \R$ with respect to $\I \subseteq [N]$ is:
	\be
	\begin{split}
		\seprank{f}{\I} := \min \Big \{ R \in \N_{\geq 0} :~&\exists~g^{(1)}, \ldots, g^{(R)} : \brk{\R^{\indim}}^{\abs{\I}} \to \R ,~ \bar{g}^{(1)}, \ldots, \bar{g}^{(R)} : \brk{\R^{\indim}}^{ \abs{\I^c} } \to \R \\ & \text{ s.t. } f ( \fmat ) = \sum\nolimits_{r = 1}^R g^{(r)} (\fmat_\I) \cdot \bar{g}^{(r)} (\fmat_{\I^c}) \Big \}
		\text{\,,}
	\end{split}
	\label{eq:sep_rank}
	\ee
	where $\fmat := \brk{\fvec{1}, \ldots, \fvec{N}}$, $\fmat_\I := \brk{ \fvec{i} }_{i \in \I}$, and $\fmat_{\I^c} := \brk{ \fvec{j} }_{j \in \I^c}$.
	By convention, if $f$ is identically zero then $\sepranknoflex{f}{\I} = 0$, and if the set on the right hand side of~\cref{eq:sep_rank} is empty then $\seprank{f}{\I} = \infty$.
\end{definition}

\paragraph*{Interpretation}
If $\sepranknoflex{f}{\I} = 1$, the function is separable, meaning it does not model any interaction between $\fmat_\I$ and $\fmat_{\I^c}$, \ie~between the sides of the partition $(\I, \I^c)$.
Specifically, it can be represented as $f (\fmat) = g ( \fmat_\I ) \cdot \bar{g} ( \fmat_{\I^c} )$ for some functions $g$ and $\bar{g}$.
In a statistical setting, where $f$ is a probability density function, this would mean that $\fmat_\I$ and $\fmat_{\I^c}$ are statistically independent.
The higher $\sepranknoflex{f}{\I}$ is, the farther $f$ is from separability, implying stronger modeling of interaction between $\fmat_\I$ and $\fmat_{\I^c}$.
	
	% GNNs
	\section{Graph Neural Networks}
\label{sec:gnns}

Modern GNNs predominantly follow the message-passing paradigm~\citep{gilmer2017neural,hamilton2020graph}, whereby each vertex is associated with a hidden embedding that is updated according to its neighbors.
The initial embedding of $i \in \vertices$ is taken to be its input features: $\hidvec{0}{i} := \fvec{i} \in \R^{\indim}$.
Then, in a depth~$L$ message-passing GNN, a common update scheme for the hidden embedding of $i \in \vertices$ at layer $l \in [L]$ is:
\be
\hidvec{l}{i} = \agg \brk2{ \multisetbig{ \weightmat{l} \hidvec{l - 1}{j} : j \in \neigh (i) } }
\text{\,,}
\label{eq:gnn_update}
\ee
where $\multiset{\cdot}$ denotes a multiset, $\weightmat{1} \in \R^{\hdim \times \indim}, \weightmat{2} \in \R^{\hdim \times \hdim}, \ldots, \weightmat{L} \in \R^{\hdim \times \hdim}$ are learnable weight matrices, with $\hdim \in \N$ being the network's width (\ie~hidden dimension), and $\agg$ is a function combining multiple input vectors into a single vector.
A notable special case is GCN~\citep{kipf2017semi}, in which $\agg$ performs a weighted average followed by a non-linear activation function (\eg~ReLU).\footnote{
In GCN, $\agg$ also has access to the degrees of vertices, which are used for computing the averaging weights.
We omit the dependence on vertex degrees in our notation for conciseness.
}
Other aggregation operators are also viable, \eg~element-wise sum, max, or product (\cf~\citet{hamilton2017inductive,hua2022high}).
We note that distinguishing self-loops from other edges, and more generally, treating multiple edge types, is possible through the use of different weight matrices for different edge types~\citep{hamilton2017inductive,schlichtkrull2018modeling}.
For conciseness, we hereinafter focus on the case of a single edge type, and treat multiple edge types in~\cref{app:extensions}. 

After $L$ layers, the GNN generates hidden embeddings $\hidvec{L}{1}, \ldots, \hidvec{L}{\abs{\vertices}} \in \R^{\hdim}$.
For graph prediction, where a single output is produced for the whole graph, the hidden embeddings are usually combined into a single vector through the $\agg$ function.
A final linear layer with weights $\weightmat{o} \in \R^{1 \times \hdim}$ is then applied to the resulting vector.\footnote{
We treat the case of output dimension one merely for the sake of presentation.
Extension of our theory (delivered in~\cref{sec:analysis}) to arbitrary output dimension is straightforward~---~the results hold as stated for each of the functions computing an output entry.
}
Overall, the function realized by a depth~$L$ graph prediction GNN receives an input graph $\graph$ with vertex features $\fmat := \brk{\fvec{1}, \ldots, \fvec{ \abs{\vertices} }} \in \R^{\indim \times \abs{\vertices}}$, and returns:
\be
\hspace{-7.5mm}\text{(graph prediction)} \quad \funcgraph{\params}{\graph} \brk{ \fmat } := \weightmat{o} \agg \brk1{ \multisetbig{ \hidvec{L}{i} : i \in \vertices } }
\text{\,,}
\label{eq:graph_pred_gnn}
\ee
with $\params := \brk{ \weightmat{1}, \ldots, \weightmat{L}, \weightmat{o} }$ denoting the network's learnable weights.
For vertex prediction tasks, where the network produces an output for every $t \in \vertices$, the final linear layer is applied to each $\hidvec{L}{t}$ separately.
That is, for a target vertex $t \in \vertices$, the function realized by a depth~$L$ vertex prediction GNN is given by:
\be
\hspace{-42.5mm}\text{(vertex prediction)} \quad
\funcvert{\params}{\graph}{t} \brk {\fmat} := \weightmat{o} \hidvec{L}{t}
\text{\,.}
\label{eq:vertex_pred_gnn}
\ee

Our aim is to investigate the ability of GNNs to model interactions between vertices.
Prior studies of interactions modeled by different deep learning architectures have focused on neural networks with polynomial non-linearity, building on their representation as tensor networks~\citep{cohen2016expressive,cohen2017inductive,cohen2018boosting,sharir2018expressive,levine2018benefits,levine2018deep,balda2018tensor,khrulkov2018expressive,levine2019quantum,levine2020limits,razin2020implicit,wies2021transformer,razin2021implicit,razin2022implicit,levine2022inductive}.
Although neural networks with polynomial non-linearity are less common in practice, they have demonstrated competitive performance~\citep{cohen2014simnets,cohen2016deep,sharir2016tensorial,stoudenmire2018learning,chrysos2020p,felser2021quantum,hua2022high}, and hold promise due to their compatibility with quantum computation~\citep{grant2018hierarchical,bhatia2019matrix} and fully homomorphic encryption~\citep{gilad2016cryptonets}.
More importantly, their analyses brought forth numerous insights that were demonstrated empirically and led to development of practical tools for widespread deep learning models (with non-linearities such as ReLU).

Following the above, in our theoretical analysis (\cref{sec:analysis}) we consider GNNs with (element-wise) product aggregation, which are polynomial functions of their inputs.
Namely, the $\agg$ operator from~\cref{eq:gnn_update,eq:graph_pred_gnn} is taken to be:
\be
\agg \brk{ \XX } := \hadmp_{ \xbf \in \XX } \xbf
\text{\,,}
\label{eq:prod_gnn_agg}
\ee
where $\hadmp$ stands for the Hadamard product and $\XX$ is a multiset of vectors.
The resulting architecture can be viewed as a variant of the GNN proposed in~\citet{hua2022high}, where it was shown to achieve competitive performance in practice.
Central to our proofs are tensor network representations of GNNs with product aggregation (formally established in~\cref{app:prod_gnn_as_tn}), analogous to those used for analyzing other types of neural networks.
We empirically demonstrate our theoretical findings on popular GNNs (\cref{sec:analysis:experiments}), such as GCN and GIN with ReLU non-linearity, and use them to derive a practical edge sparsification algorithm (\cref{sec:sparsification}).

We note that some of the aforementioned analyses of neural networks with polynomial non-linearity were extended to account for additional non-linearities, including~ReLU, through constructs known as \emph{generalized tensor networks}~\citep{cohen2016convolutional}.
We thus believe our theory may be similarly extended, and regard this as an interesting direction for future work.

	% ANALYSIS: THE EFFECT OF GRAPH STRUCTURE AND NETWORK ARCHITECTURE ON SEPARATION RANK
	\section{Theoretical Analysis: The Effect of Input Graph Structure and \\ Neural Network Architecture on Modeled Interactions}
\label{sec:analysis}

In this section, we employ separation rank (\cref{def:sep_rank}) to theoretically quantify how the input graph structure and network architecture (width and depth) affect the ability of a GNN with product aggregation to model interactions between input vertices.
We begin with an overview of the main results and their implications (\cref{sec:analysis:overview}), after which we delve into the formal analysis (\cref{sec:analysis:formal}).
Experiments demonstrate our theory's implications on common GNNs, such as GCN and GIN with ReLU non-linearity (\cref{sec:analysis:experiments}).

\subsection{Overview and Implications}
\label{sec:analysis:overview}

Consider a depth~$L$ GNN with width $\hdim$ and product aggregation (\cref{sec:gnns}).
Given a graph $\graph$, any assignment to the weights of the network $\params$ induces a multivariate function~---~$\funcgraph{\params}{\graph}$ for graph prediction (\cref{eq:graph_pred_gnn}) and $\funcvert{\params}{\graph}{t}$ for prediction over a given vertex $t \in \vertices$ (\cref{eq:vertex_pred_gnn})~---~whose variables correspond to feature vectors of input vertices.
The separation rank of this function with respect to $\I \subseteq \vertices$ thus measures the interaction modeled across the partition $(\I, \I^c)$, \ie~between the vertices in $\I$ and those in $\I^c$.
The higher the separation rank is, the stronger the modeled interaction.

\begin{figure*}[t]
	\vspace{0mm}
	\begin{center}
		\includegraphics[width=0.95\textwidth]{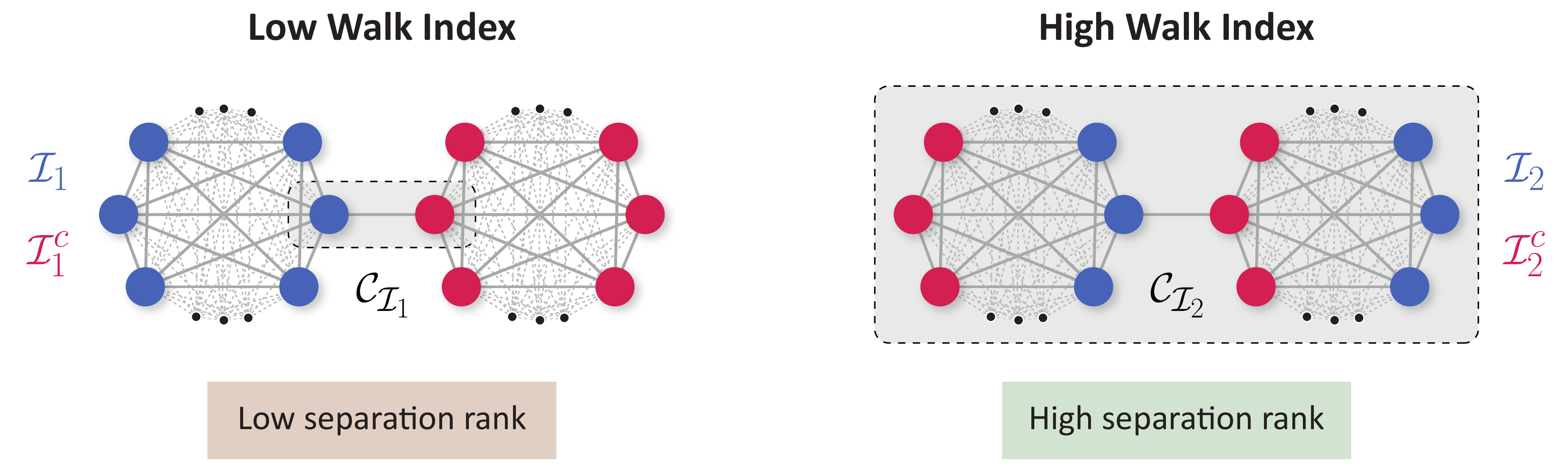}
	\end{center}
	\vspace{-2mm}
	\caption{
		Depth $L$ GNNs can model stronger interactions between sides of partitions that have a higher walk index (\cref{def:walk_index}).
		The partition $(\I_1, \I_1^c)$ (left) divides the vertices into two separate cliques, connected by a single edge.
		Only two vertices reside in $\cut_{\I_1}$~---~the set of vertices with an edge crossing the partition.
		Taking for example depth $L = 3$, the $2$-walk index of $\I_1$ is $\Theta ( | \vertices |^{2} )$ and its $(2, t)$-walk index is $\Theta ( | \vertices |)$, for $t \in \vertices$.
		In contrast, the partition $(\I_2, \I_2^c)$ (right) equally divides the vertices in each clique to different sides.
		All vertices reside in $\cut_{\I_2}$, meaning the $2$-walk index of $\I_2$ is $\Theta ( | \vertices |^{3} )$ and its $(2, t)$-walk index is $\Theta (| \vertices|^{2})$, for $t \in \vertices$.
		Hence, in both graph and vertex prediction scenarios, the walk index of $\I_1$ is relatively low compared to that of $\I_2$.
		Our analysis (\cref{sec:analysis:overview,sec:analysis:formal}) states that a higher separation rank can be attained with respect to $\I_2$, meaning stronger interaction can be modeled across $(\I_2, \I_2^c)$ than across $(\I_1, \I_1^c)$.
		We empirically confirm this prospect in~\cref{sec:analysis:experiments}.
	}
	\label{fig:sep_rank_analysis}
\end{figure*}

Key to our analysis are the following notions of \emph{walk index}, defined by the number of walks emanating from the boundary of the partition $(\I, \I^c)$, \ie~from vertices with an edge crossing the partition induced by~$\I$ (see~\cref{fig:sep_rank_high_level} for an illustration).

\begin{definition}
\label{def:walk_index}
Let $\I \subseteq \vertices$.
Denote by $\cut_\I$ the set of vertices with an edge crossing the partition $(\I, \I^c)$, \ie~$\cut_\I := \{ i \in \I : \neigh (i) \cap \I^c \neq \emptyset \} \cup \{ j \in \I^c : \neigh (j) \cap \I \neq \emptyset \}$, and recall that $\nwalk{l}{\cut_\I}{\J}$ denotes the number of length $l \in \N_{\geq 0}$ walks from any vertex in $\cut_\I$ to any vertex in $\J \subseteq \vertices$.
For~$L \in \N$:
\begin{itemize}[leftmargin=2em]
	\vspace{-2mm}
	\item (graph prediction)~~we define the \emph{$(L - 1)$-walk index} of $\I$, denoted $\walkin{L -1}{\I}$, to be the number of length $L - 1$ walks originating from $\cut_\I$, \ie~$\walkin{L-1}{\I} := \nwalk{L - 1}{\cut_\I}{\vertices}$; and
	
	\item (vertex prediction)~~for $t \in \vertices$ we define the \emph{$(L -1 , t)$-walk index} of $\I$, denoted $\walkinvert{L-1}{t}{\I}$, to be the number of length $L - 1$ walks from $\cut_\I$ that end at $t$, \ie~$\walkinvert{L-1}{t}{\I} := \nwalk{L - 1}{\cut_\I}{\{t\}}$.
\end{itemize}
\end{definition}

As our main theoretical contribution, we prove:
\begin{theorem}[informally stated]
\label{thm:sep_rank_informal_bounds}
For all weight assignments $\params$ and $t \in \vertices$:
\begin{align*}
	\text{(graph prediction)} \quad &\log \brk1{ \seprankbig{ \funcgraph{\params}{\graph} }{\I} } 
	= 
	\OO \brk1{ \log \brk{\hdim} \cdot \walkin{L - 1}{\I} } \text{\,,} \\[0.5em]
	\text{(vertex prediction)} \quad &\log \brk1{ \seprankbig{ \funcvert{\params}{\graph}{ t } }{\I} }
	=
	\OO \brk1{ \log \brk{ \hdim } \cdot \walkinvert{L - 1}{ t }{ \I } } \text{\,.}
\end{align*}
Moreover, nearly matching lower bounds hold for almost all weight assignments.\footnote{
	Almost all in the sense that the lower bounds hold for all weight assignments but a set of Lebesgue measure zero.
} 
\end{theorem}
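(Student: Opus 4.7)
My approach is to exploit the tensor network (TN) representation of GNNs with product aggregation, whose construction the authors develop in the appendix. Under this representation, the function $\funcgraph{\params}{\graph}$ (respectively $\funcvert{\params}{\graph}{t}$) is a polynomial in the input features, and its coefficient tensor is realized by a TN whose external indices are in bijection with the input vertices $\vertices$ and whose internal bonds all carry dimension $\hdim$ (with $\indim$-dimensional legs only at the leaves hosting input features). I invoke the standard TN fact that, for any bipartition of the external indices, the separation rank with respect to this bipartition is at most the product of bond dimensions along any edge cut of the TN separating the two sides.

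\textbf{Upper bound.} For the upper bound I unroll the GNN computation into rooted trees: a single tree rooted at $t$ for vertex prediction, and one tree per vertex, tied at the top by a Hadamard product followed by $\weightmat{o}$ for graph prediction. The nodes of such a tree correspond to walk prefixes $(i_0, \ldots, i_l)$ with $l \le L$ from the root; internal bonds have dimension $\hdim$; and the external legs (one per input vertex) are shared across all leaves with a common endpoint vertex. The key observation is that any walk of length $\le L$ that crosses the partition $(\I, \I^c)$ must pass through the boundary $\cut_\I$, so the ``crossing'' information must flow through bonds indexed by walks that touch $\cut_\I$. Building on this, I would construct an edge cut of size $O(\walkin{L-1}{\I})$ (respectively $O(\walkinvert{L-1}{t}{\I})$) using only such bonds, yielding the claimed $\hdim^{O(\walkin{L-1}{\I})}$ bound on the separation rank upon taking logarithms.

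\textbf{Lower bound.} For the nearly-matching lower bound I would follow the standard route of prior TN analyses of polynomial neural networks. The separation rank of the polynomial $\funcgraph{\params}{\graph}$ (or $\funcvert{\params}{\graph}{t}$) equals the rank of a specific matricization of its coefficient tensor, and the entries of this matricization are themselves polynomials in the entries of the weights in $\params$. It therefore suffices to (i) exhibit a single weight assignment $\params^*$ at which a suitable minor of the matricization of the desired order is nonzero, certifying the lower bound at $\params^*$; and (ii) invoke the classical polynomial non-vanishing lemma to conclude that the set of $\params$ at which this minor vanishes is contained in a proper algebraic subvariety of weight-space, and hence has Lebesgue measure zero.

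\textbf{Main obstacle.} The main difficulty I anticipate is the construction of $\params^*$ in step (i): each $\weightmat{l}$ must be engineered so that, layer by layer, the Hadamard aggregations split into non-interacting coordinate blocks indexed by walks emanating from $\cut_\I$, and the rank of the resulting matricization can be tracked inductively through the $L$ layers to match the walk-counting bound up to constants in the exponent. A secondary subtlety, present in both directions, is that the same feature vector $\fvec{i}$ appears at many leaves of the unrolled trees, so the TN has shared external indices; this must be handled with care in the cut construction (upper bound) and in the algebraic identities used to build $\params^*$ (lower bound), though it should ultimately be absorbed into the constants hidden in the $O(\cdot)$. The extension to directed graphs promised by the authors should then follow from the same strategy, with $\cut_\I$ replaced by its directed analogue and walk counts adjusted accordingly.
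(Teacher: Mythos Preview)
Your proposal follows the same high-level strategy as the paper --- a tensor-network cut for the upper bound and a polynomial non-vanishing argument for the lower bound --- but two implementation points differ and are worth flagging. First, for the lower bound the paper does not work with the coefficient tensor; it uses \emph{grid tensors}, evaluating the network on a grid of template vectors $\vbf^{(1)},\ldots,\vbf^{(M)}$ and invoking the elementary inequality $\rank\mat{\gridtensor{f}}{\I}\le\seprank{f}{\I}$. Your claim that the separation rank \emph{equals} a matricization rank of the coefficient tensor would need justification, since Definition~\ref{def:sep_rank} allows non-polynomial summands $g^{(r)},\bar g^{(r)}$; the grid-tensor route sidesteps this. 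Second, the concrete construction of $\params^*$ is not a block-diagonal ``one coordinate per walk'' scheme: the paper sets $\weightmat{1}$ to a padded identity, $\weightmat{2}$ to an all-ones first row, and $\weightmat{l}$ ($l\ge 3$) to a single $1$ in the corner, and chooses template vectors from a matrix $\Zbf$ so that a sub-matrix of the grid matricization becomes (up to diagonal factors) the Hadamard power $\hadmp^{\nwalk{L-1}{\cut}{\cdot}}(\Zbf\Zbf^\top)$, whose rank is the multiset coefficient $\multisetcoeffnoflex{\mindim}{\nwalk{L-1}{\cut}{\cdot}}$. The walk count enters as the Hadamard exponent, not as a direct-sum dimension. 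The combinatorial notion of \emph{admissible subsets} $\cut\in\cutset(\I)$ (pairs $\I'\subseteq\I$, $\J'\subseteq\I^c$ with no repeating shared neighbors) is what makes this construction go through and is the source of the slack between the upper and lower bounds. Finally, a minor correction to your TN description: the external legs are not shared --- the paper \emph{duplicates} each $\fvec{i}$ once per length-$L$ walk from $i$, so the cut must separate all copies of $\I$-leaves from all copies of $\I^c$-leaves, and this duplication is precisely what makes the cut weight scale with the walk index.
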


The upper and lower bounds are formally established by~\cref{thm:sep_rank_upper_bound,thm:sep_rank_lower_bound} in~\cref{sec:analysis:formal}, respectively, and are generalized to input graphs with directed edges and multiple edge types in~\cref{app:extensions}.
\cref{thm:sep_rank_informal_bounds} implies that, the $(L - 1)$-walk index of $\I$ in graph prediction and its $(L - 1, t)$-walk index in vertex prediction control the separation rank with respect to $\I$, and are thus paramount for modeling interaction between $\I$ and~$\I^c$~---~see~\cref{fig:sep_rank_analysis} for an illustration.
It thereby formalizes the conventional wisdom by which GNNs can model stronger interaction between areas of the input graph that are more interconnected.
We support this finding empirically with common GNN architectures (\eg~GCN and GIN with ReLU non-linearity) in~\cref{sec:analysis:experiments}.

One may interpret~\cref{thm:sep_rank_informal_bounds} as encouraging addition of edges to an input graph.
Indeed, the theorem states that such addition can enhance the GNN's ability to model interactions between input vertices.  
This accords with existing evidence by which increasing connectivity can improve the performance of GNNs in practice (see,~\eg,~\citet{gasteiger2019diffusion,alon2021bottleneck}). 
However, special care needs to be taken when adding edges: it may distort the semantic meaning of the input graph, and may lead to plights known as over-smoothing and over-squashing~\citep{li2018deeper,oono2019graph,chen2020measuring,alon2021bottleneck,banerjee2022oversquashing}.
Rather than employing~\cref{thm:sep_rank_informal_bounds} for adding edges, we use it to select which edges to preserve in a setting where some must be removed.  
That is, we employ it for designing an edge sparsification algorithm.  
The algorithm, named \emph{Walk Index Sparsification} (\emph{WIS}), is simple, computationally efficient, and in our experiments has markedly outperformed alternative methods in terms of induced prediction accuracy.   
We present and evaluate it in~\cref{sec:sparsification}.

\vspace{-0.5mm}

\subsection{Formal Presentation}
\label{sec:analysis:formal}

We begin by upper bounding the separation ranks a GNN can achieve.

\begin{theorem}
	\label{thm:sep_rank_upper_bound}
	For an undirected graph $\graph$ and $t \in \vertices$, let $\funcgraph{\params}{\graph}$ and $\funcvert{\params}{\graph}{t}$ be the functions realized by depth $L$ graph and vertex prediction GNNs, respectively, with width~$\hdim$, learnable weights $\params$, and product aggregation (\cref{eq:gnn_update,eq:graph_pred_gnn,eq:vertex_pred_gnn,eq:prod_gnn_agg}).
	Then, for any $\I \subseteq \vertices$ and assignment of weights $\params$ it holds that:
	\begin{align}
	\text{(graph prediction)} \quad &\log \brk1{ \seprankbig{ \funcgraph{\params}{\graph} }{\I} }
	\leq
	\log \brk{ \hdim } \cdot \brk1{4 \underbrace{ \nwalk{L - 1}{\cut_\I}{ \vertices } }_{ \walkin{L -1}{\I} } + 1 } \text{\,,}
	\label{eq:sep_rank_upper_bound_graph_pred} \\[0.5em]
	\text{(vertex prediction)} \quad &\log \brk1{ \seprankbig{ \funcvert{\params}{\graph}{ t } }{\I} }
	\leq
	\log \brk{ \hdim } \cdot 4 \underbrace{ \nwalk{L - 1}{\cut_\I}{ \{ t \} } }_{ \walkinvert{L - 1}{t}{\I} } \text{\,.}
	\label{eq:sep_rank_upper_bound_vertex_pred}
	\end{align}
\end{theorem}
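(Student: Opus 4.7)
The plan is to leverage the representation of GNNs with product aggregation as a tensor network (formally established in the appendix referenced in \cref{sec:gnns}), combined with the classical cut-based upper bound on separation rank for tensor networks. Concretely, I would unfold the $L$-layer message-passing computation into a tensor network in which, for each layer $l \in \{0,1,\ldots,L\}$ and each vertex $i \in \vertices$, there is a tensor node whose bonds toward layer $l-1$ correspond to neighbors $j \in \neigh(i)$ (with the weight matrix $\weightmat{l}$ absorbed) and whose bonds toward layer $l+1$ correspond to vertices $i'$ having $i$ as a neighbor. Each such internal bond carries dimension $\hdim$. The input features $\fvec{1},\ldots,\fvec{|\vertices|}$ appear as open legs at layer $0$; for graph prediction a final Hadamard-aggregation/linear tensor contracts all layer-$L$ tensors with $\weightmat{o}$ into a scalar, while for vertex prediction only the layer-$L$ tensor at $t$ is contracted with $\weightmat{o}$.

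Next, I would invoke the standard fact that the separation rank of a function represented by a tensor network, with respect to a bipartition of its open legs, is at most the product of bond dimensions over any cut of the tensor network separating the two groups of legs. Taking logarithms, this reduces the claim to exhibiting a cut of size $4 \walkin{L-1}{\I} + 1$ (respectively, $4 \walkinvert{L-1}{t}{\I}$) that separates the layer-$0$ legs indexed by $\I$ from those indexed by $\I^c$.

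I would construct such a cut by assigning each internal tensor node $(l, i)$ to $\I$ or $\I^c$ according to a simple rule based on which side of the partition is reachable from $(l, i)$ in the unrolled computation graph via a layer-$0$ vertex. With this assignment, an internal bond between $(l, i)$ and $(l-1, j)$ is cut only when $i$ lies on a walk that originates in $\cut_\I$, since otherwise all layer-$0$ inputs feeding into $(l, i)$ and into $(l-1, j)$ lie on the same side of the partition. A combinatorial count then charges each length-$(L-1)$ walk emanating from $\cut_\I$ with at most a constant number (bounded by $4$) of crossing bonds, yielding $4\,\walkin{L-1}{\I}$ crossing bonds; the extra $+1$ in \cref{eq:sep_rank_upper_bound_graph_pred} comes from the single bond between the final readout tensor and $\weightmat{o}$, which is always cut in the graph-prediction setting regardless of $\I$. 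For vertex prediction the readout is contracted only with the layer-$L$ tensor of $t$, so walks are required to terminate at $t$, replacing $\walkin{L-1}{\I}$ with $\walkinvert{L-1}{t}{\I}$ and eliminating the readout contribution to the cut.

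The main obstacle will be the combinatorial bookkeeping in the last step: specifying the assignment rule for internal tensor nodes so that the induced cut is genuinely bounded by the walk index, and establishing that the constant $4$ suffices. A single walk from $\cut_\I$ can, in principle, induce crossings across several consecutive layers and through both endpoints of its initial boundary edge, so showing that these contributions can be charged with multiplicity at most $4$ per walk is where the delicate analysis lies. Handling the readout layer to yield precisely the $+1$ offset in the graph-prediction bound, while producing no offset in vertex prediction, will also require a brief separate argument.
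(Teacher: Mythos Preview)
Your high-level approach---tensor network representation plus a cut-based bound---is exactly what the paper does, but two concrete pieces of your plan will not go through as stated. First, the tensor network is not indexed by pairs $(l,i)$ with one node per layer and vertex: because $\hidvec{l}{i}$ feeds into \emph{every} neighbor at layer $l+1$, the computation must be unrolled into a \emph{tree} in which node $(l,i)$ appears $\nwalk{L-l}{\{i\}}{\vertices}$ times (for graph prediction) or $\nwalk{L-l}{\{i\}}{\{t\}}$ times (for vertex prediction). Each such copy is a $\delta$-tensor implementing the Hadamard product. This duplication is precisely where the walk counts enter. Crucially, the $\delta$-tensor structure gives a \emph{modified} multiplicative cut weight: all legs crossing the cut that are incident to the same $\delta$-tensor contribute only a single factor of $\hdim$, not one factor per leg. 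Without invoking this, the naive bond-counting you describe overcounts.

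Second, the constant $4$ does not arise from charging each length-$(L-1)$ walk with a bounded number of crossing bonds. The paper's cut simply assigns every tensor-network node associated with a vertex $i\in\I$ to one side and the rest to the other. The crossing $\delta$-tensors at layer $l$ are exactly the copies associated with vertices in $\cut_\I$, so the exponent is $\sum_{l=1}^{L}\nwalk{L-l}{\cut_\I}{\vertices}$ (plus $1$ for the output-layer $\delta$-tensor in graph prediction---not the $\weightmat{o}$ bond, which lies entirely on one side). The $4$ then comes from a geometric-series argument: in an undirected graph with self-loops, every length-$l$ walk from $\cut_\I$ extends to at least two distinct length-$(l+2)$ walks (via the self-loop twice, or out-and-back along a boundary edge), so $\nwalk{L-l}{\cut_\I}{\cdot}\le 2^{-\lfloor l/2\rfloor}\nwalk{L-1}{\cut_\I}{\cdot}$ and the sum over $l$ is at most $4\nwalk{L-1}{\cut_\I}{\cdot}$. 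Your per-walk charging scheme would need to reproduce this bound, and as described it does not---the ``delicate analysis'' you anticipate is replaced in the paper by this clean summation.
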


\begin{proof}[Proof sketch (proof in~\cref{app:proofs:sep_rank_upper_bound})]
In~\cref{app:prod_gnn_as_tn}, we show that the computations performed by a GNN with product aggregation can be represented as a \emph{tensor network}.
In brief, a tensor network is a weighted graph that describes a sequence of arithmetic operations known as tensor contractions (see~\cref{app:prod_gnn_as_tn:tensors,app:prod_gnn_as_tn:tensor_networks} for a self-contained introduction to tensor networks).
The tensor network corresponding to a GNN with product aggregation adheres to a tree structure~---~its leaves are associated with input vertex features and interior nodes embody the operations performed by the GNN.
Importing machinery from tensor analysis literature, we prove that $\sepranknoflex{ \funcgraph{\params}{\graph} }{\I}$ is upper bounded by a minimal cut weight in the corresponding tensor network, among cuts separating leaves associated with input vertices in~$\I$ from leaves associated with input vertices in~$\I^c$.
\cref{eq:sep_rank_upper_bound_graph_pred} then follows by finding such a cut in the tensor network with sufficiently low weight.
\cref{eq:sep_rank_upper_bound_vertex_pred} is established analogously.
\end{proof}

A natural question is whether the upper bounds in~\cref{thm:sep_rank_upper_bound} are tight, \ie~whether separation ranks close to them can be attained.
We show that nearly matching lower bounds hold for almost all assignments of weights $\params$.
To this end, we define \emph{admissible subsets of $\cut_\I$}, based on a notion of vertex subsets with \emph{no repeating shared neighbors}.
\begin{definition}
\label{def:no_rep_neighbors}
We say that $\I, \J \subseteq \vertices$ \emph{have no repeating shared neighbors} if every $k \in \neigh (\I) \cap \neigh (\J)$ has only a single neighbor in each of $\I$ and $\J$, \ie~$\abs{\neigh (k) \cap \I} = \abs{\neigh (k) \cap \J} = 1$.
\end{definition}

\begin{definition}
\label{def:admissible_subsets}
For $\I \subseteq \vertices$, we refer to $\cut \subseteq \cut_\I$ as an \emph{admissible subset of $\cut_\I$} if there exist $\I' \subseteq \I, \J' \subseteq \I^c$ with no repeating shared neighbors such that $\cut = \neigh (\I') \cap \neigh (\J')$.
We use $\cutset (\I)$ to denote the set comprising all admissible subsets of $\cut_\I$:
\[
\cutset (\I) := \brk[c]1{ \cut \subseteq \cut_\I : \cut \text{ is an admissible subset of $\cut_\I$} }
\text{\,.}
\]
\end{definition}

\cref{thm:sep_rank_lower_bound} below establishes that almost all possible values for the network's weights lead the upper bounds in~\cref{thm:sep_rank_upper_bound} to be tight, up to logarithmic terms and to the number of walks from $\cut_\I$ being replaced with the number of walks from any single $\cut \in \cutset (\I)$.
The extent to which $\cut_\I$ can be covered by an admissible subset thus determines how tight the upper bounds are.
Trivially, at least the shared neighbors of any $i \in \I, j \in \I^c$ can be covered, since $\neigh (i) \cap \neigh (j) \in \cutset (\I)$.
\cref{app:sep_rank_examples} shows that for various canonical graphs all of $\cut_\I$, or a large part of it, can be covered by an admissible subset.

\begin{theorem}
	\label{thm:sep_rank_lower_bound}
	Consider the setting and notation of~\cref{thm:sep_rank_upper_bound}.
	Given $\I \subseteq \vertices$, for almost all assignments of weights $\params$, \ie~for all but a set of Lebesgue measure zero, it holds that:
	\begin{align}
		\text{(graph prediction)} \quad &\log \brk1{ \seprankbig{ \funcgraph{\params}{\graph} }{\I} }
		\geq
		\max_{ \cut \in \cutset (\I) } \log \brk{ \alpha_{\cut} } \cdot \nwalk{L - 1}{\cut}{\vertices}
		\text{\,,}
		\label{eq:sep_rank_lower_bound_graph_pred} \\[0.5em]
		\text{(vertex prediction)} \quad &\log \brk1{ \seprankbig{ \funcvert{\params}{\graph}{ t } }{\I} }
		\geq
		\max_{ \cut \in \cutset (\I) } \log \brk{ \alpha_{\cut, t} } \cdot \nwalk{L - 1}{\cut}{ \{ t \} }
		\text{\,,}
		\label{eq:sep_rank_lower_bound_vertex_pred}
	\end{align}
	where:
	\[
	\alpha_{\cut} := \begin{cases}
		 \mindim^{1 / \nwalk{0}{\cut}{\vertices} } & , \text{if } L = 1 \\
 		 \brk{ \mindim - 1 } \cdot \nwalk{L - 1}{\cut}{\vertices}^{-1} + 1 & , \text{if } L \geq 2
	\end{cases}
	~~,~~
	\alpha_{\cut, t} := \begin{cases}
		\mindim & , \text{if } L = 1 \\
		\brk{ \mindim - 1 } \cdot \nwalk{L - 1}{\cut}{ \{ t \}}^{-1} + 1 & , \text{if } L \geq 2
	\end{cases}
	\text{\,,}
	\]
	with $\mindim := \min \brk[c]{ \indim, \hdim }$.
	If $\nwalk{L - 1}{\cut}{\vertices} = 0$ or $\nwalk{L - 1}{\cut}{ \{ t \}} = 0$, the respective lower bound (right hand side of~\cref{eq:sep_rank_lower_bound_graph_pred} or~\cref{eq:sep_rank_lower_bound_vertex_pred}) is zero by convention.
\end{theorem}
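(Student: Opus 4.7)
The plan is to follow the grid-tensor methodology standard in tensor-network analyses of neural networks, adapted to the tree tensor network representation of product-aggregation GNNs developed in \cref{app:prod_gnn_as_tn} for the proof of \cref{thm:sep_rank_upper_bound}. First, I would invoke a grid-tensor reduction: fix template vectors $\xbf^{(1)}, \ldots, \xbf^{(\mindim)} \in \R^{\indim}$ and evaluate $\funcgraph{\params}{\graph}$ (respectively $\funcvert{\params}{\graph}{t}$) on every tuple of templates assigned to the input vertices, obtaining a \emph{grid tensor} of order $\abs{\vertices}$. A standard lemma shows that $\sepranknoflex{\funcgraph{\params}{\graph}}{\I}$ is lower bounded by the matrix rank of this grid tensor matricized so that rows are indexed by template assignments to vertices in $\I$ and columns by template assignments to vertices in $\I^c$; analogously for vertex prediction.

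Next, I would reduce the ``almost all $\params$'' claim to an existence statement. Each entry of the matricized grid tensor is a polynomial in $\params$, so the locus in parameter space on which its rank falls below any fixed threshold $r$ is a real algebraic variety; it is therefore either all of parameter space or a proper subvariety of Lebesgue measure zero. Hence it suffices, for each admissible subset $\cut \in \cutset(\I)$, to exhibit a single weight assignment $\params^\star$ and a single choice of templates under which the matricization has rank at least $\alpha_\cut^{\nwalk{L-1}{\cut}{\vertices}}$ (respectively $\alpha_{\cut, t}^{\nwalk{L-1}{\cut}{\{t\}}}$); taking the maximum over $\cut \in \cutset(\I)$ then yields the stated bound.

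To build $\params^\star$, I would exploit admissibility: by \cref{def:admissible_subsets}, there exist $\I' \subseteq \I$ and $\J' \subseteq \I^c$ with no repeating shared neighbors such that $\cut = \neigh(\I') \cap \neigh(\J')$, so every $k \in \cut$ has a unique neighbor $i_k \in \I'$ and a unique neighbor $j_k \in \J'$. Taking the templates to be the first $\mindim$ standard basis vectors of $\R^{\indim}$, I would design $\weightmat{1}, \ldots, \weightmat{L}$ with a block-sparse structure such that (i) contributions from vertices outside the length-$(L-1)$ neighborhood of $\cut$ collapse to constants through the Hadamard product, (ii) each length-$(L-1)$ walk emanating from $\cut$ (and ending in $\vertices$ for graph prediction, or in $\{t\}$ for vertex prediction) is routed into a distinct group of coordinates of the $\hdim$-dimensional hidden space, and (iii) the two fan-ins at each $k \in \cut$, from $i_k$ on one side and $j_k$ on the other, remain linearly decoupled. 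The readout $\weightmat{o}$ is then chosen to preserve linear independence of distinct walk contributions at the output.

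With $\params^\star$ in hand, the tree structure of the tensor network causes the matricized grid tensor to factorize as a tensor product over walks emanating from $\cut$, with each walk contributing an independent factor to the rank. A careful count then produces the base $\alpha_\cut$ (respectively $\alpha_{\cut, t}$): the case $L = 1$ requires distributing the $\mindim$ available readout dimensions across the $\nwalk{0}{\cut}{\vertices} = \abs{\cut}$ length-zero walks for graph prediction (yielding $\mindim^{1 / \nwalk{0}{\cut}{\vertices}}$ per walk), while for vertex prediction a single walk absorbs all $\mindim$ dimensions; the case $L \geq 2$ requires distributing the $\mindim$ available hidden channels across the $\nwalk{L-1}{\cut}{\vertices}$ (respectively $\nwalk{L-1}{\cut}{\{t\}}$) walks, yielding the $(\mindim - 1) / \nwalk{L-1}{\cut}{\cdot} + 1$ factor per walk. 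The main obstacle is step three: ensuring that distinct walks map to linearly independent directions in the tensor network cut, despite weight matrices being shared across vertices and the self-loops forcing every vertex to aggregate from itself at each layer. The no-repeating-shared-neighbors condition is precisely what unlocks this construction, since otherwise a single $k \in \cut$ would aggregate features from $\I'$ and $\J'$ in multiple overlapping ways and produce algebraic dependencies that collapse the rank.
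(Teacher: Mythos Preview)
Your first two steps---the grid-tensor reduction and the measure-zero argument via polynomiality of the entries---are exactly what the paper does. The gap is entirely in step three, the explicit construction of $\params^\star$ and templates for $L \geq 2$.

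Your proposal to take standard-basis templates and ``route each length-$(L-1)$ walk emanating from $\cut$ into a distinct group of coordinates of the $\hdim$-dimensional hidden space'' cannot work as stated: the number of such walks $\nwalk{L-1}{\cut}{\vertices}$ is typically far larger than $\hdim$, so there is no room to give each walk its own coordinate block, and weight sharing across vertices (which you flag as the main obstacle) indeed blocks any per-walk routing. The subsequent claim that the matricized grid tensor ``factorizes as a tensor product over walks'' does not follow from the tree structure of the tensor network---that structure yields the \emph{upper} bound via min-cut, not a product decomposition of the grid tensor. Finally, your heuristic for the base $\alpha_\cut$ (``distributing $\mindim$ channels across $\nwalk{L-1}{\cut}{\vertices}$ walks'') is a post-hoc reading of the formula rather than the output of a construction.

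The paper's construction for $L \geq 2$ is quite different. It uses $M+1$ templates where $M = \multisetcoeffnoflex{\mindim}{\nwalk{L-1}{\cut^*}{\vertices}}$: the first $M$ are rows of a matrix $\Zbf \in \R_{>0}^{M \times \mindim}$ chosen so that the Hadamard power $\hadmp^{\nwalk{L-1}{\cut^*}{\vertices}}(\Zbf\Zbf^\top)$ has full rank $M$ (a separate lemma guarantees such $\Zbf$ exists), and the last template is the all-ones vector. The weights are \emph{not} block-sparse routers: $\weightmat{1}$ is a padded identity, $\weightmat{2}$ has all-ones first row and zeros elsewhere, and $\weightmat{l}$ for $l \geq 3$ keeps only the $(1,1)$ entry. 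One then restricts to the $M \times M$ sub-matrix of the matricized grid tensor obtained by assigning a common template $\vbf^{(m)}$ to every vertex in $\I'$, a common $\vbf^{(n)}$ to every vertex in $\J'$, and the all-ones vector elsewhere. An induction over layers (this is where the no-repeating-shared-neighbors condition is actually used) shows that the $(m,n)$ entry equals $\phi(\vbf^{(m)})\,\inprodnoflex{\zbf^{(m)}}{\zbf^{(n)}}^{\nwalk{L-1}{\cut^*}{\vertices}}\,\psi(\vbf^{(n)})$ for positive scalars $\phi,\psi$, i.e., the sub-matrix is a diagonal rescaling of $\hadmp^{\nwalk{L-1}{\cut^*}{\vertices}}(\Zbf\Zbf^\top)$. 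Its rank is therefore $M$, and the bound $M \geq \brk{(\mindim-1)/\nwalk{L-1}{\cut^*}{\vertices} + 1}^{\nwalk{L-1}{\cut^*}{\vertices}}$ on the multiset coefficient produces $\alpha_\cut$. So the base $\alpha_\cut$ arises from a combinatorial identity about Hadamard powers of Gram matrices, not from any per-walk channel allocation.
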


\begin{proof}[Proof sketch (proof in~\cref{app:proofs:sep_rank_lower_bound})]
Our proof follows a line similar to that used in~\citet{levine2020limits,wies2021transformer,levine2022inductive} for lower bounding the separation rank of self-attention neural networks.
The separation rank of any $f : ( \R^{\indim})^{\abs{\vertices}} \to \R$ can be lower bounded by examining its outputs over a grid of inputs.
Specifically, for $M \in \N$ \emph{template vectors} $\vbf^{(1)}, \ldots, \vbf^{(M)} \in \R^{\indim}$, we can create a \emph{grid tensor} for $f$ by evaluating it over each point in \smash{$\brk[c]{ \brk{ \vbf^{(d_1)} , \ldots, \vbf^{(d_{ \abs{ \vertices} } ) } } }_{d_1, \ldots, d_{ \abs{ \vertices }} = 1}^M$} and storing the outcomes in a tensor with $\abs{\vertices}$ axes of dimension $M$ each.
Arranging the grid tensor as a matrix $\gridmatnoflex{f}$ where rows correspond to axes indexed by $\I$ and columns correspond to the remaining axes, we show that $\rank ( \gridmatnoflex{f} ) \leq \seprank{f}{\I}$.
The proof proceeds by establishing that for almost every assignment of $\theta$, there exist template vectors with which $\log \brk{ \rank \brk{ \gridmatnoflex{ \funcgraph{\params}{\graph} } } }$ and $\log \brk{ \rank \brk{ \gridmatnoflex{ \funcvert{\params}{\graph}{t} } } }$ are greater than (or equal to) the right hand sides of~\cref{eq:sep_rank_lower_bound_graph_pred,eq:sep_rank_lower_bound_vertex_pred}, respectively.
\end{proof}

\paragraph*{Directed edges and multiple edge types}
\cref{app:extensions}~generalizes~\cref{thm:sep_rank_upper_bound,thm:sep_rank_lower_bound} to the case of graphs with directed edges and an arbitrary number of edge types.

\subsection{Empirical Demonstration}
\label{sec:analysis:experiments}

Our theoretical analysis establishes that, the strength of interaction GNNs can model across a partition of input vertices is primarily determined by the partition’s walk index~---~a graph-theoretical characteristic defined by the number of walks originating from the boundary of the partition (see~\cref{def:walk_index}).  
The analysis formally applies to GNNs with product aggregation (see~\cref{sec:gnns}), yet we empirically demonstrate that its conclusions carry over to various other message-passing GNN architectures, namely GCN~\citep{kipf2017semi}, GAT~\citep{velivckovic2018graph}, and GIN~\citep{xu2019powerful} (with ReLU non-linearity).
Specifically, through controlled experiments, we show that such models perform better on tasks in which the partitions that require strong interaction are ones with higher walk index, given that all other aspects of the tasks are the same.
A description of these experiments follows.
For brevity, we defer some implementation details to~\cref{app:experiments:details}.

We constructed two graph prediction datasets, in which the vertex features of each input graph are patches of pixels from two randomly sampled Fashion-MNIST~\citep{xiao2017fashion} images, and the goal is to predict whether the two images are of the same class.\footnote{
	Images are sampled such that the amount of positive and negative examples are roughly balanced.
}
In both datasets, all input graphs have the same structure: two separate cliques with~$16$ vertices each, connected by a single edge.
The datasets differ in how the image patches are distributed among the vertices: in the first dataset each clique holds all the patches of a single image, whereas in the second dataset each clique holds half of the patches from the first image and half of the patches from the second image.
\cref{fig:sep_rank_analysis} illustrates how image patches are distributed in the first (left hand side of the figure) and second (right hand side of the figure) datasets, with blue and red marking assignment of vertices to images.

\begin{table*}
	\caption{
		In accordance with our theory (\cref{sec:analysis:overview,sec:analysis:formal}), GNNs can better fit datasets in which the partitions (of input vertices) that require strong interaction are ones with higher walk index (\cref{def:walk_index}).
		Table reports means and standard deviations, taken over five runs, of train and test accuracies obtained by GNNs of depth~$3$ and width~$16$ on two datasets: one in which the essential partition~---~\ie~the main partition requiring strong interaction~---~has low walk index, and another in which it has high walk index (see~\cref{sec:analysis:experiments} for a detailed description of the datasets).
		For all GNNs, the train accuracy attained over the second dataset is considerably higher than that attained over the first dataset.
		Moreover, the better train accuracy translates to better test accuracy.
		See~\cref{app:experiments:details} for further implementation details.
	}
	\begin{center}
	\small
	\begin{tabular}{llcc}
		\toprule
		& & \multicolumn{2}{c}{Essential Partition Walk Index} \\
		\cmidrule(lr){3-4}
		& & Low & High \\
		\midrule
		\multirow{2}{*}{GCN} & Train Acc. (\%) & $70.4~\pm$ \scriptsize{$1.7$} & $\mathbf{81.4}~\pm$ \scriptsize{$2.0$} \\
		& Test Acc. ~~(\%) & $52.7~\pm$ \scriptsize{$1.9$} & $\mathbf{66.2}~\pm$ \scriptsize{$1.1$} \\
		\midrule
		\multirow{2}{*}{GAT} & Train Acc. (\%) & $82.8~\pm$ \scriptsize{$2.6$} & $\mathbf{88.5}~\pm$ \scriptsize{$1.1$} \\
		& Test Acc. ~~(\%) & $69.6~\pm$ \scriptsize{$0.6$} & $\mathbf{72.1}~\pm$ \scriptsize{$1.2$} \\
		\midrule
		\multirow{2}{*}{GIN} & Train Acc. (\%) & $83.2~\pm$ \scriptsize{$0.8$} & $\mathbf{94.2}~\pm$ \scriptsize{$0.8$} \\
		& Test Acc. ~~(\%) & $53.7~\pm$ \scriptsize{$1.8$} & $\mathbf{64.8}~\pm$ \scriptsize{$1.4$} \\
		\bottomrule
	\end{tabular}
\end{center}
\vspace{-1mm}
\label{tab:low_vs_high_walkindex}
\end{table*}

Each dataset requires modeling strong interaction across the partition separating the two images, referred to as the \emph{essential partition} of the dataset.
In the first dataset the essential partition separates the two cliques, thus it has low walk index.
In the second dataset each side of the essential partition contains half of the vertices from the first clique and half of the vertices from the second clique, thus the partition has high walk index.
For an example illustrating the gap between these walk indices see~\cref{fig:sep_rank_analysis}.

\cref{tab:low_vs_high_walkindex} reports the train and test accuracies achieved by GCN, GAT, and GIN (with ReLU non-linearity) over both datasets.
In compliance with our theory, the GNNs fit the dataset whose essential partition has high walk index significantly better than they fit the dataset whose essential partition has low walk index.
Furthermore, the improved train accuracy translates to improvements in test accuracy.

	% APPLICATION: EXPRESSIVITY PRESERVING GRAPH SPARSIFICATION
	\section{Practical Application: Expressivity Preserving Edge Sparsification}
\label{sec:sparsification}

\cref{sec:analysis} theoretically characterizes the ability of a GNN to model interactions between input vertices. 
It reveals that this ability is controlled by a graph-theoretical property we call walk index (\cref{def:walk_index}). 
The current section derives a practical application of our theory, specifically, an \emph{edge sparsification} algorithm named \emph{Walk Index Sparsification} (\emph{WIS}), which preserves the ability of a GNN to model interactions when input edges are removed.  
We present WIS, and show that it yields high predictive performance for GNNs over standard vertex prediction benchmarks of various scales, even when removing a significant portion of edges.
In particular, we evaluate WIS using GCN~\citep{kipf2017semi}, GIN~\citep{xu2019powerful}, and ResGCN~\citep{li2020deepergcn} over multiple datasets, including: Cora~\citep{sen2008collective}, which contains thousands of edges, DBLP~\citep{bojchevski2018deep}, which contains tens of thousands of edges, and OGBN-ArXiv~\citep{hu2020open}, which contains more than a million edges.
WIS is simple, computationally efficient, and in our experiments has markedly outperformed alternative methods in terms of induced prediction accuracy across edge sparsity levels.  
We believe its further empirical investigation is a promising direction for future research.

\vspace{-1mm}
\subsection{Walk Index Sparsification (WIS)}
\label{sec:sparsification:wis}

Running GNNs over large-scale graphs can be prohibitively expensive in terms of runtime and memory.
A natural way to tackle this problem is edge sparsification~---~removing edges from an input graph while attempting to maintain prediction accuracy (\cf~\citet{li2020sgcn,chen2021unified}).\footnote{
	As opposed to \emph{edge rewiring} methods that add or remove only a few edges with the goal of improving prediction accuracy~(\eg,~\citet{zheng2020robust,luo2021learning,topping2022understanding,banerjee2022oversquashing}).
}\textsuperscript{,}\footnote{
	An alternative approach is to remove vertices from an input graph (see,~\eg,~\citet{leskovec2006sampling}).  
	However, this approach is unsuitable for vertex prediction tasks, so we limit our attention to edge sparsification.
}

Our theory (\cref{sec:analysis}) establishes that, the strength of interaction a depth~$L$ GNN can model across a partition of input vertices is determined by the partition's walk index, a quantity defined by the number of length $L - 1$ walks originating from the partition's boundary.
This brings forth a recipe for pruning edges.
First, choose partitions across which the ability to model interactions is to be preserved.
Then, for every input edge (excluding self-loops), compute a tuple holding what the walk indices of the chosen partitions will be if the edge is to be removed.
Lastly, remove the edge whose tuple is maximal according to a preselected order over tuples (\eg~an order based on the sum, min, or max of a tuple's entries).
This process repeats until the desired number of edges are removed.
The idea behind the above-described recipe, which we call \emph{General Walk Index Sparsification}, is that each iteration greedily prunes the edge whose removal takes the smallest toll in terms of ability to model interactions across chosen partitions~---~see~\cref{alg:walk_index_sparsification_general} in~\cref{app:walk_index_sparsification_general} for a formal outline.
Below we describe a specific instantiation of the recipe for vertex prediction tasks, which are particularly relevant with large-scale graphs, yielding our proposed algorithm~---~Walk Index Sparsification (WIS). 

In vertex prediction tasks, the interaction between an input vertex and the remainder of the input graph is of central importance.
Thus, it is natural to choose the partitions induced by singletons (\ie~the partitions $( \{t\}, \vertices \setminus \{t\})$, where $t \in \vertices$) as those across which the ability to model interactions is to be preserved.
We would like to remove edges while avoiding a significant deterioration in the ability to model interaction under any of the chosen partitions.  
To that end, we compare walk index tuples according to their minimal entries, breaking ties using the second smallest entries, and so forth.
This is equivalent to sorting (in ascending order) the entries of each tuple separately, and then ordering the tuples lexicographically.

\cref{alg:walk_index_sparsification_vertex} provides a self-contained description of the method attained by the foregoing choices.  
We refer to this method as $(L-1)$-Walk Index Sparsification (WIS), where the “$(L-1)$'' indicates that only walks of length $L-1$ take part in the walk indices.
Since $(L - 1)$-walk indices can be computed by taking the $(L - 1)$'th power of the graph's adjacency matrix, $(L - 1)$-WIS runs in $\OO ( N \abs{\edges} \abs{\vertices}^3 \log (L) )$ time and requires $\OO ( \abs{\edges} \abs{\vertices} + \abs{\vertices}^2 )$ memory, where $N$ is the number of edges to be removed.
For large graphs a runtime cubic in the number of vertices can be restrictive. 
Fortunately, $1$-WIS, which can be viewed as an approximation for $(L - 1)$-WIS with $L > 2$, facilitates a particularly simple and efficient implementation based solely on vertex degrees, requiring only linear time and memory~---~see~\cref{alg:walk_index_sparsification_vertex_one} (whose equivalence to $1$-WIS is explained in~\cref{app:one_wis_efficient}).
Specifically, $1$-WIS runs in $\OO (N \abs{\edges} + \abs{\vertices})$ time and requires $\OO (\abs{\edges} + \abs{\vertices})$ memory.

\begin{algorithm}[t!]
	\caption{$(L - 1)$-Walk Index Sparsification (WIS) ~ (instance of a general scheme described in~\cref{app:walk_index_sparsification_general})} 
	\label{alg:walk_index_sparsification_vertex}	
	\begin{algorithmic}
		\STATE \!\!\!\!\textbf{Input:} $\graph$~---~graph , $L \in \N$~---~GNN depth , $N \in \N$~---~number of edges to remove \\[0.2em]
		\STATE \!\!\!\!\textbf{Result:} Sparsified graph obtained by removing $N$ edges from $\graph$ \\[0.4em]
		\hrule
		\vspace{2mm}
		\FOR{$n = 1, \ldots, N$}
		\vspace{0.5mm}
		\STATE \darkgray{\# for every edge, compute walk indices of partitions induced by $\{t\}$, for $t \in \vertices$, after the edge's removal}
		\vspace{0.5mm}
		\FOR{$e \in \edges$ (excluding self-loops)}
		\vspace{0.5mm}
		\STATE initialize $\sbf^{(e)} = \brk{0, \ldots, 0} \in \R^{\abs{\vertices}}$
		\vspace{0.5mm}
		\STATE remove $e$ from $\graph$ (temporarily)
		\vspace{0.5mm}
		\STATE for every $t \in \vertices$, set $\sbf^{(e)}_{t} = \walkinvert{L - 1}{t}{ \{t\} }$~~\darkgray{\# $=$ number of length $L - 1$ walks from $\cut_{\{t\}}$ to $t$}
		\vspace{0.5mm}
		\STATE add $e$ back to $\graph$
		\vspace{0.5mm}
		\ENDFOR
		\vspace{0.5mm}
		\STATE \darkgray{\# prune edge whose removal harms walk indices the least according to an order over $\brk{ \sbf^{(e)} }_{e \in \edges}$}
		\vspace{0.5mm}
		\STATE for $e \in \edges$, sort the entries of $\sbf^{(e)}$ in ascending order
		\vspace{0.5mm}
		\STATE let $e' \in \argmax_{e \in \edges} \sbf^{(e)}$ according to lexicographic order over tuples
		\vspace{0.5mm}
		\STATE \textbf{remove} $e'$ from $\graph$ (permanently)
		\vspace{0.5mm}
		\ENDFOR
	\end{algorithmic}
\end{algorithm}

\begin{algorithm}[t!]
	\caption{$1$-Walk Index Sparsification (WIS) ~ (efficient implementation of \cref{alg:walk_index_sparsification_vertex} for $L = 2$)} 
	\label{alg:walk_index_sparsification_vertex_one}	
	\begin{algorithmic}
		\STATE \!\!\!\!\textbf{Input:} $\graph$~---~graph , $N \in \N$~---~number of edges to remove \\[0.2em]
		\STATE \!\!\!\!\textbf{Result:} Sparsified graph obtained by removing $N$ edges from $\graph$ \\[0.4em]
		\hrule
		\vspace{2mm}
		\STATE compute vertex degrees, \ie~$\text{deg} (i) = \abs{\neigh (i)}$ for $i \in \vertices$
		\vspace{0.5mm}
		\FOR{$n = 1, \ldots, N$}
		\vspace{0.5mm}
		\FOR{$\{i, j\} \in \edges$ (excluding self-loops)}
		\vspace{0.5mm}
		\STATE  let $\text{deg}_{min} (i, j) := \min \brk[c]{ \text{deg} (i) , \text{deg} (j) }$ 
		\vspace{0.5mm}
		\STATE let $\text{deg}_{max} (i, j) := \max \brk[c]{ \text{deg} (i) , \text{deg} (j) }$
		\vspace{0.5mm}		
		\ENDFOR
		\vspace{0.5mm}
		\STATE \darkgray{\# prune edge $\{i, j\} \in \edges$ with maximal $\text{deg}_{min} (i, j)$, breaking ties using $\text{deg}_{max} (i, j)$}
		\vspace{0.5mm}
		\STATE let $e' \in \argmax_{\brk[c]{i, j} \in \edges} \brk1{ \text{deg}_{min} (i, j) , \text{deg}_{max} (i, j) }$ according to lexicographic order over pairs
		\vspace{0.5mm}
		\STATE \textbf{remove} $e'$ from $\graph$ \\
		\vspace{0.5mm}
		\STATE decrease by one the degree of vertices connected by $e'$
		\vspace{0.5mm}		
		\ENDFOR
	\end{algorithmic}
\end{algorithm}

\subsection{Empirical Evaluation}
\label{sec:sparsification:eval}

\begin{figure*}[t!]
	\vspace{0mm}
	\begin{center}
		\hspace{-2mm}
		\includegraphics[width=1\textwidth]{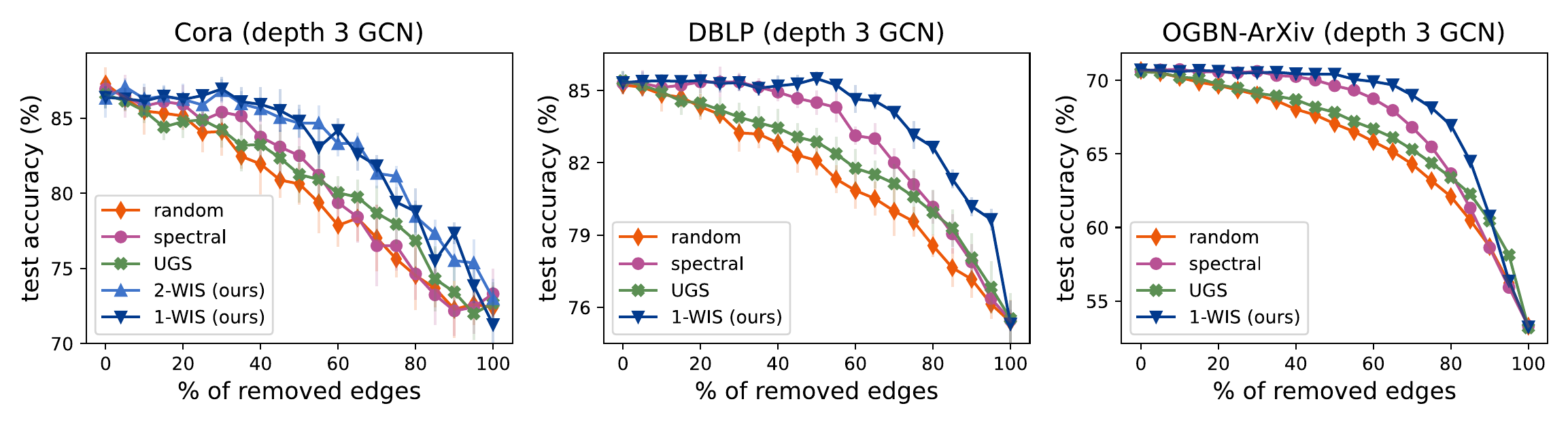}
	\end{center}
	\vspace{-4mm}
	\caption{
		Comparison of GNN accuracies following sparsification of input edges~---~WIS, the edge sparsification algorithm brought forth by our theory (\cref{alg:walk_index_sparsification_vertex}), markedly outperforms alternative methods.
		Plots present test accuracies achieved by a depth $L = 3$ GCN of width~$64$ over the Cora (left), DBLP (middle), and OGBN-ArXiv (right) vertex prediction datasets, with increasing percentage of removed edges (for each combination of dataset, edge sparsification algorithm, and percentage of removed edges, a separate GCN was trained and evaluated).
		WIS, designed to maintain the ability of a GNN to model interactions between input vertices, is compared against: \emph{(i)} removing edges uniformly at random; \emph{(ii)} a spectral sparsification method~\citep{spielman2011graph}; and \emph{(iii)}~an adaptation of UGS~\citep{chen2021unified}.
		For Cora, we run both $2$-WIS, which is compatible with the GNN's depth, and $1$-WIS, which can be viewed as an approximation that admits a particularly efficient implementation (\cref{alg:walk_index_sparsification_vertex_one}).
		For DBLP and OGBN-ArXiv, due to their larger scale only $1$-WIS is evaluated.
		Markers and error bars report means and standard deviations, respectively, taken over ten runs per configuration.
%		Note that (as opposed to UGS), WIS avoids the need for labels, and for training over the original (non-sparsified) graph~---~a procedure which in some settings is prohibitively expensive.
		Note that $1$-WIS achieves results similar to $2$-WIS, suggesting that the efficiency it brings does not come at a significant cost in performance.
		\cref{app:experiments} provides further implementation details and experiments with additional GNN architectures (GIN and ResGCN) and datasets (Chameleon, Squirrel, and Amazon Computers).
		\ifdefined\CAMREADY
		Code for reproducing the experiment is available at \url{https://github.com/noamrazin/gnn_interactions}.
		\fi
	}
	\label{fig:sparse_gcn}
\end{figure*}

Below is an empirical evaluation of WIS.
For brevity, we defer to~\cref{app:experiments} some implementation details, as well as experiments with additional GNN architectures (GIN and ResGCN) and datasets (Chameleon~\citep{pei2020geom}, Squirrel~\citep{pei2020geom}, and Amazon Computers~\citep{shchur2018pitfalls}).
Overall, our evaluation includes six standard vertex prediction datasets in which we observed the graph structure to be crucial for accurate prediction, as measured by the difference between the test accuracy of a GCN trained and evaluated over the original graph and its test accuracy when trained and evaluated over the graph after all of the graph's edges were removed.
We also considered, but excluded, the following datasets in which the accuracy difference was insignificant (less than five percentage points): Citeseer~\citep{sen2008collective}, PubMed~\citep{namata2012query}, Coauthor CS and Physics~\citep{shchur2018pitfalls}, and Amazon Photo~\citep{shchur2018pitfalls}.

Using depth $L = 3$ GNNs (with ReLU non-linearity), we evaluate over the Cora dataset both $2$-WIS, which is compatible with the GNNs' depth, and $1$-WIS, which can be viewed as an efficient approximation.
Over the DBLP and OGBN-ArXiv datasets, due to their larger scale only $1$-WIS is evaluated.
\cref{fig:sparse_gcn} (and~\cref{fig:sparse_gin} in~\cref{app:experiments}) shows that WIS significantly outperforms the following alternative methods in terms of induced prediction accuracy: \emph{(i)} a baseline in which edges are removed uniformly at random; \emph{(ii)} a well-known spectral algorithm~\citep{spielman2011graph} designed to preserve the spectrum of the sparsified graph's Laplacian; and \emph{(iii)} an adaptation of UGS~\citep{chen2021unified}~---~a recent supervised approach for learning to prune edges.\footnote{
	UGS~\citep{chen2021unified} jointly prunes input graph edges and GNN weights.
	For fair comparison, we adapt it to only remove edges.
}
Both $2$-WIS and $1$-WIS lead to higher test accuracies, while (as opposed to UGS) avoiding the need for labels, and for training a GNN over the original (non-sparsified) graph~---~a procedure which in some settings is prohibitively expensive in terms of runtime and memory.
Interestingly, $1$-WIS performs similarly to $2$-WIS, indicating that the efficiency it brings does not come at a sizable cost in performance.

	% CONCLUSION
	\section{Conclusion}
\label{sec:conclusion}

\subsection{Summary}

GNNs are designed to model complex interactions between entities represented as vertices of a graph.
The current paper provides the first theoretical analysis for their ability to do so.
We proved that, given a partition of input vertices, the strength of interaction that can be modeled between its sides is controlled by the \emph{walk index}~---~a graph-theoretical characteristic defined by the number of walks originating from the boundary of the partition.
Experiments with common GNN architectures, \eg~GCN~\citep{kipf2017semi} and GIN~\citep{xu2019powerful}, corroborated this result.

Our theory formalizes conventional wisdom by which GNNs can model stronger interaction between regions of the input graph that are more interconnected.
More importantly, we showed that it facilitates a novel edge sparsification algorithm which preserves the ability of a GNN to model interactions when edges are removed.
Our algorithm, named \emph{Walk Index Sparsification} (\emph{WIS}), is simple, computationally efficient, and in our experiments has markedly outperformed alternative methods in terms of induced prediction accuracy.
More broadly, WIS showcases the potential of improving GNNs by theoretically analyzing the interactions they can model.

\subsection{Limitations and Future Work} 

The theoretical analysis considers GNNs with product aggregation, which are less common in practice (\cf~\cref{sec:gnns}).
We empirically demonstrated that its conclusions apply to more popular GNNs (\cref{sec:analysis:experiments}), and derived a practical edge sparsification algorithm based on the theory (\cref{sec:sparsification}).
Nonetheless, extending our analysis to additional aggregation functions is a worthy avenue to explore.

Our work also raises several interesting directions concerning WIS.
A naive implementation of $(L - 1)$-WIS has runtime cubic in the number of vertices (\cf~\cref{sec:sparsification:wis}).
Since this can be restrictive for large-scale graphs, the evaluation in~\cref{sec:sparsification:eval} mostly focused on $1$-WIS, which can be viewed as an efficient approximation of $(L - 1)$-WIS (its runtime and memory requirements are linear~---~see~\cref{sec:sparsification:wis}).
Future work can develop efficient exact implementations of $(L - 1)$-WIS (\eg~using parallelization) and investigate regimes where it outperforms $1$-WIS in terms of induced prediction accuracy.
Additionally, $(L - 1)$-WIS is a specific instantiation of the general WIS scheme (given in~\cref{app:walk_index_sparsification_general}), tailored for preserving the ability to model interactions across certain partitions. 
Exploring other instantiations, as well as methods for automatically choosing the partitions across which the ability to model interactions is preserved, are valuable directions for further research.

	% ACKNOWLEDGMENTS
	\ifdefined\NEURIPS
		\begin{ack}
			We thank Eshbal Hezroni for aid in preparing illustrative figures.
This work was supported by a Google Research Scholar Award, a Google Research Gift, the Yandex Initiative in Machine Learning, the Israel Science Foundation (grant 1780/21), the Tel Aviv University Center for AI and Data Science, the Adelis Research Fund for Artificial Intelligence, Len Blavatnik and the Blavatnik Family Foundation, and Amnon and Anat Shashua.
NR is supported by the Apple Scholars in AI/ML PhD fellowship.
		\end{ack}
	\else
		\newcommand{\ack}{}
	\fi
	\ifdefined\ARXIV
		\section*{Acknowledgements}
		\ack
	\else
		\ifdefined\COLT
			\acks{\ack}
		\else
			\ifdefined\CAMREADY
				\ifdefined\ICLR
					\subsubsection*{Acknowledgments}
					\ack
				\else
					\ifdefined\NEURIPS
						% DO NOTHING - HANDLED EARLIER
					\else
						\section*{Acknowledgements}
					\ack
					\fi
				\fi
			\fi
		\fi
	\fi

	% REFERENCES
	\section*{References}
	{\small
		\ifdefined\ICML
			\bibliographystyle{icml2022}
		\else
			\bibliographystyle{plainnat}
		\fi
		\bibliography{refs}
	}

	% APPENDIXES
	\clearpage
	\appendix
	
% Uncomment below for ICML appendix with two columns
	\onecolumn
	
	% ENDNOTES
	\ifdefined\ENABLEENDNOTES
		\theendnotes
	\fi

	% TABLE OF CONTENTS FOR APPENDIX
%	{
%		\hypersetup{hidelinks}
%		\renewcommand{\contentsname}{Contents of Appendix}
%		\tableofcontents 
%		\addtocontents{toc}{\protect\setcounter{tocdepth}{3}} 
%	}
	
	% Add Appendix sections here

	% RELATED WORK
	\section{Related Work} 
\label{app:related}

\paragraph*{Expressivity of GNNs}
The expressivity of GNNs has been predominantly evaluated through ability to distinguish non-isomorphic graphs, as measured by correspondence to Weisfeiler-Leman (WL) graph isomorphism tests (see~\citet{morris2021weisfeiler} for a recent survey).
\citet{xu2019powerful,morris2019weisfeiler} instigated this thread of research, establishing that message-passing GNNs are at most as powerful as the WL algorithm, and can match it under certain technical conditions.
Subsequently, architectures surpassing WL were proposed, with expressivity measured via higher-order WL variants (see, \eg,~\citet{morris2019weisfeiler,maron2019provably,chen2019equivalence,geerts2020expressive,balcilar2021breaking,bodnar2021weisfeiler,barcelo2021graph,geerts2022expressiveness,bouritsas2022improving,papp2022theoretical}).
Another line of inquiry regards universality among continuous permutation invariant or equivariant functions~\citep{maron2019universality,keriven2019universal,loukas2020graph,azizian2021expressive,geerts2022expressiveness}.
\citet{chen2019equivalence} showed that distinguishing non-isomorphic graphs and universality are, in some sense, equivalent.
Lastly, there exist analyses of expressivity focused on the frequency response of GNNs~\citep{nt2019revisiting,balcilar2021analyzing} and their capacity to compute specific graph functions, \eg~moments, shortest paths, and substructure counting~\citep{dehmamy2019understanding,barcelo2020logical,garg2020generalization,loukas2020graph,chen2020can,chen2021graph,bouritsas2022improving}. 

Although a primary purpose of GNNs is to model interactions between vertices, none of the past works formally characterize their ability to do so, as our theory does.\footnote{
	In \citet{chatzianastasis2023graph}, the mutual information between the embedding of a vertex and the embeddings of its neighbors was proposed as a measure of interaction.
	However, this measure is inherently local and allows reasoning only about the impact of neighboring nodes on each other in a GNN layer. 
	In contrast, separation rank formulates the strength of interaction the whole GNN models across any partition of an input graph’s vertices.
}
The current work thus provides a novel perspective on the expressive power of GNNs.
Furthermore, a major limitation of existing approaches~---~in particular, proofs of equivalence to WL tests and universality~---~is that they often operate in asymptotic regimes of unbounded network width or depth.
Consequently, they fall short of addressing which type of functions can be realized by GNNs of practical size.
In contrast, we characterize how the modeled interactions depend on both the input graph structure and the neural network architecture (width and depth).
As shown in~\cref{sec:sparsification}, this facilitates designing an efficient and effective edge sparsification algorithm.

\paragraph*{Measuring modeled interactions via separation rank}
Separation rank (\cref{sec:prelim:sep_rank}) has been paramount to the study of interactions modeled by certain convolutional, recurrent, and self-attention neural networks.
It enabled theoretically analyzing how different architectural parameters impact expressivity~\citep{cohen2016expressive,cohen2016convolutional,cohen2017inductive,cohen2018boosting,balda2018tensor,sharir2018expressive,levine2018deep,levine2018benefits,khrulkov2018expressive,khrulkov2019generalized,levine2020limits,wies2021transformer,levine2022inductive} and implicit regularization~\citep{razin2020implicit,razin2021implicit,razin2022implicit}.\footnote{
	We note that, over a two-dimensional grid graph, a message-passing GNN can be viewed as a convolutional neural network with overlapping convolutional windows.
	Similarly, over a chain graph, it can be viewed as a bidirectional recurrent neural network.
	Thus, for these special cases, our separation rank bounds (delivered in~\cref{sec:analysis}) extend those of~\cite{cohen2017inductive,levine2018deep,khrulkov2018expressive,levine2018benefits}, which consider convolutional neural networks with non-overlapping convolutional windows and unidirectional recurrent neural networks.
}
On the practical side, insights brought forth by separation rank led to tools for improving performance, including: guidelines for architecture design~\citep{cohen2017inductive,levine2018deep,levine2020limits,wies2021transformer}, pretraining schemes~\citep{levine2022inductive}, and regularizers for countering locality in convolutional neural networks~\citep{razin2022implicit}.
We employ separation rank for studying the interactions GNNs model between vertices, and similarly provide both theoretical insights and a practical application~---~edge sparsification algorithm (\cref{sec:sparsification}).

\paragraph*{Edge sparsification}
Computations over large-scale graphs can be prohibitively expensive in terms of runtime and memory.
As a result, various methods were proposed for sparsifying graphs by removing edges while attempting to maintain structural properties, such as distances between vertices~\citep{baswana2007simple,hamann2016structure}, graph Laplacian spectrum~\citep{spielman2011graph,sadhanala2016graph}, and vertex degree distribution~\citep{voudigari2016rank}, or outcomes of graph analysis and clustering algorithms~\citep{satuluri2011local,chakeri2016spectral}.
Most relevant to our work are recent edge sparsification methods aiming to preserve the prediction accuracy of GNNs as the number of removed edges increases~\citep{li2020sgcn,chen2021unified}.
These methods require training a GNN over the original (non-sparsified) graph, hence only inference costs are reduced.
Guided by our theory, in~\cref{sec:sparsification} we propose \emph{Walk Index Sparsification} (\emph{WIS})~---~an edge sparsification algorithm that preserves expressive power in terms of ability to model interactions.
WIS improves efficiency for both training and inference.
Moreover, comparisons with the spectral algorithm of~\citet{spielman2011graph} and a recent method from~\citet{chen2021unified} demonstrate that WIS brings about higher prediction accuracies across edge sparsity levels.

	% EXAMPLES OF SEPARATION RANK BOUNDS
	\section{Tightness of Upper Bounds for Separation Rank}
\label{app:sep_rank_examples}

\cref{thm:sep_rank_upper_bound} upper bounds the separation rank with respect to $\I \subseteq \vertices$ of a depth $L$ GNN with product aggregation.
According to it, under the setting of graph prediction, the separation rank is largely capped by the $(L-1)$-walk index of $\I$, \ie~the number of length $L - 1$ walks from $\cut_\I$~---~the set of vertices with an edge crossing the partition $(\I, \I^c)$.
Similarly, for prediction over $t \in \vertices$, separation rank is largely capped by the $(L - 1, t)$-walk index of $\I$, which takes into account only length $L - 1$ walks from $\cut_\I$ ending at~$t$.
\cref{thm:sep_rank_lower_bound} provides matching lower bounds, up to logarithmic terms and to the number of walks from $\cut_\I$ being replaced with the number of walks from any single admissible subset $\cut \in \cutset (\I)$ (\cref{def:admissible_subsets}).
Hence, the match between the upper and lower bounds is determined by the portion of $\cut_\I$ that can be covered by an admissible subset.

In this appendix, to shed light on the tightness of the upper bounds, we present several concrete examples on which a significant portion of $\cut_\I$ can be covered by an admissible subset.

\paragraph*{Complete graph}
Suppose that every two vertices are connected by an edge, \ie~$\edges = \brk[c]{ \{i, j\} : i,j \in \vertices }$.
For any non-empty $\I \subsetneq \vertices$, clearly $\cut_\I = \neigh (\I) \cap \neigh (\I^c) = \vertices$ .
In this case, $\cut_\I = \vertices \in \cutset (\I)$, meaning $\cut_\I$ is an admissible subset of itself.
To see it is so, notice that for any $i \in \I, j \in \I^c$, all vertices are neighbors of both $\I' := \{ i \}$ and $\J' := \{ j\}$, which trivially have no repeating shared neighbors (\cref{def:no_rep_neighbors}).
Thus, up to a logarithmic factor, the upper and lower bounds from~\cref{thm:sep_rank_upper_bound,thm:sep_rank_lower_bound} coincide.

\paragraph*{Chain graph}
Suppose that $\edges = \brk[c]{ \{ i, i + 1\} : i \in [\abs{\vertices} - 1] } \cup \{ \{ i, i \} : i \in \vertices \}$.
For any non-empty $\I \subsetneq \vertices$, at least half of the vertices in $\cut_\I$ can be covered by an admissible subset.
That is, there exists $\cut \in \cutset (\I)$ satisfying $\abs{ \cut } \geq 2^{-1} \cdot \abs{ \cut_\I}$.
For example, such $\cut$ can be constructed algorithmically as follows.
Let $\I', \J' = \emptyset$.
Starting from $k = 1$, if $\{k, k + 1\} \subseteq \cut_\I$ and one of $\{ k , k+ 1\}$ is in $\I$ while the other is in $\I^c$, then assign $\I' \leftarrow \I' \cup \brk{ \{k, k + 1\} \cap \I }$, $\J' \leftarrow \J' \cup \brk{ \{k, k + 1\} \cap \I^c }$, and $k \leftarrow k + 3$.
That is, add each of $\{ k, k + 1\}$ to either $\I'$ if it is in $\I$ or $\J'$ if it is in $\I^c$, and skip vertex $k + 2$.
Otherwise, set $k \leftarrow k + 1$.
The process terminates once $k > \abs{\vertices} - 1$.
By construction, $\I' \subseteq \I$ and $\J' \subseteq \I^c$, implying that $\neigh (\I') \cap \neigh (\J') \subseteq \cut_\I$.
Due to the chain graph structure, $\I' \cup \J' \subseteq \neigh (\I') \cap \neigh (\J')$ and $\I'$ and $\J'$ have no repeating shared neighbors (\cref{def:no_rep_neighbors}).
Furthermore, for every pair of vertices from $\cut_\I$ added to $\I'$ and $\J'$, we can miss at most two other vertices from $\cut_\I$.
Thus, $\cut := \neigh (\I') \cap \neigh (\J')$ is an admissible subset of $\cut_\I$ satisfying $\abs{\cut} \geq 2^{-1} \cdot \abs{\cut_\I}$.

\paragraph*{General graph}
For an arbitrary graph and non-empty $\I \subsetneq \vertices$, an admissible subset of $\cut_\I$ can be obtained by taking any sequence of pairs $(i_1, j_1), \ldots, (i_M, j_M) \in \I \times \I^c$ with no shared neighbors, in the sense that $\brk[s]{ \neigh (i_m) \cup \neigh (j_m) } \cap \brk[s] { \neigh (i_{m'}) \cup \neigh (j_{m'}) } = \emptyset$ for all $m \neq m' \in [M]$.
Defining $\I' := \brk[c]{ i_1, \ldots, i_M }$ and $\J' := \brk[c]{ j_1, \ldots, j_M}$, by construction they do not have repeating shared neighbors (\cref{def:no_rep_neighbors}), and so $\neigh (\I') \cap \neigh (\J') \in \cutset (\I)$.
In particular, the shared neighbors of each pair are covered by $\neigh (\I') \cap \neigh (\J')$, \ie~$\cup_{m = 1}^M \neigh (i_m) \cap \neigh (j_m) \subseteq \neigh (\I') \cap \neigh (\J')$.

	% EXTENSIONS OF ANALYSIS
	\section{Extension of Analysis to Directed Graphs With Multiple Edge Types}
\label{app:extensions}

In this appendix, we generalize the separation rank bounds from~\cref{thm:sep_rank_upper_bound,thm:sep_rank_lower_bound} to directed graphs with multiple edge types.

Let $\graph = (\vertices, \edges, \edgetypemap)$ be a directed graph with vertices $\vertices = [\abs{\vertices}]$, edges $\edges \subseteq \{ (i, j) : i, j \in \vertices\}$, and a map $\edgetypemap : \edges \to [Q]$ from edges to one of $Q \in \N$ edge types.
For $i \in \vertices$, let $\neighin (i) := \brk[c]{ j \in \vertices : (j, i) \in \edges }$ be its \emph{incoming neighbors} and $\neighout (i) := \brk[c]{ j \in \vertices : (i, j) \in \edges }$ be its \emph{outgoing neighbors}.
For $\I \subseteq \vertices$, we denote $\neighin (\I) := \cup_{i \in \I} \neighin (i)$ and $\neighout (\I) := \cup_{i \in \I} \neighout (i)$.
As customary in the context of GNNs, we assume the existence of all self-loops (\cf~\cref{sec:prelim:notation}).

Message-passing GNNs (\cref{sec:gnns}) operate identically over directed and undirected graphs, except that in directed graphs the hidden embedding of a vertex is updated only according to its incoming neighbors.
For handling multiple edge types, common practice is to use different weight matrices per type in the GNN's update rule (\cf~\citet{hamilton2017inductive,schlichtkrull2018modeling}).
Hence, we consider the following update rule for directed graphs with multiple edge types, replacing that from~\cref{eq:gnn_update}:
\be
\hidvec{l}{i} = \agg \brk2{ \multisetbig{ \weightmat{l, \edgetypemap ( j , i ) } \hidvec{l - 1}{j} : j \in \neighin (i) } }
\text{\,,}
\label{eq:gnn_update_directed}
\ee
where $\brk{ \weightmat{1, q} \in \R^{\hdim \times \indim}}_{ q \in [Q]}$ and $\brk{ \weightmat{l, q} \in \R^{\hdim \times \hdim} }_{l \in \{2, \ldots, L\} , q \in [Q]}$ are learnable weight matrices.

In our analysis for undirected graphs (\cref{sec:analysis:formal}), a central concept is $\cut_\I$~---~the set of vertices with an edge crossing the partition induced by $\I \subseteq \vertices$.
Due to the existence of self-loops it is equal to the shared neighbors of $\I$ and $\I^c$, \ie~$\cut_\I = \neigh (\I) \cap \neigh (\I^c)$.
We generalize this concept to directed graphs, defining $\cutdir_\I$ to be the set of vertices with an incoming edge from the other side of the partition induced by $\I$, \ie~$\cutdir_\I := \{ i \in \I : \neighin (i) \cap \I^c \neq \emptyset \} \cup \{ j \in \I^c : \neighin (j) \cap \I \neq \emptyset \}$.
Due to the existence of self-loops it is given by $\cutdir_\I = \neighout (\I) \cap \neighout( \I^c)$.
Indeed, for undirected graphs $\cutdir_\I = \cut_\I$.

With the definition of $\cutdir_\I$ in place,~\cref{thm:directed_sep_rank_upper_bound} upper bounds the separation ranks a GNN can achieve over directed graphs with multiple edge types.
A technical subtlety is that the bounds depend on walks of lengths $l = L - 1, L - 2, \ldots, 0$, while those in~\cref{thm:sep_rank_upper_bound} for undirected graphs depend only on walks of length $L - 1$.
As shown in the proof of~\cref{thm:sep_rank_upper_bound}, this dependence exists in undirected graphs as well.
Though, in undirected graphs with self-loops, the number of length $l \in \N$ walks from $\cut_\I$ decays exponentially as $l$ decreases.
One can therefore replace the sum over walk lengths with walks of length $L - 1$ (up to a multiplicative constant).
By contrast, in directed graphs this is not true in general, \eg,~when $\cutdir_\I$ contains only vertices with no outgoing edges (besides self-loops).

\begin{theorem}
	\label{thm:directed_sep_rank_upper_bound}
	For a directed graph with multiple edge types $\graph$ and $t \in \vertices$, let $\funcgraph{\params}{\graph}$ and $\funcvert{\params}{\graph}{t}$ be the functions realized by depth $L$ graph and vertex prediction GNNs, respectively, with width~$\hdim$, learnable weights $\params$, and product aggregation (\cref{eq:gnn_update_directed,eq:graph_pred_gnn,eq:vertex_pred_gnn,eq:prod_gnn_agg}).
	Then, for any $\I \subseteq \vertices$ and assignment of weights $\params$ it holds that:
	\begin{align}
		\text{(graph prediction)} \quad &\log \brk1{ \seprankbig{ \funcgraph{\params}{\graph} }{\I} }
		\leq
		\log \brk{ \hdim } \cdot \brk2{ \sum\nolimits_{l = 1}^{L} \nwalk{L - l}{\cutdir_\I}{ \vertices } + 1 } \text{\,,}
		\label{eq:directed_sep_rank_upper_bound_graph_pred} \\[0.5em]
		\text{(vertex prediction)} \quad &\log \brk1{ \seprankbig{ \funcvert{\params}{\graph}{ t } }{\I} }
		\leq
		\log \brk{ \hdim } \cdot \sum\nolimits_{l = 1}^{L} \nwalk{L - l}{\cutdir_\I}{ \{ t \} } \text{\,.}
		\label{eq:directed_sep_rank_upper_bound_vertex_pred}
	\end{align}
\end{theorem}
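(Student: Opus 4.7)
The plan is to mirror the proof of~\cref{thm:sep_rank_upper_bound} given in~\cref{app:proofs:sep_rank_upper_bound}, adapting its tensor network construction to directed graphs with multiple edge types. Each layer of the GNN is still encoded by a tree level whose branching reflects the aggregation operation; the two modifications needed are that, at layer $l$, the children of a node associated with vertex $i$ correspond to $\neighin(i)$ rather than to all neighbors, and that the weight tensor decorating the tree edge from a child $j$ up to $i$ is $\weightmat{l, \edgetypemap(j, i)}$ rather than $\weightmat{l}$. Neither change affects the tree topology or the bond dimension~$\hdim$, so the standard cut argument still applies: $\sepranknoflex{ \funcgraph{\params}{\graph} }{\I}$ and its vertex prediction analogue are bounded by $\hdim^{\omega}$, where $\omega$ denotes the minimum number of edges in a cut of the tree separating leaves indexed by~$\I$ from leaves indexed by~$\I^c$.

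It remains to exhibit a cut whose size matches the right-hand side of~\eqref{eq:directed_sep_rank_upper_bound_graph_pred} or~\eqref{eq:directed_sep_rank_upper_bound_vertex_pred}. Nodes at depth $l$ in the tree (with leaves at depth $L$) are in bijection with backward walks of length $l$ from a root target vertex $u$; reversing the walk direction, these correspond to forward walks of length $l$ from some vertex $v$ ending at $u$. I would include in the cut the edge immediately above every depth-$l$ node (for $l \in \{0, \ldots, L - 1\}$) whose corresponding endpoint $v$ lies in $\cutdir_\I$. The count at depth $l$ is then at most $\nwalk{l}{\cutdir_\I}{\vertices}$ in the graph prediction case, where every $u \in \vertices$ serves as a root, and at most $\nwalk{l}{\cutdir_\I}{\{t\}}$ in the vertex prediction case; summing over $l$ and reindexing $l \mapsto L - l$ yields the claimed walk sums. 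Validity of this cut follows from the observation that any root-to-leaf path crossing the partition must first enter $\cutdir_\I$ at some depth, at which point the cut edge immediately above disconnects the corresponding subtree from the root. In the graph prediction case, cutting the additional edge connecting the readout aggregation to the output linear map $\weightmat{o}$ accounts for the ``$+1$'' inside the logarithm of~\eqref{eq:directed_sep_rank_upper_bound_graph_pred}; in vertex prediction no such additional edge is present.

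The main obstacle is the combinatorial verification that the described edge set genuinely separates $\I$-indexed leaves from $\I^c$-indexed leaves and that the depth-$l$ contribution is bounded by $\nwalk{l}{\cutdir_\I}{\vertices}$ (or its vertex-prediction analogue) without overcounting. Concretely, I would need to argue that pruning all subtrees whose root first lands in $\cutdir_\I$ is sufficient to block every $\I$-to-$\I^c$ path, and that distinct tree nodes at a given depth correspond to distinct backward walks so that the count exactly matches the walk number on the right-hand side. A secondary, lighter point is checking that the edge-type-specific weight matrices $\weightmat{l, q}$ do not enlarge the bond dimension of any tree edge beyond $\hdim$, so that every cut edge contributes exactly a factor of $\hdim$ to the separation rank bound.
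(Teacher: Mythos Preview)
Your overall strategy---adapt the tensor network construction by replacing $\neigh(i)$ with $\neighin(i)$ and $\weightmat{l}$ with $\weightmat{l,\edgetypemap(j,i)}$, then invoke the cut-weight bound of~\cref{lem:min_cut_tn_sep_rank_ub}---is exactly the paper's, and your walk count is correct. The gap is in the cut itself. Cutting the edge \emph{above} every tree node whose vertex lies in $\cutdir_\I$ does not separate $\I$-leaves from $\I^c$-leaves: the subtree you detach can contain leaves of both kinds. Concretely, take $i\in\cutdir_\I\cap\I^c$; then $i$ has an incoming neighbor in $\I$ (by definition of $\cutdir_\I$) and one in $\I^c$ (its self-loop), so the subtree rooted at a copy of $\hidvec{l}{i}$ has children on both sides of the partition. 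Severing this subtree from its parent leaves those children connected to each other. Your validity argument (``any root-to-leaf path crossing the partition must first enter $\cutdir_\I$'') only shows that leaves on the side opposite the root get disconnected \emph{from the root}---it says nothing about separating them from opposite-side leaves trapped in the same detached subtree. This is precisely the ``main obstacle'' you flagged, and the proposed resolution does not go through.

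The fix is the paper's cut: place every node of the tensor network into $\J$ or $\J^c$ according to whether its associated input-graph vertex lies in $\I$ or $\I^c$. Validity is then immediate, since leaves land on the correct side by construction. The crossing legs are those joining a $\delta$-tensor $\deltatensordup{l}{i}{\gamma}$ to a child weight-matrix node whose vertex is on the opposite side; such a $\delta$-tensor has at least one crossing leg exactly when $i\in\cutdir_\I$, and under the modified multiplicative weight (\cref{def:modified_mul_cut_weight}) each contributes a single factor of $\hdim$. The number of these $\delta$-tensors at layer $l$ is $\nwalk{L-l}{\cutdir_\I}{\vertices}$ (respectively $\nwalk{L-l}{\cutdir_\I}{\{t\}}$), which recovers your sum---but the crossing legs sit \emph{below} the $\cutdir_\I$ nodes, toward their children, not above them.
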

\begin{proof}[Proof sketch (proof in~\cref{app:proofs:directed_sep_rank_upper_bound})]
The proof follows a line identical to that of~\cref{thm:sep_rank_upper_bound}, only requiring adjusting definitions from undirected graphs to directed graphs with multiple edge types.
\end{proof}

Towards lower bounding separation ranks, we generalize the definitions of vertex subsets with no repeating shared neighbors (\cref{def:no_rep_neighbors}) and admissible subsets of $\cut_\I$ (\cref{def:admissible_subsets}) to directed graphs.

\begin{definition}
	\label{def:directed_no_rep_neighbors}
	We say that $\I, \J \subseteq \vertices$ \emph{have no outgoing repeating shared neighbors} if every $k \in \neighout (\I) \cap \neighout (\J)$ has only a single incoming neighbor in each of $\I$ and $\J$, \ie~$\abs{\neighin (k) \cap \I} = \abs{\neighin (k) \cap \J} = 1$.
\end{definition}

\begin{definition}
	\label{def:directed_admissible_subsets}
	For $\I \subseteq \vertices$, we refer to $\cut \subseteq \cutdir_\I$ as an \emph{admissible subset of $\cutdir_\I$} if there exist $\I' \subseteq \I, \J' \subseteq \I^c$ with no outgoing repeating shared neighbors such that $\cut = \neighout (\I') \cap \neighout (\J')$.
	We use $\cutsetdir (\I)$ to denote the set comprising all admissible subsets of $\cutdir_\I$:
	\[
	\cutsetdir (\I) := \brk[c]1{ \cut \subseteq \cutdir_\I : \cut \text{ is an admissible subset of $\cutdir_\I$} }
	\text{\,.}
	\]
\end{definition}

\cref{thm:directed_sep_rank_lower_bound} generalizes the lower bounds from~\cref{thm:sep_rank_lower_bound} to directed graphs with multiple edge types.

\begin{theorem}
	\label{thm:directed_sep_rank_lower_bound}
	Consider the setting and notation of~\cref{thm:directed_sep_rank_upper_bound}.
	Given $I \subseteq \vertices$, for almost all assignments of weights $\params$, \ie~for all but a set of Lebesgue measure zero, it holds that:
	\begin{align}
		\text{(graph prediction)} \quad &\log \brk1{ \seprankbig{ \funcgraph{\params}{\graph} }{\I} }
		\geq
		\max_{ \cut \in \cutsetdir (\I) } \log \brk{ \alpha_{\cut} } \cdot \nwalk{L - 1}{\cut}{\vertices}
		\text{\,,}
		\label{eq:directed_sep_rank_lower_bound_graph_pred} \\[0.5em]
		\text{(vertex prediction)} \quad &\log \brk1{ \seprankbig{ \funcvert{\params}{\graph}{ t } }{\I} }
		\geq
		\max_{ \cut \in \cutsetdir (\I) } \log \brk{ \alpha_{\cut, t} } \cdot \nwalk{L - 1}{\cut}{ \{ t \} }
		\text{\,,}
		\label{eq:directed_sep_rank_lower_bound_vertex_pred}
	\end{align}
	where:
	\[
	\alpha_{\cut} := \begin{cases}
		\mindim^{1 / \nwalk{0}{\cut}{\vertices} } & , \text{if } L = 1 \\
		\brk{ \mindim - 1 } \cdot \nwalk{L - 1}{\cut}{\vertices}^{-1} + 1 & , \text{if } L \geq 2
	\end{cases}
	~~,~~
	\alpha_{\cut, t} := \begin{cases}
		\mindim & , \text{if } L = 1 \\
		\brk{ \mindim - 1 } \cdot \nwalk{L - 1}{\cut}{ \{ t \}}^{-1} + 1 & , \text{if } L \geq 2
	\end{cases}
	\text{\,,}
	\]
	with $\mindim := \min \brk[c]{ \indim, \hdim }$.
	If $\nwalk{L - 1}{\cut}{\vertices} = 0$ or $\nwalk{L - 1}{\cut}{ \{ t \}} = 0$, the respective lower bound (right hand side of~\cref{eq:directed_sep_rank_lower_bound_graph_pred} or~\cref{eq:directed_sep_rank_lower_bound_vertex_pred}) is zero by convention.
\end{theorem}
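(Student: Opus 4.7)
The plan is to adapt the proof of Theorem~\ref{thm:sep_rank_lower_bound} to the directed multi-edge-type setting, preserving the two-pillar structure of that argument: \emph{(i)} the rank of a suitable matricization of a \emph{grid tensor} associated with $\funcgraph{\params}{\graph}$ (resp.\ $\funcvert{\params}{\graph}{t}$) lower bounds its separation rank with respect to $\I$; and \emph{(ii)} for almost every $\params$, one can exhibit template vectors making the matricized grid tensor's rank at least the right hand side of~\cref{eq:directed_sep_rank_lower_bound_graph_pred,eq:directed_sep_rank_lower_bound_vertex_pred}. Pillar~\emph{(i)} is purely definitional and transfers verbatim from the undirected proof, so the work concentrates in pillar~\emph{(ii)}.

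First, I would lift the tensor network representation from~\cref{app:prod_gnn_as_tn} to directed graphs with multiple edge types. The update rule~\cref{eq:gnn_update_directed} yields the same tree-structured tensor network, except that the aggregation at vertex $i$ in layer $l$ now contracts only over subtrees reaching $i$ through its incoming neighbors via the type-specific weight matrices $\weightmat{l, \edgetypemap(j,i)}$. As a consequence, the leaves of the subtree rooted at a vertex $t$ are in bijection with the directed walks of length $\leq L$ ending at $t$, so the walks counted by $\nwalk{L-1}{\cut}{\vertices}$ and $\nwalk{L-1}{\cut}{\{t\}}$ in the statement are exactly the directed walks that govern the tensor network's geometry. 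With this representation in place, one defines grid tensors $\gridtensor{f}$ and matricizations $\gridmat{f}$ with respect to the partition $(\I, \I^c)$ exactly as in the undirected proof, and obtains $\rank(\gridmat{f}) \leq \seprank{f}{\I}$.

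Next, I would fix an admissible $\cut \in \cutsetdir(\I)$ witnessed by $\I' \subseteq \I$ and $\J' \subseteq \I^c$ having no outgoing repeating shared neighbors (\cref{def:directed_no_rep_neighbors}), so that $\cut = \neighout(\I') \cap \neighout(\J')$. This condition is the directed analogue of the property used in~\cref{thm:sep_rank_lower_bound}: for every $k \in \cut$, the unique incoming neighbor of $k$ in $\I'$ and the unique incoming neighbor of $k$ in $\J'$ are distinct, ensuring that the tensor network admits a clean decomposition across the $(\I, \I^c)$ cut. Following the grid-tensor construction used in~\citet{levine2020limits,wies2021transformer,levine2022inductive} and already invoked in the undirected proof, I would choose template vectors $\vbf^{(1)}, \ldots, \vbf^{(M)} \in \R^{\indim}$ so that $\gridmat{f}$ factors (up to permutation) as a Kronecker-like product of local blocks indexed by the length $L-1$ walks originating in $\cut$ and ending in $\vertices$ (resp.\ $\{t\}$). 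Each block is controlled by an independent subset of the parameters $\brk[c]{\weightmat{l, q}}$, and a standard polynomial nonvanishing argument — producing one weight assignment at which the relevant principal minor is nonzero — yields that for all but a Lebesgue-null set of $\params$ the rank attains $\alpha_{\cut}^{\nwalk{L-1}{\cut}{\vertices}}$ (resp.\ $\alpha_{\cut, t}^{\nwalk{L-1}{\cut}{\{t\}}}$). Maximizing over $\cut \in \cutsetdir(\I)$ and taking a union of null sets over this finite collection yields the claim; the degenerate cases $L = 1$ and $\nwalk{L-1}{\cut}{\vertices} = 0$ or $\nwalk{L-1}{\cut}{\{t\}} = 0$ are handled by direct inspection exactly as in~\cref{thm:sep_rank_lower_bound}.

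The main obstacle I anticipate is bookkeeping, arising from the directional asymmetry between $\neighin$ (which drives the update rule~\cref{eq:gnn_update_directed}) and $\neighout$ (which appears in the definitions of $\cutdir_\I$, admissibility, and outgoing repeating shared neighbors). One must thread this asymmetry carefully through the tensor network so that each length $L-1$ walk counted in $\nwalk{L-1}{\cut}{\vertices}$ corresponds to an independently addressable subtree whose leaves can be populated without inducing shared contractions between the $\I$-side and $\I^c$-side of the cut. The presence of multiple edge types introduces no fundamental difficulty: it only enlarges the parameter space from $\brk[c]{\weightmat{l}}$ to $\brk[c]{\weightmat{l, q}}$, and since the null set of bad $\params$ in the single-type setting pulls back to a null set under the natural projection, the polynomial nonvanishing argument carries over without modification.
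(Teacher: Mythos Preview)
Your proposal is correct and follows essentially the same approach as the paper. The paper's proof of \cref{thm:directed_sep_rank_lower_bound} is a short reduction to the undirected proof (\cref{thm:sep_rank_lower_bound}): it invokes the grid-tensor rank bound (\cref{lem:grid_tensor_sep_rank_lb}) and the polynomial nonvanishing argument (\cref{lem:poly_mat_max_rank}), then observes that the explicit weight-and-template construction from \cref{app:proofs:sep_rank_lower_bound:graph,app:proofs:sep_rank_lower_bound:vertex} carries over once one (i) sets every $\weightmat{l,q}$ equal to the single $\weightmat{l}$ used there, and (ii) replaces $\cut_\I,\cutset(\I),\neigh$ by $\cutdir_\I,\cutsetdir(\I),\neighout/\neighin$. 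Two small remarks: your opening detour through the tensor network representation is unnecessary here, since the lower bound argument operates directly on grid tensors rather than on the tensor network (the latter is used only for the upper bound); and your phrasing ``each block is controlled by an independent subset of the parameters $\{\weightmat{l,q}\}$'' is not how the construction actually works---the paper exhibits one explicit assignment (identity, all-ones-first-row, rank-one, etc.) rather than exploiting parameter independence across blocks. Your projection/pullback argument for handling multiple edge types is equivalent to the paper's ``set all types equal'' trick.
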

\begin{proof}[Proof sketch (proof in~\cref{app:proofs:directed_sep_rank_lower_bound})]
The proof follows a line identical to that of~\cref{thm:sep_rank_lower_bound}, only requiring adjusting definitions from undirected graphs to directed graphs with multiple edge types.
\end{proof}

	% GNN WITH PRODUCT AGGREGATION AS TN
	\section{Representing Graph Neural Networks With Product Aggregation as Tensor Networks}
\label{app:prod_gnn_as_tn}

In this appendix, we prove that GNNs with product aggregation (\cref{sec:gnns}) can be represented through tensor networks~---~a graphical language for expressing tensor contractions, widely used in quantum mechanics literature for modeling quantum states (\cf~\citet{vidal2008class}).
This representation facilitates upper bounding the separation ranks of a GNN with product aggregation (proofs for \cref{thm:sep_rank_upper_bound} and its extension in~\cref{app:extensions}), and is delivered in~\cref{app:prod_gnn_as_tn:correspondence}.
We note that analogous tensor network representations were shown for variants of recurrent and convolutional neural networks~\citep{levine2018benefits,levine2018deep}.
For the convenience of the reader, we lay out basic concepts from the field of tensor analysis in~\cref{app:prod_gnn_as_tn:tensors} and provide a self-contained introduction to tensor networks in~\cref{app:prod_gnn_as_tn:tensor_networks} (see~\citet{orus2014practical} for a more in-depth treatment).

\subsection{Primer on Tensor Analysis}
\label{app:prod_gnn_as_tn:tensors}

For our purposes, a \emph{tensor} is simply a multi-dimensional array.
The \emph{order} of a tensor is its number of axes, which are typically called \emph{modes} (\eg~a vector is an order one tensor and a matrix is an order two tensor).
The \emph{dimension} of a mode refers to its length, \ie~the number of values it can be indexed with.
For an order $N \in \N$ tensor $\Atensor \in \R^{D_1 \times \cdots \times D_N}$ with modes of dimensions $D_1, \ldots, D_N \in \N$, we will denote by $\Atensor_{d_1, \ldots, d_N}$ its $(d_1, \ldots, d_N)$'th entry, where $\brk{d_1, \ldots, d_N} \in [D_1] \times \cdots \times [D_N]$.

It is possible to rearrange tensors into matrices~---~a process known as \emph{matricization}.
The matricization of $\Atensor$ with respect to $\I \subseteq [N]$, denoted $\mat{\Atensor}{\I} \in \R^{\prod_{i \in \I} D_i \times \prod_{j \in \I^c} D_j}$ is its arrangement as a matrix where rows correspond to modes indexed by $\I$ and columns correspond to the remaining modes.
Specifically, denoting the elements in $\I$ by $i_1 < \cdots < i_{\abs{\I}}$ and those in $\I^c$ by $j_1 < \cdots < j_{ \abs{\I^c} }$, the matricization $\mat{\Atensor}{\I}$ holds the entries of $\Atensor$ such that $\Atensor_{d_1, \ldots, d_N}$ is placed in row index $1 + \sum_{l = 1}^{\abs{\I}} (d_{i_{l}} - 1) \prod_{l' = l + 1}^{\abs{\I}} D_{i_{l'}}$ and column index $1 + \sum_{l = 1}^{ \abs{\I^c} } ( d_{ j_{l} } - 1 ) \prod_{l' = l + 1}^{ \abs{\I^c} } D_{ j_{l'} }$.

Tensors with modes of the same dimension can be combined via \emph{contraction}~---~a generalization of matrix multiplication.
It will suffice to consider contractions where one of the modes being contracted is the last mode of its tensor.

\begin{definition}
	\label{def:tensor_contraction}
	Let $\Atensor \in \R^{D_1 \times \cdots \times D_N}, \Btensor \in \R^{D'_1 \times \cdots \times D'_{N'}}$ for orders $N, N' \in \N$ and mode dimensions $D_1, \ldots, D_N, D'_1, \ldots, D'_{N'} \in \N$ satisfying $D_n = D'_{N'}$ for some $n \in [N]$.
	The \emph{mode-$n$ contraction} of $\Atensor$ with $\Btensor$, denoted $\Atensor \contract{n} \Btensor \in \R^{D_1 \times \cdots \times D_{n - 1} \times D'_1 \times \cdots \times D'_{N' - 1} \times D_{n + 1} \times \cdots \times D_N}$, is given element-wise by:
	\[
	\brk*{ \Atensor \contract{n} \Btensor }_{d_1, \ldots, d_{n - 1}, d'_1, \ldots, d'_{N' - 1}, d_{n + 1}, \ldots, d_N} = \sum\nolimits_{d_n = 1}^{D_n} \Atensor_{d_1, \ldots, d_N} \cdot \Btensor_{d'_1, \ldots, d'_{N' - 1}, d_n}
	\text{\,,}
	\]
	for all $d_1 \in [D_1], \ldots, d_{n - 1} \in [D_{n - 1}], d'_1 \in [D'_1], \ldots, d'_{N' - 1} \in [D'_{N' - 1}], d_{n + 1} \in [D_{n + 1}], \ldots, d_N \in [D_N]$.
\end{definition}
For example, the mode-$2$ contraction of $\Abf \in \R^{D_1 \times D_2}$ with $\Bbf \in \R^{D'_1 \times D_2}$ boils down to multiplying $\Abf$ with $\Bbf^\top$ from the right, \ie~$\Abf \contract{2} \Bbf = \Abf \Bbf^\top$.
It is oftentimes convenient to jointly contract multiple tensors.
Given an order $N$ tensor $\Atensor$ and $M \in \N_{\leq N}$ tensors $\Btensor^{(1)}, \ldots, \Btensor^{(M)}$, we use $\Atensor \contract{i \in [M]} \Btensor^{(i)}$ to denote the contraction of $\Atensor$ with $\Btensor^{(1)}, \ldots, \Btensor^{(M)}$ in modes $1, \ldots, M$, respectively (assuming mode dimensions are such that the contractions are well-defined).

\subsection{Tensor Networks}
\label{app:prod_gnn_as_tn:tensor_networks}

 \begin{figure*}[t]
	\vspace{0mm}
	\begin{center}
		\includegraphics[width=\textwidth]{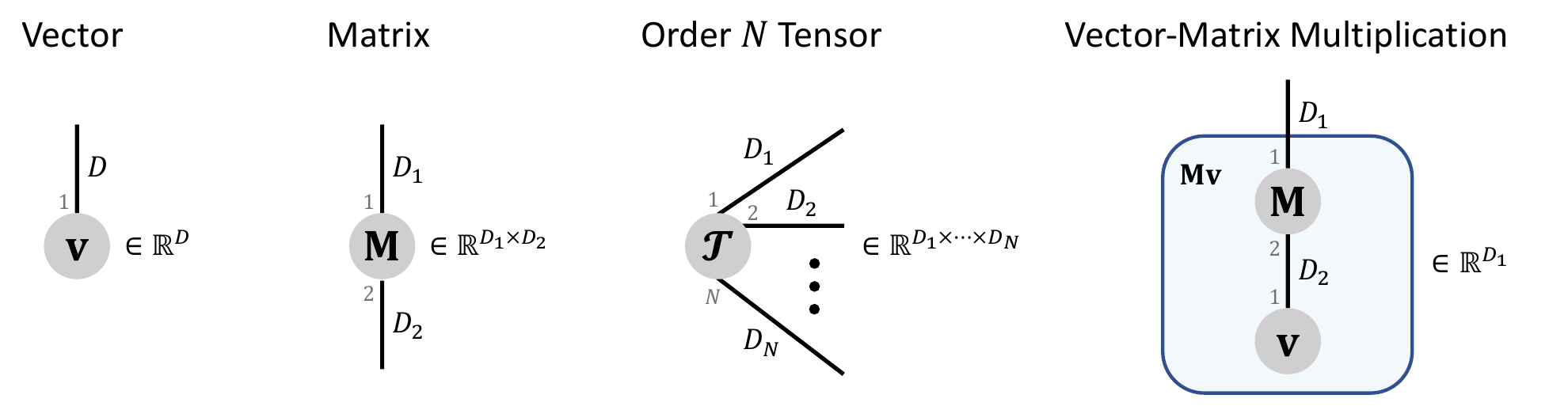}
	\end{center}
	\vspace{-2mm}
	\caption{
		Tensor network diagrams of (from left to right): a vector $\vbf \in \R^D$, matrix $\Mbf \in \R^{D_1 \times D_2}$, order $N \in \N$ tensor $\Ttensor \in \R^{D_1 \times \cdots \times D_N}$, and vector-matrix multiplication $\Mbf \vbf \in \R^{D_1}$.
		The mode index associated with a leg's end point is specified in gray, and the weight of the leg, specified in black, determines the mode dimension.
	}
	\label{fig:tensor_networks_examples}
\end{figure*}

A \emph{tensor network} is an undirected weighted graph $\tngraph = \brk{ \tnvertices{\tngraph}, \tnedges{\tngraph}, \tnedgeweights{\tngraph} }$ that describes a sequence of tensor contractions (\cref{def:tensor_contraction}), with vertices $\tnvertices{\TT}$, edges $\tnedges{\TT}$, and a function mapping edges to natural weights $\tnedgeweights{\TT}: \tnedges{\TT} \to \N$.
We will only consider tensor networks that are connected.
To avoid confusion with vertices and edges of a GNN's input graph, and in accordance with tensor network terminology, we refer by \emph{nodes} and \emph{legs} to the vertices and edges of a tensor network, respectively.

Every node in a tensor network is associated with a tensor, whose order is equal to the number of legs emanating from the node.
Each end point of a leg is associated with a mode index, and the leg's weight determines the dimension of the corresponding tensor mode.
That is, an end point of $e \in \tnedges{\TT}$ is a pair $(\Atensor, n) \in \tnvertices{\TT} \times \N$, with $n$ ranging from one to the order of $\Atensor$, and $\tnedgeweights{\TT} (e)$ is the dimension of $\Atensor$ in mode $n$.
A leg can either connect two nodes or be connected to a node on one end and be loose on the other end.
If two nodes are connected by a leg, their associated tensors are contracted together in the modes specified by the leg.
Legs with a loose end are called \emph{open legs}.
The number of open legs is exactly the order of the tensor produced by executing all contractions in the tensor network, \ie~by contracting the tensor network.
\cref{fig:tensor_networks_examples} presents exemplar tensor network diagrams of a vector, matrix, order $N \in \N$ tensor, and vector-matrix multiplication.

\subsection{Tensor Networks Corresponding to Graph Neural Networks With Product Aggregation}
\label{app:prod_gnn_as_tn:correspondence}

Fix some undirected graph $\graph$ and learnable weights $\params = \brk{ \weightmat{1}, \ldots, \weightmat{L}, \weightmat{o} }$.
Let $\funcgraph{\params}{\graph}$ and $\funcvert{\params}{\graph}{t}$, for $t \in \vertices$, be the functions realized by depth $L$ graph and vertex prediction GNNs, respectively, with width~$\hdim$ and product aggregation (\cref{eq:gnn_update,eq:graph_pred_gnn,eq:vertex_pred_gnn,eq:prod_gnn_agg}).
For $\fmat = \brk{ \fvec{1}, \ldots, \fvec{ \abs{\vertices} } }\in \R^{\indim \times \abs{\vertices} }$, we construct tensor networks $\tngraph (\fmat)$ and $\tngraph^{(t)} (\fmat)$ whose contraction yields $\funcgraph{\params}{\graph} (\fmat)$ and $\funcvert{\params}{\graph}{t} (\fmat)$, respectively. 
Both $\tngraph (\fmat)$ and $\tngraph^{(t)} (\fmat)$ adhere to a tree structure, where each leaf node is associated with a vertex feature vector, \ie~one of $\fvec{1}, \ldots, \fvec{ \abs{\vertices} }$, and each interior node is associated with a weight matrix from $\weightmat{1}, \ldots, \weightmat{L}, \weightmat{o}$ or a \emph{$\delta$-tensor} with modes of dimension $\hdim$, holding ones on its hyper-diagonal and zeros elsewhere.
We denote an order $N \in \N$ tensor of the latter type by $\deltatensor{N} \in \R^{\hdim \times \cdots \times \hdim}$, \ie~$\deltatensor{N}_{d_1, \ldots, d_N} = 1$ if $d_1 = \cdots = d_N$ and $\deltatensor{N}_{d_1, \ldots, d_N} = 0$ otherwise for all $d_1, \ldots, d_N \in [\hdim]$.

\begin{figure*}[t]
	\vspace{0mm}
	\begin{center}
		\includegraphics[width=1\textwidth]{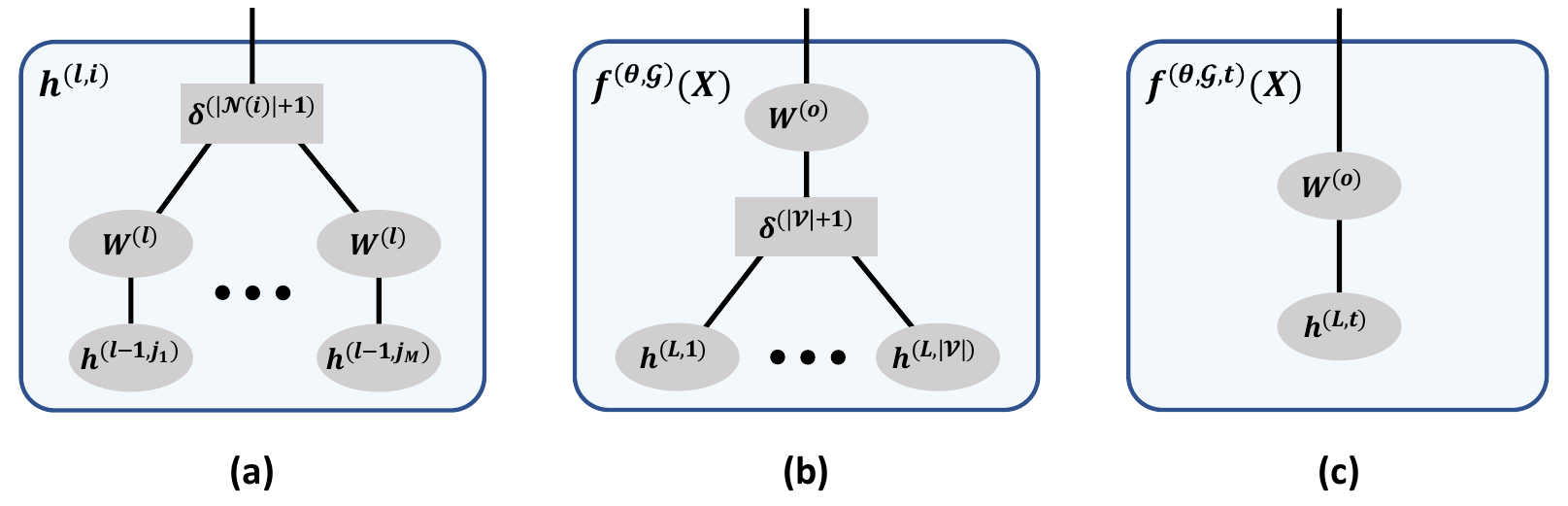}
	\end{center}
	\vspace{-2mm}
	\caption{
		Tensor network diagrams of the operations performed by GNNs with product aggregation (\cref{sec:gnns}).
		\textbf{(a)}~Hidden embedding update (\cf~\cref{eq:gnn_update,eq:prod_gnn_agg}): $\hidvec{l}{i} = \brk{ \weightmat{l} \hidvec{l - 1}{j_1} } \hadmp \cdots \hadmp \brk{ \weightmat{l} \hidvec{l - 1}{j_M} }$, where $\neigh (i) = \brk[c]{ j_1, \ldots, j_M}$, for $l \in [L], i \in \vertices$.
		\textbf{(b)}~Output layer for graph prediction (\cf~\cref{eq:graph_pred_gnn,eq:prod_gnn_agg}): $\funcgraph{\params}{\graph} (\fmat) = \weightmat{o} \brk{ \hidvec{L}{1} \hadmp \cdots \hadmp \hidvec{L}{ | \vertices | } }$. 
		\textbf{(c)}~Output layer for vertex prediction over $t \in \vertices$ (\cf~\cref{eq:vertex_pred_gnn}): $\funcvert{\params}{\graph}{t} (\fmat) = \weightmat{o} \hidvec{L}{t}$.
		We draw nodes associated with $\delta$-tensors as rectangles to signify their special (hyper-diagonal) structure, and omit leg weights to avoid clutter (legs connected to $\hidvec{0}{i} = \fvec{i}$, for $i \in \vertices$, have weight $\indim$ while all other legs have weight $\hdim$).
	}
	\label{fig:gnn_tensor_network_parts}
\end{figure*}

Intuitively, $\tngraph (\fmat)$ and $\tngraph^{(t)} (\fmat)$ embody unrolled computation trees, describing the operations performed by the respective GNNs through tensor contractions.
Let $\hidvec{l}{i} = \hadmp_{ j \in \neigh (i) } \brk{ \weightmat{l} \hidvec{l - 1}{j} }$ be the hidden embedding of $i \in \vertices$ at layer $l \in [L]$ (recall $\hidvec{0}{j} = \fvec{j}$ for $j \in \vertices$), and denote $\neigh (i) = \brk[c]{ j_1, \ldots, j_M}$.
We can describe $\hidvec{l}{i}$ as the outcome of contracting each $\hidvec{l - 1}{ j_1 }, \ldots, \hidvec{ l - 1}{j_M}$ with $\weightmat{l}$, \ie~computing $\weightmat{l} \hidvec{l - 1}{j_1}, \ldots, \weightmat{l} \hidvec{l - 1}{j_M}$, followed by contracting the resulting vectors with $\deltatensor{ \abs{\neigh (i)} +1}$, which induces product aggregation (see~\cref{fig:gnn_tensor_network_parts}(a)).
Furthermore, in graph prediction, the output layer producing $\funcgraph{\params}{\graph} (\fmat) = \weightmat{o} \brk{ \hadmp_{ i \in \vertices} \hidvec{L}{i} }$ amounts to contracting $\hidvec{L}{1}, \ldots, \hidvec{L}{ \abs{\vertices}}$ with $\deltatensor{ \abs{\vertices} + 1}$, and subsequently contracting the resulting vector with $\weightmat{o}$ (see~\cref{fig:gnn_tensor_network_parts}(b)); while for vertex prediction, $\funcvert{\params}{\graph}{t} (\fmat) = \weightmat{o} \hidvec{L}{t}$ is a contraction of $\hidvec{L}{t}$ with $\weightmat{o}$ (see~\cref{fig:gnn_tensor_network_parts}(c)).

Overall, every layer in a GNN with product aggregation admits a tensor network formulation given the outputs of the previous layer.
Thus, we can construct a tree tensor network for the whole GNN by starting from the output layer~---~\cref{fig:gnn_tensor_network_parts}(b) for graph prediction or \cref{fig:gnn_tensor_network_parts}(c) for vertex prediction~---~and recursively expanding nodes associated with $\hidvec{l}{i}$ according to~\cref{fig:gnn_tensor_network_parts}(a), for $l = L, \ldots, 1$ and $i \in \vertices$.
A technical subtlety is that each $\hidvec{l}{i}$ can appear multiple times during this procedure.
In the language of tensor networks this translate to duplication of nodes.
Namely, there are multiple copies of the sub-tree representing $\hidvec{l}{i}$ in the tensor network~---~one copy per appearance when unraveling the recursion.
\cref{fig:prod_gnn_tensor_networks_example} displays examples for tensor network diagrams of $\tngraph (\fmat)$ and $\tngraph^{(t)} (\fmat)$.

We note that, due to the node duplication mentioned above, the explicit definitions of $\tngraph (\fmat)$ and $\tngraph^{(t)} (\fmat)$ entail cumbersome notation.
Nevertheless, we provide them in~\cref{app:prod_gnn_as_tn:correspondence:explicit_tn} for the interested reader.

\begin{figure*}[t]
	\vspace{0mm}
	\begin{center}
		\includegraphics[width=1\textwidth]{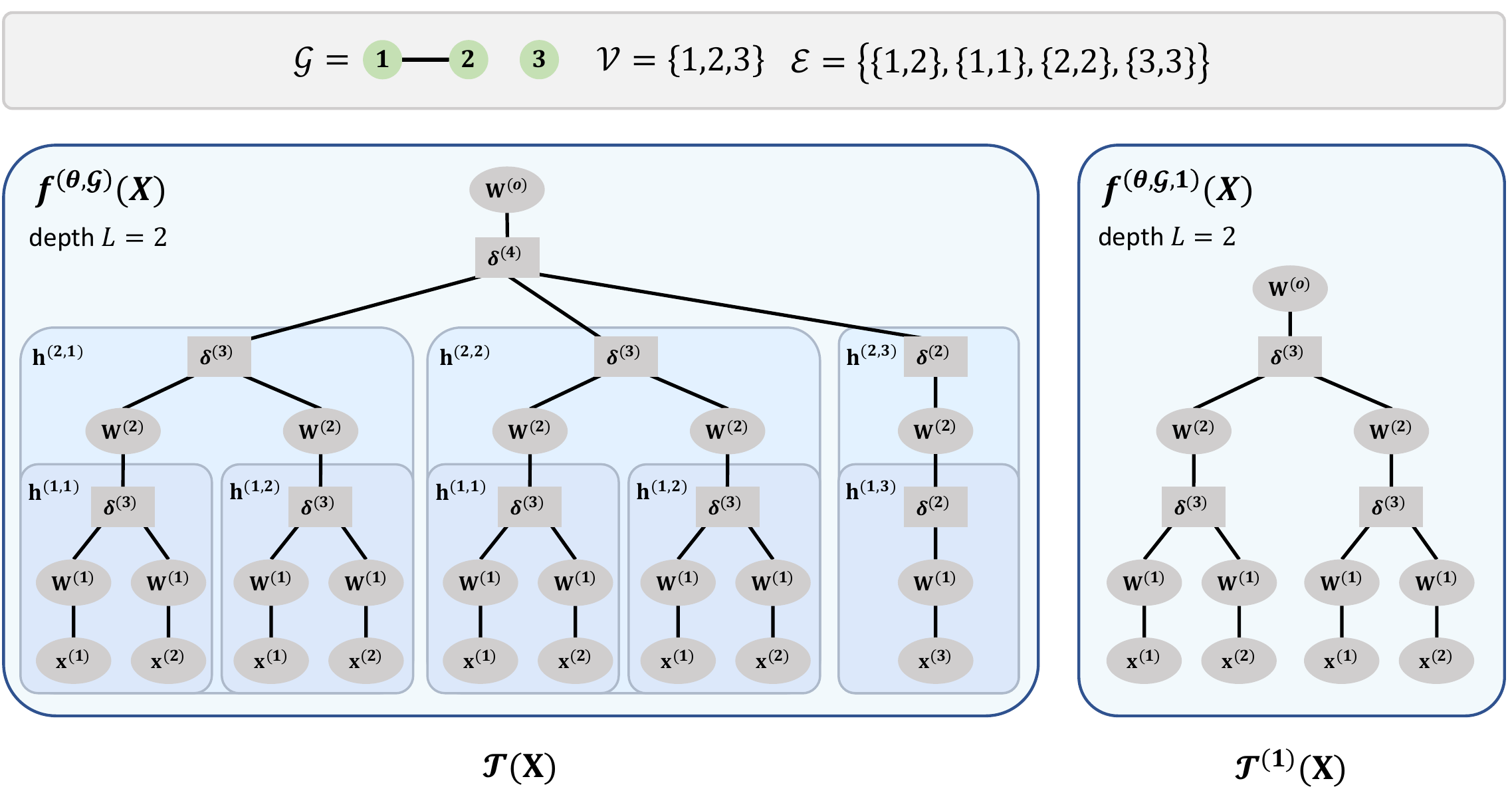}
	\end{center}
	\vspace{-2mm}
	\caption{
		Tensor network diagrams of $\tngraph (\fmat)$ (left) and $\tngraph^{(t)} (\fmat)$ (right) representing $\funcgraph{\params}{\graph} ( \fmat )$ and $\funcvert{\params}{\graph}{t} (\fmat)$, respectively, for $t = 1 \in \vertices$, vertex features $\fmat = \brk{ \fvec{1}, \ldots, \fvec{ | \vertices | } }$, and depth $L = 2$ GNNs with product aggregation (\cref{sec:gnns}).
		The underlying input graph $\graph$, over which the GNNs operate, is depicted at the top.
		We draw nodes associated with $\delta$-tensors as rectangles to signify their special (hyper-diagonal) structure, and omit leg weights to avoid clutter (legs connected to $\fvec{1}, \fvec{2}, \fvec{3}$ have weight $\indim$ while all other legs have weight $\hdim$).
		See~\cref{app:prod_gnn_as_tn:correspondence} for further details on the construction of $\tngraph (\fmat)$ and $\tngraph^{(t)} (\fmat)$, and~\cref{app:prod_gnn_as_tn:correspondence:explicit_tn} for explicit formulations.
	}
	\label{fig:prod_gnn_tensor_networks_example}
\end{figure*}

\subsubsection{Explicit Tensor Network Definitions}
\label{app:prod_gnn_as_tn:correspondence:explicit_tn}

The tree tensor network representing $\funcgraph{\params}{\graph} (\fmat)$ consists of an initial input level~---~the leaves of the tree~---~comprising $\nwalk{L}{ \{i\} }{ \vertices }$ copies of $\fvec{i}$ for each $i \in \vertices$.
We will use $\fvecdup{i}{\gamma}$ to denote the copies of $\fvec{i}$ for $i \in \vertices$ and $\gamma \in [\nwalk{L}{ \{i\} }{ \vertices }]$.
In accordance with the GNN inducing $\funcgraph{\params}{\graph}$, following the initial input level are $L + 1$ \emph{layers}.
Each layer $l \in [L]$ includes two levels: one comprising $\nwalk{L - l + 1}{\vertices}{\vertices}$ nodes standing for copies of $\weightmat{l}$, and another containing $\delta$-tensors~---~$\nwalk{L - l}{ \{i\} }{\vertices}$ copies of $\deltatensor{ \abs{\neigh(i)} + 1 }$ per $i \in \vertices$.
We associate each node in these layers with its layer index and a vertex of the input graph $i \in \vertices$.
Specifically, we will use $\weightmatdup{l}{i}{\gamma}$ to denote copies of $\weightmat{l}$ and $\deltatensordup{l}{i}{\gamma}$ to denote copies of $\deltatensor{ \abs{ \neigh (i) } + 1}$, for $l \in [L], i \in \vertices$, and $\gamma \in \N$.
In terms of connectivity, every leaf $\fvecdup{i}{\gamma}$ has a leg to $\weightmatdup{1}{i}{\gamma}$.
The rest of the connections between nodes are such that each sub-tree whose root is $\deltatensordup{l}{i}{\gamma}$ represents $\hidvec{l}{i}$, \ie~contracting the sub-tree results in the hidden embedding for $i \in \vertices$ at layer $l \in [L]$ of the GNN inducing $\funcgraph{\params}{\graph}$.
Last, is an output layer consisting of two connected nodes: a $\deltatensor{\abs{\vertices} + 1}$ node, which has a leg to every $\delta$-tensor from layer $L$, and a $\weightmat{o}$ node.
See~\cref{fig:prod_gnn_tensor_networks_example_formal} (left) for an example of a tensor network diagram representing~$\funcgraph{\params}{\graph} (\fmat)$ with this notation.

\begin{figure*}[t]
	\vspace{0mm}
	\begin{center}
		\includegraphics[width=1\textwidth]{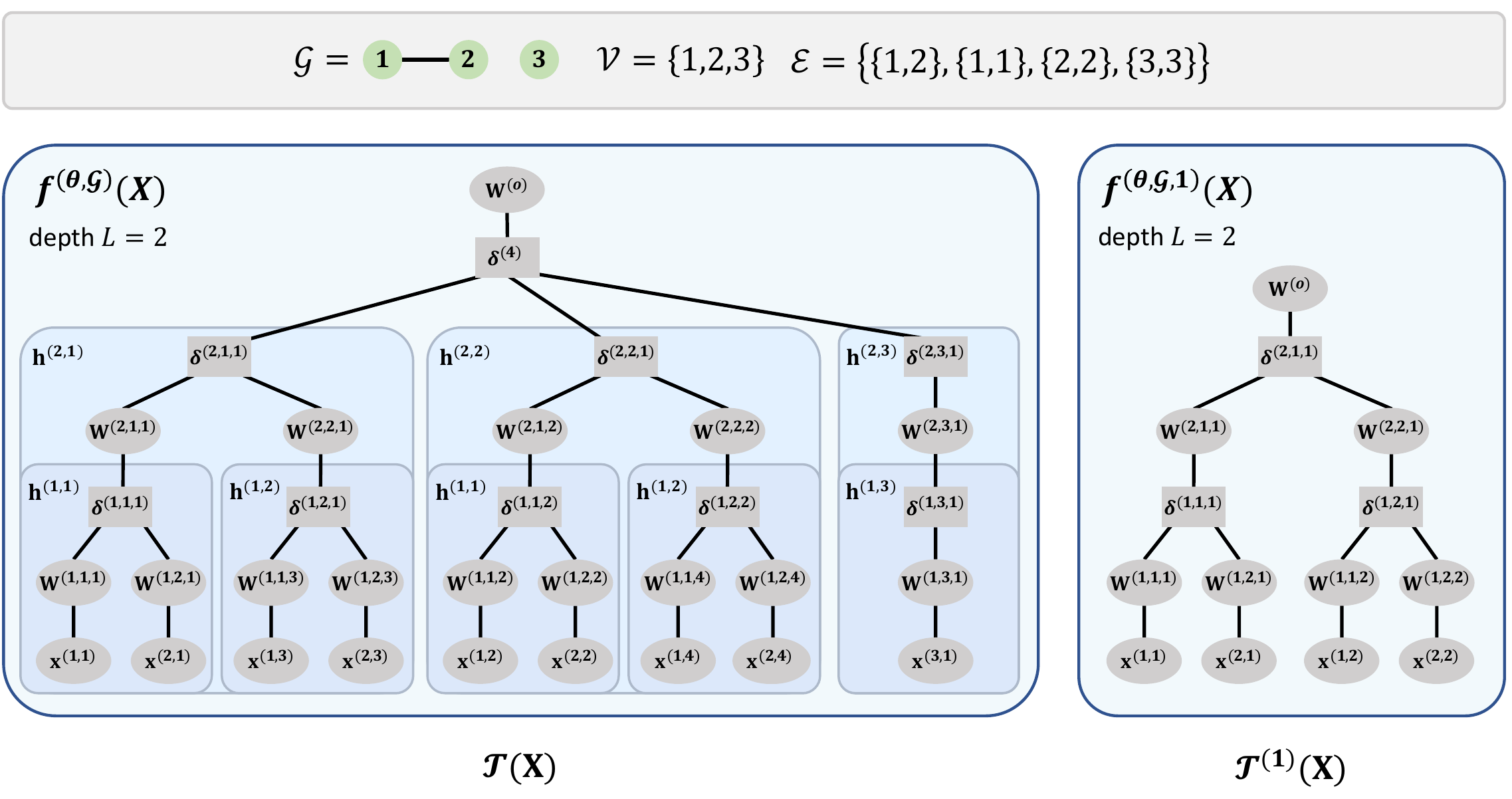}
	\end{center}
	\vspace{-2mm}
	\caption{
		Tensor network diagrams (with explicit node duplication notation) of $\tngraph (\fmat)$ (left) and $\tngraph^{(t)} (\fmat)$ (right) representing $\funcgraph{\params}{\graph} ( \fmat )$ and $\funcvert{\params}{\graph}{t} (\fmat)$, respectively, for $t = 1 \in \vertices$, vertex features $\fmat = \brk{ \fvec{1}, \ldots, \fvec{ | \vertices | } }$, and depth $L = 2$ GNNs with product aggregation (\cref{sec:gnns}).
		This figure is identical to~\cref{fig:prod_gnn_tensor_networks_example}, except that it uses the explicit notation for node duplication detailed in~\cref{app:prod_gnn_as_tn:correspondence:explicit_tn}.
		Specifically, each feature vector, weight matrix, and $\delta$-tensor is attached with an index specifying which copy it is (rightmost index in the superscript).
		Additionally, weight matrices and $\delta$-tensors are associated with a layer index and vertex in~$\vertices$ (except for the output layer $\delta$-tensor in $\tngraph (\fmat)$ and $\weightmat{o}$).
		See~\cref{eq:graphtensor_tn,eq:vertextensor_tn} for the explicit definitions of these tensor networks.
	}
	\label{fig:prod_gnn_tensor_networks_example_formal}
\end{figure*}

The tensor network construction for $\funcvert{\params}{\graph}{t} (\fmat)$ is analogous to that for $\funcgraph{\params}{\graph} (\fmat)$, comprising an initial input level followed by $L + 1$ layers.
Its input level and first $L$ layers are structured the same, up to differences in the number of copies for each node.
Specifically, the number of copies of $\fvec{i}$ is $\nwalk{L}{\{ i\}}{\{t\}}$ instead of $\nwalk{L}{\{i\}}{\vertices}$, the number of copies of $\weightmat{l}$ is $\nwalk{L - l + 1}{\vertices}{ \{t\}}$ instead of $\nwalk{L -l + 1}{\vertices}{ \vertices }$, and the number of copies of $\deltatensor{ \abs{ \neigh (i) } + 1 }$ is $\nwalk{L - l}{ \{i\} }{ \{t\} }$ instead of $\nwalk{L - l}{ \{i\} }{ \vertices }$, for $i \in \vertices$ and $l \in [L]$.
The output layer consists only of a $\weightmat{o}$ node, which is connected to the $\delta$-tensor in layer $L$ corresponding to vertex $t$.
See~\cref{fig:prod_gnn_tensor_networks_example_formal} (right) for an example of a tensor network diagram representing $\funcvert{\params}{\graph}{t} (\fmat)$ with this notation.

Formally, the tensor network producing $\funcgraph{\params}{\graph} (\fmat)$, denoted $\tngraph (\fmat) = \brk{ \tnvertices{\tngraph (\fmat)}, \tnedges{\tngraph (\fmat)}, \tnedgeweights{ \tngraph (\fmat) } }$, is defined by:
\be
\begin{split}
\tnvertices{\tngraph (\fmat)} := &~\brk[c]2{ \fvecdup{i}{\gamma} : i \in \vertices, \gamma \in [ \nwalk{L}{ \{i\} }{\vertices} ] } \cup \\
&~\brk[c]2{ \weightmatdup{l}{i}{\gamma} : l \in [L], i \in \vertices, \gamma \in [ \nwalk{L - l + 1}{ \{i\} }{\vertices} ] } \cup \\
&~\brk[c]2{ \deltatensordup{l}{i}{\gamma} : l \in [L], i \in \vertices, \gamma \in [ \nwalk{L - l}{ \{i\} }{ \vertices } ] } \cup \\
&~\brk[c]2{ \deltatensor{\abs{\vertices} + 1 }, \weightmat{o} } \text{\,,} \\[1em]
\tnedges{\tngraph (\fmat)} :=&~\brk[c]2{ \brk[c]1{ ( \fvecdup{i}{\gamma}, 1) , (\weightmatdup{1}{i}{\gamma}, 2 ) } : i \in \vertices , \gamma \in [ \nwalk{L}{ \{i\} }{\vertices} ] }\cup \\
&~\brk[c]2{ \brk[c]1{ ( \deltatensordup{l}{i}{\gamma}, j ) , ( \weightmatdup{l}{ \neigh(i)_j }{ \dupmap{l}{i}{j} (\gamma) } , 1 ) } : l \in [L], i \in \vertices, j \in [ \abs{ \neigh (i) } ] , \gamma \in [ \nwalk{L - l}{ \{i\} }{ \vertices } ] } \cup \\
&~\brk[c]2{ \brk[c]1{ ( \deltatensordup{l}{i}{\gamma}, \abs{\neigh (i)} + 1 ) , ( \weightmatdup{l + 1}{i}{\gamma} , 2 ) } : l \in [L - 1] , i \in \vertices , \gamma \in [ \nwalk{L - l}{ \{i\} }{ \vertices } ] } \cup \\
&~\brk[c]2{ \brk[c]1{ ( \deltatensor{ \abs{\vertices} + 1 } , i ) , ( \deltatensordup{L}{i}{1} , \abs{ \neigh (i) } + 1 )} : i \in \vertices } \cup \brk[c]2{ \brk[c]1{ ( \deltatensor{ \abs{\vertices} + 1} , \abs{\vertices} + 1 ) , ( \weightmat{o} , 2 ) } } \text{\,,} \\[1em]
\tnedgeweights{\tngraph (\fmat)} (e) :=& \begin{cases}
\indim	& , \text{ if $(\fvecdup{i}{\gamma}, 1)$ is an endpoint of $e \in \tnedges{\tngraph}$ for some $i \in \vertices , \gamma \in [ \nwalk{L}{ \{i\} }{ \vertices } ]$} \\
\hdim	& , \text{ otherwise}
\end{cases} 
\text{\,,}
\end{split}
\label{eq:graphtensor_tn}
\ee
where $\dupmap{l}{i}{j} (\gamma) := \gamma + \sum\nolimits_{k < i \text{ s.t. } k \in \neigh (j)} \nwalk{L - l}{ \{ k \} }{\vertices}$, for $l \in [L], i \in \vertices,$ and $\gamma \in [ \nwalk{L - l}{ \{i\} }{ \vertices } ]$, is used to map a $\delta$-tensor copy corresponding to $i$ in layer $l$ to a $\weightmat{l}$ copy, and $\neigh (i)_j$, for $i \in \vertices$ and $j \in [\abs{\neigh (i)}]$, denotes the $j$'th neighbor of $i$ according to an ascending order (recall vertices are represented by indices from $1$ to $\abs{\vertices}$).

Similarly, the tensor network producing $\funcvert{\params}{\graph}{t} (\fmat)$, denoted $\tngraph^{(t)} (\fmat) = \brk{ \tnvertices{\tngraph^{(t)} (\fmat)}, \tnedges{\tngraph^{(t)} (\fmat)}, \tnedgeweights{\tngraph^{(t)} (\fmat)} }$, is defined by:
\be
\begin{split}
	\tnvertices{\tngraph^{(t)} (\fmat)} := &~\brk[c]2{ \fvecdup{i}{\gamma} : i \in \vertices, \gamma \in [ \nwalk{L}{ \{i\} }{\{t\}} ] } \cup \\
	&~\brk[c]2{ \weightmatdup{l}{i}{\gamma} : l \in [L], i \in \vertices, \gamma \in [ \nwalk{L - l + 1}{ \{i\} }{ \{t\} } ] } \cup \\
	&~\brk[c]2{ \deltatensordup{l}{i}{\gamma} : l \in [L], i \in \vertices, \gamma \in [ \nwalk{L - l}{ \{i\} }{ \{t\} } ] } \cup \\
	&~\brk[c]2{ \weightmat{o} } \text{\,,} \\[1em]
	\tnedges{\tngraph^{(t)} (\fmat)} :=&~\brk[c]2{ \brk[c]1{ ( \fvecdup{i}{\gamma} , 1 ) , ( \weightmatdup{1}{i}{\gamma}, 2 ) } : i \in \vertices , \gamma \in [ \nwalk{L}{ \{i\} }{ \{t\} } ] } \cup \\
	&~\brk[c]2{ \brk[c]1{ ( \deltatensordup{l}{i}{\gamma}, j ) , ( \weightmatdup{l}{ \neigh(i)_j }{ \dupmapvertex{l}{i}{j} (\gamma) } , 1 ) } : l \in [L], i \in \vertices, j \in [ \abs{ \neigh (i) } ] , \gamma \in [ \nwalk{L - l}{ \{i\} }{ \{t\} } ] } \cup \\
	&~\brk[c]2{ \brk[c]1{ ( \deltatensordup{l}{i}{\gamma}, \abs{\neigh (i)} + 1 ) , ( \weightmatdup{l + 1}{i}{\gamma} , 2 ) } : l \in [L - 1] , i \in \vertices , \gamma \in [ \nwalk{L - l}{ \{i\} }{ \{t\} } ] } \cup \\
	&~\brk[c]2{ \brk[c]1{ ( \deltatensordup{L}{t}{1} , \abs{ \neigh (t) } + 1 ) , ( \weightmat{o} , 2 ) } } \text{\,,} \\[1em]
	\tnedgeweights{\tngraph^{(t)} (\fmat)} (e) :=& \begin{cases}
		\indim	& , \text{ if $(\fvecdup{i}{\gamma}, 1)$ is an endpoint of $e \in \tnedges{\tngraph}$ for some $i \in \vertices , \gamma \in [ \nwalk{L}{ \{i\} }{ \{t\} } ]$ } \\
		\hdim	& , \text{ otherwise}
	\end{cases} 
	\text{\,,}
\end{split}
\label{eq:vertextensor_tn}
\ee
where $\dupmapvertex{l}{i}{j} (\gamma) := \gamma + \sum\nolimits_{k < i \text{ s.t. } k \in \neigh (j)} \nwalk{L - l}{ \{ k \} }{ \{t\} }$, for $l \in [L], i \in \vertices,$ and $\gamma \in [ \nwalk{L - l}{ \{i\} }{ \{t\} } ]$, is used to map a $\delta$-tensor copy corresponding to $i$ in layer $l$ to a $\weightmat{l}$ copy.

\cref{prop:tn_constructions} verifies that contracting $\tngraph (\fmat)$ and $\tngraph^{(t)} (\fmat)$ yields $\funcgraph{\params}{\graph} (\fmat)$ and $\funcvert{\params}{\graph}{t} (\fmat)$, respectively.

\begin{proposition}
\label{prop:tn_constructions}
For an undirected graph $\graph$ and $t \in \vertices$, let $\funcgraph{\params}{\graph}$ and $\funcvert{\params}{\graph}{t}$ be the functions realized by depth $L$ graph and vertex prediction GNNs, respectively, with width~$\hdim$, learnable weights $\params$, and product aggregation (\cref{eq:gnn_update,eq:graph_pred_gnn,eq:vertex_pred_gnn,eq:prod_gnn_agg}).
For vertex features $\fmat = \brk{ \fvec{1}, \ldots, \fvec{\abs{\vertices}} } \in \R^{\indim \times \abs{\vertices}}$, let the tensor networks $\tngraph (\fmat) = \brk{ \tnvertices{\tngraph (\fmat)}, \tnedges{\tngraph (\fmat)}, \tnedgeweights{\tngraph (\fmat)} }$ and $\tngraph^{(t)} (\fmat) = \brk{ \tnvertices{\tngraph^{(t)} (\fmat)}, \tnedges{\tngraph^{(t)} (\fmat)}, \tnedgeweights{\tngraph^{(t)} (\fmat)} }$ be as defined in~\cref{eq:graphtensor_tn,eq:vertextensor_tn}, respectively.
Then, performing the contractions described by $\tngraph (\fmat)$ produces~$\funcgraph{\params}{\graph} (\fmat)$, and performing the contractions described by $\tngraph^{(t)} (\fmat)$ produces~$\funcvert{\params}{\graph}{t} (\fmat)$.
\end{proposition}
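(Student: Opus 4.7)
The proposition is a verification claim: one just needs to trace through the definitions in~\cref{eq:graphtensor_tn,eq:vertextensor_tn} and confirm that the indicated sequence of tensor contractions reproduces the forward pass in~\cref{eq:gnn_update,eq:graph_pred_gnn,eq:vertex_pred_gnn} with $\agg$ as in~\cref{eq:prod_gnn_agg}. My plan is to establish two elementary contraction identities and then perform a single induction on layer index.

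\textbf{Preliminary identities.} First, I would record two facts that follow immediately from~\cref{def:tensor_contraction} and the definition of the hyper-diagonal tensor $\deltatensor{N}$. \emph{(i)} For any $\Wbf \in \R^{\hdim \times \indim}$ (or $\R^{\hdim \times \hdim}$) and $\vbf$ of matching dimension, $\Wbf \contract{2} \vbf = \Wbf \vbf$; i.e.\ a weight-node connected to a single vector-node along its second mode realizes matrix--vector multiplication. \emph{(ii)} For vectors $\vbf^{(1)}, \ldots, \vbf^{(M)} \in \R^{\hdim}$, the contraction $\deltatensor{M+1} \contract{j \in [M]} \vbf^{(j)}$ produces the vector whose $d$'th entry equals $\prod_{m=1}^M \vbf^{(m)}_d$, which is exactly the Hadamard product $\vbf^{(1)} \hadmp \cdots \hadmp \vbf^{(M)}$. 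This is the key reason $\delta$-tensors implement product aggregation.

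\textbf{Induction.} The heart of the argument is to prove by induction on $l \in \{0,1,\ldots,L\}$ the following claim: in $\tngraph(\fmat)$, every maximal sub-tree rooted at $\deltatensordup{l}{i}{\gamma}$ (for $l \geq 1$) or at $\fvecdup{i}{\gamma}$ (for $l=0$) contracts to the vector $\hidvec{l}{i}$ defined by the GNN update rule~\cref{eq:gnn_update} with product aggregation; and analogously in $\tngraph^{(t)}(\fmat)$. The base case $l = 0$ is immediate, since $\fvecdup{i}{\gamma} = \fvec{i} = \hidvec{0}{i}$. For the inductive step, inspect the edges defined in~\cref{eq:graphtensor_tn}: a sub-tree rooted at $\deltatensordup{l}{i}{\gamma}$ has, for each $j \in \{1,\ldots,|\neigh(i)|\}$, a child $\weightmatdup{l}{\neigh(i)_j}{\dupmap{l}{i}{j}(\gamma)}$, which is itself connected via its second mode either to a sub-tree rooted at $\deltatensordup{l-1}{\neigh(i)_j}{\cdot}$ (if $l \geq 2$) or to a leaf $\fvecdup{\neigh(i)_j}{\cdot}$ (if $l = 1$). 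By the inductive hypothesis, each such child sub-tree contracts to $\hidvec{l-1}{\neigh(i)_j}$, and identity~\emph{(i)} then turns the weight-node into $\weightmat{l} \hidvec{l-1}{\neigh(i)_j}$. Applying identity~\emph{(ii)} at the root $\deltatensordup{l}{i}{\gamma}$ yields $\hadmp_{j \in \neigh(i)} \weightmat{l} \hidvec{l-1}{j}$, which is precisely $\hidvec{l}{i}$ by~\cref{eq:gnn_update,eq:prod_gnn_agg}.

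\textbf{Assembling the output.} Once the induction is in place, finishing is a one-line application of the same identities to the output layer. In $\tngraph(\fmat)$, the top $\deltatensor{|\vertices|+1}$ node is connected to a layer-$L$ $\delta$-tensor per vertex (the ``$\gamma=1$'' copy specified in~\cref{eq:graphtensor_tn}), so by the inductive claim it contracts $\hidvec{L}{1}, \ldots, \hidvec{L}{|\vertices|}$ via identity~\emph{(ii)} into their Hadamard product, after which $\weightmat{o}$ supplies the final linear map; this matches~\cref{eq:graph_pred_gnn}. In $\tngraph^{(t)}(\fmat)$, $\weightmat{o}$ is connected directly to $\deltatensordup{L}{t}{1}$, so identity~\emph{(i)} and the inductive claim give $\weightmat{o} \hidvec{L}{t}$, matching~\cref{eq:vertex_pred_gnn}.

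\textbf{Anticipated obstacle.} The only non-trivial aspect is the bookkeeping of node duplication: one must verify that the maps $\dupmap{l}{i}{j}$ (and their $\tngraph^{(t)}$ counterparts $\dupmapvertex{l}{i}{j}$) provide a well-defined bijection between the $\weightmatdup{l}{\cdot}{\cdot}$ copies demanded by all parent $\delta$-tensors and those declared in $\tnvertices{\tngraph(\fmat)}$, and symmetrically that the number of $\delta$-tensor copies $\nwalk{L-l}{\{i\}}{\vertices}$ matches the walk-counting recurrence $\nwalk{L-l+1}{\{j\}}{\vertices} = \sum_{i \in \neigh(j)} \nwalk{L-l}{\{i\}}{\vertices}$ (and analogously $\nwalk{L-l+1}{\{j\}}{\{t\}} = \sum_{i \in \neigh(j)} \nwalk{L-l}{\{i\}}{\{t\}}$ for vertex prediction). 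These identities follow from the definition of length-$k$ walks and the chosen ordering of neighbors, so the verification reduces to an index-chasing exercise rather than any substantive new idea.
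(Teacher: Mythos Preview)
Your proposal is correct and follows essentially the same approach as the paper: establish the $\delta$-tensor Hadamard identity (your identity~\emph{(ii)}, which the paper states as a separate lemma), then induct on the layer index to show each sub-tree rooted at $\deltatensordup{l}{i}{\gamma}$ contracts to $\hidvec{l}{i}$, and finally handle the output layer. The only cosmetic difference is that the paper starts the induction at $l=1$ rather than $l=0$, and it does not spell out the duplication-bookkeeping step you flag as a potential obstacle.
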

\begin{proof}[Proof sketch (proof in~\cref{app:proofs:tn_constructions})]
For both $\tngraph (\fmat)$ and $\tngraph^{(t)} (\fmat)$, a straightforward induction over the layer $l \in [L]$ establishes that contracting the sub-tree whose root is $\deltatensordup{l}{i}{\gamma}$ results in $\hidvec{l}{i}$ for all $i \in \vertices$ and $\gamma$, where $\hidvec{l}{i}$ is the hidden embedding for $i$ at layer $l$ of the GNNs inducing $\funcgraph{\params}{\graph}$ and $\funcvert{\params}{\graph}{t}$, given vertex features $\fvec{1}, \ldots, \fvec{\abs{\vertices}}$.
The proof concludes by showing that the contractions in the output layer of $\tngraph (\fmat)$ and $\tngraph^{(t)} (\fmat)$ reproduce the operations defining $\funcgraph{\params}{\graph} (\fmat)$ and $\funcvert{\params}{\graph}{t} (\fmat)$ in~\cref{eq:graph_pred_gnn,eq:vertex_pred_gnn}, respectively.
\end{proof}

	% GENERAL FORM OF WALK INDEX SPARSIFICATION
	\section{General Walk Index Sparsification}
\label{app:walk_index_sparsification_general}

Our edge sparsification algorithm~---~Walk Index Sparsification (WIS)~---~was obtained as an instance of the General Walk Index Sparsification (GWIS) scheme described in~\cref{sec:sparsification}.
\cref{alg:walk_index_sparsification_general} formally outlines this general scheme.

\begin{algorithm}[H]
	\caption{$(L - 1)$-General Walk Index Sparsification (GWIS)} 
	\label{alg:walk_index_sparsification_general}
	\begin{algorithmic}
		\STATE \!\!\!\!\textbf{Input:} {
			\begin{itemize}[leftmargin=2em]
				\vspace{-1mm}
				\item $\graph$~---~graph
				\vspace{-1mm}
				\item $L \in \N$~---~GNN depth
				\vspace{-1mm}
				\item $N \in \N$~---~number of edges to remove
				\vspace{-1mm}
				\item $\I_{1}, \ldots, \I_{M} \subseteq \vertices$~---~vertex subsets specifying walk indices to maintain for graph prediction
				\vspace{-1mm}
				\item $\J_1, \ldots, \J_{M'} \subseteq \vertices$ and $t_1, \ldots, t_{M'} \in \vertices$~---~vertex subsets specifying walk indices to maintain with respect to target vertices, for vertex prediction
				\vspace{-1mm}
				\item {\sc argmax }~---~operator over tuples $\brk{ \sbf^{(e)} \in \R^{M + M'}}_{e \in \edges}$ that returns the edge whose tuple is maximal according to some order
			\end{itemize}
		}
		\STATE \!\!\!\!\textbf{Result:} Sparsified graph obtained by removing $N$ edges from $\graph$ \\[0.4em]
		\hrule
		\vspace{2mm}
		\FOR{$n = 1, \ldots, N$}
		\vspace{0.5mm}
		\STATE \darkgray{\# for every edge, compute walk indices of partitions after the edge's removal}
		\vspace{0.5mm}
		\FOR{$e \in \edges$ (excluding self-loops)}
		\vspace{0.5mm}
		\STATE initialize $\sbf^{(e)} = \brk{0, \ldots, 0} \in \R^{M + M'}$
		\vspace{0.5mm}
		\STATE remove $e$ from $\graph$ (temporarily)
		\vspace{0.5mm}
		\STATE for every $m \in [M]$, set $\sbf^{(e)}_{m} = \walkin{L - 1}{ \I_{m} }$ ~\darkgray{\# $= \nwalk{L - 1}{ \cut_{ \I_{m} } }{ \vertices }$}		
		\vspace{0.5mm}
		\STATE for every $m \in [M']$, set $\sbf^{(e)}_{M + m} = \walkinvert{L - 1}{t_m}{ \J_m }$ ~\darkgray{\# $= \nwalk{L - 1}{ \cut_{\J_m} }{ \{ t_m\}}$}		
		\vspace{0.5mm}
		\STATE add $e$ back to $\graph$
		\vspace{0.5mm}
		\ENDFOR
		\vspace{0.5mm}
		\STATE \darkgray{\# prune edge whose removal harms walk indices the least according to the {\sc argmax} operator}
		\vspace{0.5mm}
		\STATE let $e' \in \text{\sc{argmax}}_{e \in \edges} \sbf^{(e)}$
		\vspace{0.5mm}
		\STATE \textbf{remove} $e'$ from $\graph$ (permanently)
		\vspace{0.5mm}
		\ENDFOR
	\end{algorithmic}
\end{algorithm}
	
	% SIMPLE FORM OF 1-WIS
	\section{Efficient Implementation of $1$-Walk Index Sparsification}
\label{app:one_wis_efficient}

\cref{alg:walk_index_sparsification_vertex_one} (\cref{sec:sparsification}) provides an efficient implementation for $1$-WIS, \ie~\cref{alg:walk_index_sparsification_vertex} with $L = 2$.
In this appendix, we formalize the equivalence between the two algorithms, meaning, we establish that~\cref{alg:walk_index_sparsification_vertex_one} indeed implements $1$-WIS.

Examining some iteration $n \in [N]$ of $1$-WIS, let $\sbf \in \R^{ \abs{\vertices} }$ be the tuple defined by $\sbf_t = \walkinvert{1}{t}{ \{ t\} } = \nwalk{1}{ \cut_{ \{t\} } }{ \{t\} }$ for $t \in \vertices$.
Recall that $\cut_{\{t\}}$ is the set of vertices with an edge crossing the partition induced by $\{t\}$.
Thus, if $t$ is not isolated, then $\cut_{\{t\}} = \neigh (t)$ and $\sbf_t = \walkinvert{1}{t}{ \{t\}} = \abs{ \neigh (t) }$.
Otherwise, if $t$ is isolated, then $\cut_{ \{t\} } = \emptyset$ and $\sbf_t = \walkinvert{1}{t}{ \{t\}} = 0$.
$1$-WIS computes for each $e \in \edges$ (excluding self-loops) a tuple $\sbf^{(e)} \in \R^{\abs{\vertices}}$ holding in its $t$'th entry what the value of $\walkinvert{1}{t}{ \{t\}}$ would be if $e$ is to be removed, for all $t \in \vertices$.
Notice that $\sbf^{(e)}$ and $\sbf$ agree on all entries except for $i, j \in e$, since removing $e$ from the graph only affects the degrees of $i$ and $j$.
Specifically, for $i \in e$, either $\sbf^{(e)}_i = \sbf_i - 1 = \abs{\neigh (i)} - 1$ if the removal of $e$ did not isolate $i$, or $\sbf^{(e)}_i = \sbf_i - 2 = 0$ if it did (due to self-loops, if a vertex has a single edge to another then $\abs{ \neigh (i) } = 2$, so removing that edge changes $\walkinvert{1}{i}{\{i\}}$ from two to zero).
As a result, for any $e = \brk[c]{i, j} , e' = \brk[c]{i', j'} \in \edges$, after sorting the entries of $\sbf^{(e)}$ and $\sbf^{(e')}$ in ascending order we have that $\sbf^{(e')}$ is greater in lexicographic order than $\sbf^{(e)}$ if and only if the pair $\brk{ \min \brk[c]{ \abs{\neigh (i') }, \abs{ \neigh (j') } } , \max \brk[c]{ \abs{\neigh (i') }, \abs{ \neigh (j') } }}$ is greater in lexicographic order than $\brk{ \min \brk[c]{ \abs{\neigh (i) }, \abs{ \neigh (j) } } , \max \brk[c]{ \abs{\neigh (i) }, \abs{ \neigh (j) } }}$.
Therefore, at every iteration $n \in [N]$~\cref{alg:walk_index_sparsification_vertex_one} and $1$-WIS (\cref{alg:walk_index_sparsification_vertex} with $L = 2$) remove the same edge.

	% FURTHER EXPERIMENTS AND IMPLEMENTATION DETAILS
	\section{Further Experiments and Implementation Details}
\label{app:experiments}

% FURTHER EXPERIMENTS
\subsection{Further Experiments}
\label{app:experiments:further}

\begin{figure*}[t!]
	\begin{center}
		\hspace{-2mm}
		\includegraphics[width=1\textwidth]{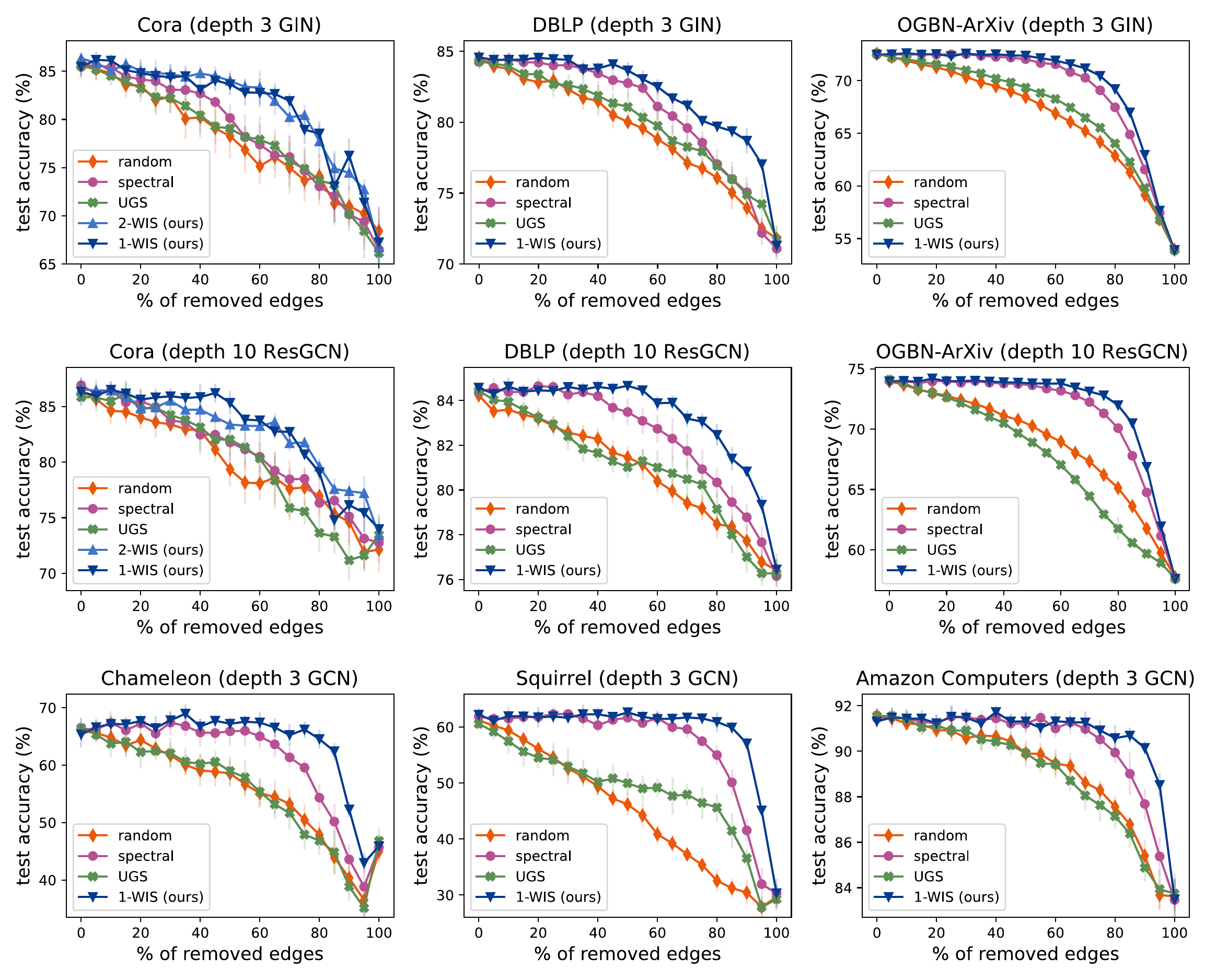}
	\end{center}
	\vspace{-4mm}
	\caption{
		Comparison of GNN accuracies following sparsification of input edges~---~WIS, the edge sparsification algorithm brought forth by our theory (\cref{alg:walk_index_sparsification_vertex}), markedly outperforms alternative methods.
		This figure supplements~\cref{fig:sparse_gcn} from~\cref{sec:sparsification:eval} by including experiments with: \emph{(i)} a depth $L = 3$ GIN over the Cora, DBLP, and OGBN-ArXiv datasets; \emph{(ii)} a depth $L = 10$ ResGCN over the Cora, DBLP, and OGBN-ArXiv datasets; and \emph{(iii)} a depth $L = 3$ GCN over the Chameleon, Squirrel, and Amazon Computers datasets.
		Markers and error bars report means and standard deviations, respectively, taken over ten runs per configuration for GCN and GIN, and over five runs per configuration for ResGCN (we use fewer runs due to the larger size of ResGCN).
		For further details see caption of~\cref{fig:sparse_gcn} as well as~\cref{app:experiments:details}.
	}
	\label{fig:sparse_gin}
\end{figure*}

\cref{fig:sparse_gin} supplements~\cref{fig:sparse_gcn} from~\cref{sec:sparsification:eval} by including experiments with additional: \emph{(i)} GNN architectures~---~GIN and ResGCN; and \emph{(ii)} datasets~---~Chameleon, Squirrel, and Amazon Computers.

% IMPLEMENTATION DETAILS
\subsection{Further Implementation Details}
\label{app:experiments:details}

We provide implementation details omitted from our experimental reports (\cref{sec:analysis:experiments},~\cref{sec:sparsification}, and~\cref{app:experiments:further}).
Source code for reproducing our results and figures, based on the PyTorch~\citep{paszke2017automatic} and PyTorch Geometric~\citep{fey2019fast} frameworks,\ifdefined\CAMREADY
~can be found at \url{https://github.com/noamrazin/gnn_interactions}.
\else
~is attached as supplementary material and will be made publicly available.
\fi
All experiments were run either on a single Nvidia RTX 2080 Ti GPU or a single Nvidia RTX A6000 GPU.

\subsubsection{Empirical Demonstration of Theoretical Analysis (\cref{tab:low_vs_high_walkindex})}
\label{app:experiments:details:analysis}

\paragraph*{Models}
All models used, \ie~GCN, GAT, and GIN, had three layers of width~$16$ with ReLU non-linearity.
To ease optimization, we added layer normalization~\citep{ba2016layer} after each one.
Mean aggregation and a linear output layer were applied over the last hidden embeddings for prediction.
As in the synthetic experiments of~\citet{alon2021bottleneck}, each GAT layer consisted of four attention heads.
Each GIN layer had its $\epsilon$ parameter fixed to zero and contained a two-layer feed-forward network, whose layers comprised a linear layer, batch normalization~\citep{ioffe2015batch}, and ReLU non-linearity.

\paragraph*{Data}
The datasets consisted of $10000$ train and $2000$ test graphs.
For every graph, we drew uniformly at random a label from $\{0, 1\}$ and an image from Fashion-MNIST.
Then, depending on the chosen label, another image was sampled either from the same class (for label $1$) or from all other classes (for label $0$).
We extracted patches of pixels from each image by flattening it into a vector and splitting the vector to $16$ equally sized segments.

\paragraph*{Optimization}
The binary cross-entropy loss was minimized via the Adam optimizer~\citep{kingma2014adam} with default $\beta_1, \beta_2$ coefficients and full-batches (\ie~every batch contained the whole training set).
Optimization proceeded until the train accuracy did not improve by at least $0.01$ over $1000$ consecutive epochs or $10000$ epochs elapsed.
The learning rates used for GCN, GAT, and GIN were $5 \cdot 10^{-3}, 5 \cdot 10^{-3},$ and $10^{-2}$, respectively.

\paragraph*{Hyperparameter tuning}
For each model separately, to tune the learning rate we carried out five runs (differing in random seed) with every value in the range $\{ 10^{-1}, 5 \cdot 10^{-2}, 10^{-2}, 5 \cdot 10^{-3}, 10^{-3} \}$ over the dataset whose essential partition has low walk index.
Since our interest resides in expressivity, which manifests in ability to fit the training set, for every model we chose the learning rate that led to the highest mean train accuracy.

\subsubsection{Edge Sparsification (\cref{fig:sparse_gcn,fig:sparse_gin})}
\label{app:experiments:details:sparsification}

\paragraph*{Adaptations to UGS~\citep{chen2021unified}}
\citet{chen2021unified} proposed UGS as a framework for jointly pruning input graph edges and weights of a GNN.
At a high-level, UGS trains two differentiable masks, $\boldsymbol{m}_g$ and $\boldsymbol{m}_\theta$, that are multiplied with the graph adjacency matrix and the GNN's weights, respectively.
Then, after a certain number of optimization steps, a predefined percentage $p_g$ of graph edges are removed according to the magnitudes of entries in~$\boldsymbol{m}_g$, and similarly, $p_\theta$ percent of the GNN's weights are fixed to zero according to the magnitudes of entries in~$\boldsymbol{m}_\theta$.
This procedure continues in iterations, where each time the remaining GNN weights are rewinded to their initial values, until the desired sparsity levels are attained~---~see Algorithms~1 and~2 in~\citet{chen2021unified}.
To facilitate a fair comparison of our $(L - 1)$-WIS edge sparsification algorithm with UGS, we make the following adaptations to UGS.
\begin{itemize}[leftmargin=2em]
	\vspace{-2mm}
	\item We adapt UGS to only remove edges, which is equivalent to fixing the entries in the weight mask $\boldsymbol{m}_\theta$ to one and setting $p_\theta = 0$ in Algorithm~1 of~\citet{chen2021unified}.
	
	\item For comparing performance across a wider range of sparsity levels, the number of edges removed at each iteration is changed from $5\%$ of the current number of edges to $5\%$ of the original number of edges.
	
	\item Since our evaluation focuses on undirected graphs, we enforce the adjacency matrix mask $\boldsymbol{m}_g$ to be symmetric.
\end{itemize}

\paragraph*{Spectral sparsification~\citep{spielman2011graph}}
For Cora and DBLP, we used a Python implementation of the spectral sparsification algorithm from~\citet{spielman2011graph}, based on the PyGSP library implementation.\footnote{
See \url{https://github.com/epfl-lts2/pygsp/}.
}
To enable more efficient experimentation over the larger scale OGBN-ArXiv dataset, we used a Julia implementation based on that from the Laplacians library.\footnote{
See \url{https://github.com/danspielman/Laplacians.jl}.
}

\paragraph*{Models}
The GCN and GIN models had three layers of width~$64$ with ReLU non-linearity.
As in the experiments of~\cref{sec:analysis:experiments}, we added layer normalization~\citep{ba2016layer} after each one.
Every GIN layer had a trainable $\epsilon$ parameter and contained a two-layer feed-forward network, whose layers comprised a linear layer, batch normalization~\citep{ioffe2015batch}, and ReLU non-linearity.
For ResGCN, we used the implementation from~\citet{chen2021unified} with ten layers of width~$64$.
In all models, a linear output layer was applied over the last hidden embeddings for prediction.

\paragraph*{Data}
All datasets in our evaluation are multi-class vertex prediction tasks, each consisting of a single graph.
In Cora, DBLP, and OGBN-ArXiv, vertices represent scientific publications and edges stand for citation links.
In Chameleon and Squirrel, vertices represent web pages on Wikipedia and edges stand for mutual links between pages.
In Amazon Computers, vertices represent products and edges indicate that two products are frequently bought together.
For simplicity, we treat all graphs as undirected.
\cref{tab:dataset_sizes}~reports the number of vertices and undirected edges in each dataset.
For all datasets, except OGBN-ArXiv, we randomly split the labels of vertices into train, validation, and test sets comprising $80\%, 10\%,$ and $10\%$ of all labels, respectively.
For OGBN-ArXiv, we used the default split from~\citet{hu2020open}.

\begin{table}[H]
	\caption{
		Graph size of each dataset used for comparing edge sparsification algorithms in~\cref{fig:sparse_gcn,fig:sparse_gin}.
	}
	\label{tab:dataset_sizes}
	\vspace{-2mm}
	\begin{center}
		\begin{small}
			\begin{tabular}{lcc}
				\toprule
				& \# of Vertices & \# of Undirected Edges \\
				\midrule
				Cora & $2,\!708$ & $5,\!278$ \\
				DBLP & $17,\!716$ & $52,\!867$ \\
				OGBN-ArXiv & $169,\!343$ & $1,\!157,\!799$ \\
				Chameleon & $2,\!277$ & $31,\!396$ \\
				Squirrel & $5,\!201$ & $198,\!423$ \\
				Amazon Computers & $13,\!381$ & $245,\!861$ \\
				\bottomrule
			\end{tabular}
		\end{small}
	\end{center}
	\vskip -0.1in
\end{table}

\paragraph*{Optimization}
The cross-entropy loss was minimized via the Adam optimizer~\citep{kingma2014adam} with default $\beta_1, \beta_2$ coefficients and full-batches (\ie~every batch contained the whole training set).
Optimization proceeded until the validation accuracy did not improve by at least $0.01$ over $1000$ consecutive epochs or $10000$ epochs elapsed.
The test accuracies reported in~\cref{fig:sparse_gcn} are those achieved during the epochs with highest validation accuracies. 
\cref{tab:edge_sparse_hyperparam} specifies additional optimization hyperparameters.

\begin{table}[t]
	\caption{
			Optimization hyperparameters used in the experiments of~\cref{fig:sparse_gcn,fig:sparse_gin} per model and dataset.
		}
	\label{tab:edge_sparse_hyperparam}
	\vspace{-2mm}
	\begin{center}
			\begin{small}
					\begin{tabular}{llccc}
							\toprule
							& & Learning Rate & Weight Decay & Edge Mask $\ell_1$ Regularization of UGS \\
							\midrule
							\multirow{6}{*}{GCN}& Cora & $5 \cdot 10^{-4}$ & $10^{-3}$ & $10^{-2}$ \\
							& DBLP & $10^{-3}$ & $10^{-4}$ & $10^{-2}$ \\
							& OGBN-ArXiv & $10^{-3}$ & $0$ & $10^{-2}$ \\
							& Chameleon & $10^{-3}$ & $10^{-4}$  & $10^{-2}$ \\
							& Squirrel & $5 \cdot 10^{-4}$ & $0$  & $10^{-4}$ \\
							& Amazon Computers & $10^{-3}$ & $10^{-4}$  & $10^{-2}$ \\
							\midrule
							\multirow{3}{*}{GIN}& Cora & $10^{-3}$ & $10^{-3}$ & $10^{-2}$ \\
							& DBLP & $10^{-3}$ & $10^{-3}$ & $10^{-2}$ \\
							& OGBN-ArXiv & $10^{-4}$ & $0$ & $10^{-2}$ \\		
							\midrule
							\multirow{3}{*}{ResGCN}& Cora & $5 \cdot 10^{-4}$ & $10^{-3}$  & $10^{-4}$  \\
							& DBLP & $5 \cdot 10^{-4}$ & $10^{-4}$  & $10^{-4}$ \\
							& OGBN-ArXiv & $10^{-3}$ & $0$ & $10^{-2}$ \\
							\bottomrule
						\end{tabular}
				\end{small}
		\end{center}
	\vskip -0.1in
\end{table}

\paragraph*{Hyperparameter tuning}
For each combination of model and dataset separately, we tuned the learning rate, weight decay coefficient, and edge mask $\ell_1$ regularization coefficient for UGS, and applied the chosen values for evaluating all methods without further tuning (note that the edge mask $\ell_1$ regularization coefficient is relevant only for UGS).
In particular, we carried out a grid search over learning rates $\{ 10^{-3}, 5 \cdot 10^{-4}, 10^{-4} \}$, weight decay coefficients $\{ 10^{-3}, 10^{-4}, 0\}$, and edge mask $\ell_1$ regularization coefficients $\{ 10^{-2}, 10^{-3}, 10^{-4} \}$.
Per hyperparameter configuration, we ran ten repetitions of UGS (differing in random seed), each until all of the input graph's edges were removed.
At every edge sparsity level ($0\%, 5\%, 10\%, \ldots, \100\%$), in accordance with~\citet{chen2021unified}, we trained a new model with identical hyperparameters, but a fixed edge mask, over each of the ten graphs.
We chose the hyperparameters that led to the highest mean validation accuracy, taken over the sparsity levels and ten runs.

Due to the size of the ResGCN model, tuning its hyperparameters entails significant computational costs.
Thus, over the Cora and DBLP datasets, per hyperparameter configuration we ran five repetitions of UGS with ResGCN instead of ten.
For the large-scale OGBN-ArXiv dataset, we adopted the same hyperparameters used for GCN.

\paragraph*{Other}
To allow more efficient experimentation, we compute the edge removal order of $2$-WIS (\cref{alg:walk_index_sparsification_vertex}) in batches of size $100$.
Specifically, at each iteration of $2$-WIS, instead of removing the edge $e'$ with maximal walk index tuple $\sbf^{(e')}$, the $100$ edges with largest walk index tuples are removed.
For randomized edge sparsification algorithms~---~random pruning, the spectral sparsification method of~\citet{spielman2011graph}, and the adaptation of UGS~\citep{chen2021unified}~---~the evaluation runs for a given dataset and percentage of removed edges were carried over sparsified graphs obtained using different random seeds.

	% PROOFS
	%%%% DEFERRED PROOFS
\section{Deferred Proofs}
\label{app:proofs}

%%%% ADDITIONAL NOTATION
\subsection{Additional Notation}
\label{app:proofs:notation}

For vectors, matrices, or tensors, parenthesized superscripts denote elements in a collection, \eg~$\brk{\abf^{(i)} \in \R^D }_{n = 1}^N$, while subscripts refer to entries, \eg~$\Abf_{d_1, d_2} \in \R$ is the $(d_1, d_2)$'th entry of $\Abf \in \R^{D_1 \times D_2}$.
A colon is used to indicate a range of entries, \eg~$\abf_{:d}$ is the first $d$ entries of $\abf \in \R^D$.
We use $\contractlast$ to denote tensor contractions (\cref{def:tensor_contraction}),~$\kronp$ to denote the Kronecker product, and~$\hadmp$ to denote the Hadamard product.
For $P \in \N_{\geq 0}$ , the $P$'th Hadamard power operator is denoted by $\hadmp^P$, \ie~$\brk[s]{ \hadmp^P \Abf }_{d_1, d_2} = \Abf_{d_1, d_2}^P$ for $\Abf \in \R^{D_1 \times D_2}$.
Lastly, when enumerating over sets of indices an ascending order is assumed.

%%%%%% SEPARATION RANK UPPER BOUND PROOF
\subsection{Proof of~\cref{thm:sep_rank_upper_bound}}
\label{app:proofs:sep_rank_upper_bound}

We assume familiarity with the basic concepts from tensor analysis introduced in~\cref{app:prod_gnn_as_tn:tensors}, and rely on the tensor network representations established for GNNs with product aggregation in~\cref{app:prod_gnn_as_tn}.
Specifically, we use the fact that for any $\fmat = \brk{\fvec{1}, \ldots, \fvec{\abs{\vertices}} } \in \R^{\indim \times \abs{\vertices} }$ there exist tree tensor networks $\tngraph (\fmat)$ and $\tngraph^{(t)} (\fmat)$ (described in~\cref{app:prod_gnn_as_tn:correspondence} and formally defined in~\cref{eq:graphtensor_tn,eq:vertextensor_tn}) such that: \emph{(i)}~their contraction yields $\funcgraph{\params}{\graph} (\fmat)$ and $\funcvert{\params}{\graph}{t} (\fmat)$, respectively (\cref{prop:tn_constructions}); and \emph{(ii)}~each of their leaves is associated with a vertex feature vector, \ie~one of $\fvec{1}, \ldots, \fvec{ \abs{\vertices} }$, whereas all other aspects of the tensor networks do not depend on $\fvec{1}, \ldots, \fvec{ \abs{\vertices} }$.

The proof proceeds as follows.
In~\cref{app:proofs:sep_rank_upper_bound:min_cut_tn_ub}, by importing machinery from tensor analysis literature (in particular, adapting Claim~7 from~\citet{levine2018deep}), we show that the separation ranks of $\funcgraph{\params}{\graph}$ and $\funcvert{\params}{\graph}{t}$ can be upper bounded via cuts in their corresponding tensor networks.
Namely, $\sepranknoflex{\funcgraph{\params}{\graph} }{\I}$ is at most the minimal multiplicative cut weight in $\tngraph (\fmat)$, among cuts separating leaves associated with vertices of the input graph in $\I$ from leaves associated with vertices of the input graph in $\I^c$, where multiplicative cut weight refers to the product of weights belonging to legs crossing the cut.
Similarly, $\sepranknoflex{ \funcvert{\params}{\graph}{t} }{\I}$ is at most the minimal multiplicative cut weight in $\tngraph^{(t)} (\fmat)$, among cuts of the same form.
We conclude in~\cref{app:proofs:sep_rank_upper_bound:graph,app:proofs:sep_rank_upper_bound:vertex} by applying this technique for upper bounding $\sepranknoflex{ \funcgraph{\params}{\graph} }{\I}$ and $\sepranknoflex{ \funcvert{\params}{\graph}{t} }{ \I}$, respectively, \ie~by finding cuts in the respective tensor networks with sufficiently low multiplicative weights.

\subsubsection{Upper Bounding Separation Rank via Multiplicative Cut Weight in Tensor Network}
\label{app:proofs:sep_rank_upper_bound:min_cut_tn_ub}

In a tensor network $\tngraph = \brk{ \tnvertices{\tngraph}, \tnedges{\tngraph}, \tnedgeweights{\tngraph} }$, every $\J_{\tngraph} \subseteq \tnvertices{\tngraph}$ induces a \emph{cut} $(\J_{\tngraph} , \J_{\tngraph}^c )$, \ie~a partition of the nodes into two sets.
We denote by $\tnedges{\tngraph} (\J_\tngraph) := \brk[c]{ \brk[c]{ u, v} \in \tnedges{\tngraph} : u \in \J_\tngraph , v \in \J_\tngraph^c }$ the set of legs crossing the cut, and define the \emph{multiplicative cut weight} of $\J_{\tngraph}$ to be the product of weights belonging to legs in $\tnedges{\tngraph} (\J_\tngraph)$,~\ie:
\[
\mulcutweight{\tngraph} ( \J_{\tngraph} ) := \prod\nolimits_{ e \in \tnedges{\tngraph} (\J_{\tngraph}) } \tnedgeweights{\tngraph} (e)
\text{\,.}
\]
For $\fmat = \brk{\fvec{1}, \ldots, \fvec{\abs{\vertices}} } \in \R^{\indim \times \abs{\vertices} }$, let $\tngraph (\fmat)$ and $\tngraph^{(t)} (\fmat)$ be the tensor networks corresponding to $\funcgraph{\params}{\graph} (\fmat)$ and $\funcvert{\params}{\graph}{t} (\fmat)$ (detailed in~\cref{app:prod_gnn_as_tn:correspondence}), respectively.
Both $\tngraph (\fmat)$ and $\tngraph^{(t)} (\fmat)$ adhere to a tree structure.
Each leaf node is associated with a vertex feature vector (\ie~one of $\fvec{1}, \ldots, \fvec{\abs{\vertices}}$), while interior nodes are associated with weight matrices or $\delta$-tensors.
The latter are tensors with modes of equal dimension holding ones on their hyper-diagonal and zeros elsewhere.
The restrictions imposed by $\delta$-tensors induce a modified notion of multiplicative cut weight, where legs incident to the same $\delta$-tensor only contribute once to the weight product (note that weights of legs connected to the same $\delta$-tensor are equal since they stand for mode dimensions).

\begin{definition}
\label{def:modified_mul_cut_weight}
For a tensor network $\tngraph = \brk{ \tnvertices{\tngraph}, \tnedges{\tngraph}, \tnedgeweights{\tngraph} }$ and subset of nodes $\J_{\tngraph} \subseteq \tnvertices{\tngraph}$, let $\tnedges{\tngraph} (\J_{\tngraph})$ be the set of edges crossing the cut $(\J_{\tngraph}, \J_{\tngraph}^c)$.
Denote by $\modcutedges{\tngraph} (\J_{\tngraph}) \subseteq \tnedges{\tngraph} (\J_{\tngraph})$ a subset of legs containing for each $\delta$-tensor in $\tnvertices{\tngraph}$ only a single leg from $\tnedges{\tngraph} (\J_{\tngraph})$ incident to it, along with all legs in $\tnedges{\tngraph} (\J_{\tngraph}) $ not connected to $\delta$-tensors.
Then, the \emph{modified multiplicative cut weight} of $\J_{\tngraph}$ is:
\[
\modcutweight{\tngraph} (\J_{\tngraph}) := \prod\nolimits_{ e \in \modcutedges{\tngraph} (\J_{\tngraph}) } \tnedgeweights{\tngraph} (e)
\text{\,.}
\]
\end{definition}

\cref{lem:min_cut_tn_sep_rank_ub} establishes that $\sepranknoflex{ \funcgraph{\params}{\graph} }{\I}$ and $\sepranknoflex{ \funcvert{\params}{\graph}{t} }{ \I}$ are upper bounded by the minimal modified multiplicative cut weights in $\tngraph (\fmat)$ and $\tngraph^{(t)} (\fmat)$, respectively, among cuts separating leaves associated with vertices in $\I$ from leaves associated vertices in $\I^c$.

\begin{lemma}
\label{lem:min_cut_tn_sep_rank_ub}
For any $\fmat = \brk{\fvec{1}, \ldots, \fvec{\abs{\vertices}} } \in \R^{\indim \times \abs{\vertices} }$, let $\tngraph (\fmat) = \brk{ \tnvertices{\tngraph (\fmat)}, \tnedges{\tngraph (\fmat)}, \tnedgeweights{\tngraph (\fmat)} }$ and $\tngraph^{(t)} (\fmat) = \brk{ \tnvertices{\tngraph^{(t)} (\fmat)}, \tnedges{\tngraph^{(t)} (\fmat)}, \tnedgeweights{\tngraph^{(t)} (\fmat)} }$ be the tensor network representations of $\funcgraph{\params}{\graph} (\fmat)$ and $\funcvert{\params}{\graph}{t} (\fmat)$ (described in~\cref{app:prod_gnn_as_tn:correspondence} and formally defined in~\cref{eq:graphtensor_tn,eq:vertextensor_tn}), respectively.
Denote by $\tnverticesleaf{ \tngraph (\fmat) }{\I} \subseteq \tnvertices{ \tngraph (\fmat) }$ and $\tnverticesleaf{ \tngraph^{(t)} (\fmat) }{\I} \subseteq \tnvertices{ \tngraph^{(t)} (\fmat) }$ the sets of leaf nodes in $\tngraph (\fmat)$ and $\tngraph^{(t)} (\fmat)$, respectively, associated with vertices in $\I$ from the input graph $\graph$.
Formally:
\[
\begin{split}
\tnverticesleaf{ \tngraph (\fmat) }{\I} & := \brk[c]1{ \fvecdup{i}{\gamma} \in \tnvertices{ \tngraph (\fmat) } : i \in \I , \gamma \in [ \nwalk{L}{ \{i\} }{ \vertices} ] } \text{\,,} \\[0.5em]
\tnverticesleaf{ \tngraph^{(t)} (\fmat) }{\I} & := \brk[c]1{ \fvecdup{i}{\gamma} \in \tnvertices{ \tngraph^{(t)} (\fmat) } : i \in \I , \gamma \in [ \nwalk{L}{ \{i\} }{ \{t\}} ] } \text{\,.}
\end{split}
\]
Similarly, denote by $\tnverticesleaf{ \tngraph (\fmat) }{\I^c} \subseteq \tnvertices{ \tngraph (\fmat) }$ and $\tnverticesleaf{ \tngraph^{(t)} (\fmat) }{\I^c} \subseteq \tnvertices{ \tngraph^{(t)} (\fmat) }$ the sets of leaf nodes in $\tngraph (\fmat)$ and $\tngraph^{(t)} (\fmat)$, respectively, associated with vertices in $\I^c$.
Then, the following hold:
\begin{align}
	\text{(graph prediction)} \quad & \seprankbig{ \funcgraph{\params}{\graph} }{\I}
	\leq
	\min_{ \substack{ \J_{ \tngraph (\fmat) } \subseteq \tnvertices{ \tngraph (\fmat) } \\[0.2em] \text{s.t. } \tnverticesleaf{ \tngraph (\fmat) }{\I} \subseteq \J_{ \tngraph (\fmat) } \text{ and } \tnverticesleaf{ \tngraph (\fmat) }{\I^c} \subseteq \J^c_{\tngraph (\fmat)} } } \!\!\!\!\!\!\!\!\!\!\!\!\!\!\! \modcutweight{\tngraph (\fmat)} ( \J_{ \tngraph (\fmat) } )
	\text{\,,}
	\label{eq:sep_rank_min_cut_bound_graph} \\[0.5em]
	\text{(vertex prediction)} \quad & \seprankbig{ \funcvert{\params}{\graph}{ t } }{\I} 
	\leq
	\min_{ \substack{ \J_{ \tngraph^{(t)} (\fmat) } \subseteq \tnvertices{ \tngraph^{(t)} (\fmat) } \\[0.2em] \text{s.t. } \tnverticesleaf{ \tngraph^{(t)} (\fmat) }{\I} \subseteq \J_{ \tngraph^{(t)} (\fmat) } \text{ and } \tnverticesleaf{ \tngraph^{(t)} (\fmat) }{\I^c} \subseteq \J^c_{\tngraph^{(t)} (\fmat)} } } \!\!\!\!\!\!\!\!\!\!\!\!\!\!\! \modcutweight{\tngraph^{(t)} (\fmat)} ( \J_{ \tngraph^{(t)} (\fmat) } )	
	\text{\,,}
\label{eq:sep_rank_min_cut_bound_vertex}
\end{align}
where $\modcutweight{\tngraph (\fmat)} ( \J_{ \tngraph (\fmat) } )$ is the modified multiplicative cut weight of $\J_{ \tngraph (\fmat) }$ in $\tngraph (\fmat)$ and $\modcutweight{\tngraph^{(t)} (\fmat)} ( \J_{ \tngraph^{(t)} (\fmat) } )$ is the modified multiplicative cut weight of $\J_{ \tngraph^{(t)} (\fmat) }$ in $\tngraph^{(t)} (\fmat)$ (\cref{def:modified_mul_cut_weight}).
\end{lemma}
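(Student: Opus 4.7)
\textbf{Proof plan for \cref{lem:min_cut_tn_sep_rank_ub}.} The plan is to adapt the standard ``cut bounds separation rank'' argument for tree tensor networks (cf.~Claim~7 in~\citet{levine2018deep}) to the specific networks $\tngraph (\fmat)$ and $\tngraph^{(t)} (\fmat)$ constructed in~\cref{app:prod_gnn_as_tn:correspondence}. Fix any $\J_{\tngraph (\fmat)} \subseteq \tnvertices{\tngraph (\fmat)}$ that contains all leaves in $\tnverticesleaf{\tngraph (\fmat)}{\I}$ and whose complement contains all leaves in $\tnverticesleaf{\tngraph (\fmat)}{\I^c}$. The first step is to contract the network in two stages: contract all nodes of $\J_{\tngraph (\fmat)}$ among themselves, leaving the legs in $\tnedges{\tngraph (\fmat)}(\J_{\tngraph (\fmat)})$ open, to obtain a tensor $T_{\J}$; similarly contract $\J_{\tngraph (\fmat)}^c$ to obtain $T_{\J^c}$. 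Since the only leaves on the $\J$ side are associated with $\fvec{i}$ for $i \in \I$, the tensor $T_{\J}$ is a function of $\fmat_{\I}$ alone, and symmetrically $T_{\J^c}$ depends only on $\fmat_{\I^c}$. By~\cref{prop:tn_constructions}, contracting $T_\J$ with $T_{\J^c}$ along the crossing legs recovers $\funcgraph{\params}{\graph}(\fmat)$.

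Enumerate the legs in $\tnedges{\tngraph (\fmat)}(\J_{\tngraph (\fmat)})$ by $e_1, \ldots, e_K$ with weights $w_k := \tnedgeweights{\tngraph (\fmat)}(e_k)$. Writing the contraction explicitly,
\[
\funcgraph{\params}{\graph}(\fmat) \;=\; \sum_{d_1 = 1}^{w_1} \cdots \sum_{d_K = 1}^{w_K} \bigl[ T_{\J}(\fmat_{\I}) \bigr]_{d_1, \ldots, d_K} \cdot \bigl[ T_{\J^c}(\fmat_{\I^c}) \bigr]_{d_1, \ldots, d_K},
\]
which is a sum of at most $\prod_{k=1}^{K} w_k = \mulcutweight{\tngraph (\fmat)}(\J_{\tngraph (\fmat)})$ products of a function of $\fmat_{\I}$ and a function of $\fmat_{\I^c}$. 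By \cref{def:sep_rank}, this already yields $\sepranknoflex{\funcgraph{\params}{\graph}}{\I} \leq \mulcutweight{\tngraph (\fmat)}(\J_{\tngraph (\fmat)})$.

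To tighten the bound to $\modcutweight{\tngraph (\fmat)}(\J_{\tngraph (\fmat)})$, I would exploit the hyper-diagonal structure of the $\delta$-tensors. Consider a $\delta$-tensor node $\deltatensor{N'}$ lying in (say) $\J_{\tngraph (\fmat)}$ with $m \geq 2$ of its legs crossing the cut. Because $\deltatensor{N'}$ vanishes off the hyper-diagonal, any nonzero summand in the expansion above forces the corresponding $m$ crossing indices to take a common value in $[\hdim]$. Consequently the $m$-fold summation over those indices collapses to a single summation over $\hdim$ values, and $T_{\J}$ can be rewritten with only one free index per such $\delta$-tensor rather than $m$. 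Performing this collapse independently for every $\delta$-tensor with crossing legs (on either side of the cut) turns the effective number of summands from $\mulcutweight{\tngraph (\fmat)}(\J_{\tngraph (\fmat)})$ into exactly $\modcutweight{\tngraph (\fmat)}(\J_{\tngraph (\fmat)})$ as defined in~\cref{def:modified_mul_cut_weight}, establishing~\cref{eq:sep_rank_min_cut_bound_graph}. Taking the minimum over admissible $\J_{\tngraph (\fmat)}$ completes the graph prediction case; the vertex prediction bound~\cref{eq:sep_rank_min_cut_bound_vertex} follows by the identical argument applied to $\tngraph^{(t)}(\fmat)$.

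The main obstacle I anticipate is the careful bookkeeping around the $\delta$-tensor collapse, particularly ensuring that $\delta$-tensors on both sides of the cut are each accounted for exactly once in the modified weight, and that no crossing leg is over- or under-counted when several $\delta$-tensors share crossing legs in intricate configurations. A clean way to handle this is to iterate over the $\delta$-tensors, absorbing the collapsed summations one at a time into a reindexing of $T_\J$ and $T_{\J^c}$, and tracking at each step the reduction from the product of raw leg weights to $\modcutweight{\tngraph (\fmat)}(\J_{\tngraph (\fmat)})$ as specified in~\cref{def:modified_mul_cut_weight}.
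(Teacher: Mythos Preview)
Your argument is correct and self-contained, but it follows a different route from the paper. The paper does not split the contraction across the cut directly. Instead, it first contracts \emph{all} interior nodes of $\tngraph(\fmat)$ (those not holding feature vectors) into a single fixed tensor $\tensorendgraph$ of order $\nwalk{L}{\vertices}{\vertices}$, independent of $\fmat$. It then writes $\funcgraph{\params}{\graph}(\fmat)$ as a bilinear form $(\kronp_{n\in\mu^{-1}(\I)}\fvec{\mu(n)})^\top\,\mat{\tensorendgraph}{\mu^{-1}(\I)}\,(\kronp_{n\in\mu^{-1}(\I^c)}\fvec{\mu(n)})$, from which $\sepranknoflex{\funcgraph{\params}{\graph}}{\I}\leq\rank\mat{\tensorendgraph}{\mu^{-1}(\I)}$ follows by a rank-one decomposition of the matricization. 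The cut-weight bound on this matricization rank is then imported wholesale from Claim~7 of~\citet{levine2018deep}, so the $\delta$-tensor collapse you carry out by hand is delegated to that reference.

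Your approach has the advantage of being elementary and transparent about \emph{why} the $\delta$-tensors reduce the count (the hyper-diagonal forces equal indices across crossing legs), and it avoids the detour through a global coefficient tensor and its matricization. The paper's approach is more modular: by reducing to $\rank\mat{\tensorendgraph}{\mu^{-1}(\I)}$ it isolates a quantity that depends only on the network weights and graph structure (not on $\fmat$), and then invokes an off-the-shelf tensor-network lemma rather than reproving the collapse. One small imprecision in your write-up: after the $\delta$-collapse the number of summands is \emph{at most} $\modcutweight{\tngraph(\fmat)}(\J_{\tngraph(\fmat)})$ rather than exactly that, but since you only need an upper bound this is harmless.
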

\begin{proof}
We first prove~\cref{eq:sep_rank_min_cut_bound_graph}.
Examining $\tngraph (\fmat)$, notice that: \emph{(i)}~by~\cref{prop:tn_constructions} its contraction yields $\funcgraph{\params}{\graph} (\fmat)$;~\emph{(ii)} it has a tree structure; and~\emph{(iii)}~each of its leaves is associated with a vertex feature vector, \ie~one of $\fvec{1}, \ldots, \fvec{ \abs{\vertices} }$, whereas all other aspects of the tensor network do not depend on $\fvec{1}, \ldots, \fvec{ \abs{\vertices} }$.
Specifically, for any $\fmat$ and $\fmat'$ the nodes, legs, and leg weights of $\tngraph (\fmat)$ and $\tngraph (\fmat')$ are identical, up to the assignment of features in the leaf nodes.
Let $\tensorendgraph \in \R^{ \indim \times \cdots \times \indim }$ be the order $\nwalk{L}{ \vertices }{ \vertices }$ tensor obtained by contracting all interior nodes in $\tngraph (\fmat)$.
The above implies that we may write $\funcgraph{\params}{\graph} (\fmat)$ as a contraction of $\tensorendgraph$ with $\fvec{1}, \ldots, \fvec{ \abs{ \vertices } }$.
Specifically, it holds that:
\be
\funcgraph{\params}{\graph} (\fmat) = \tensorendgraph \contract{n \in [\nwalk{L}{\vertices}{\vertices}]} \fvec{ \modemapgraph (n) }
\text{\,,}
\label{eq:funcgraph_as_contraction_sep_rank}
\ee
for any $\fmat = \brk{\fvec{1}, \ldots, \fvec{\abs{\vertices}} } \in \R^{\indim \times \abs{\vertices} }$, where $\modemapgraph : [ \nwalk{L}{\vertices}{\vertices} ] \to \vertices$ maps a mode index of $\tensorendgraph$ to the appropriate vertex of $\graph$ according to $\tngraph (\fmat)$.
Let $\mu^{-1} ( \I ) := \brk[c]{ n \in [ \nwalk{L}{\vertices}{\vertices} ] : \mu (n) \in \I }$ be the mode indices of $\tensorendgraph$ corresponding to vertices in~$\I$.
Invoking~\cref{lem:contraction_matricization} leads to the following matricized form of~\cref{eq:funcgraph_as_contraction_sep_rank}:
\[
\funcgraph{\params}{\graph} (\fmat) = \brk1{ \kronp_{ n \in \mu^{-1} (\I) } \fvec{\mu (n)} }^\top \mat{ \tensorendgraph }{ \mu^{-1} (\I)} \brk1{ \kronp_{n \in \mu^{-1} ( \I^{c} ) } \fvec{ \mu (n) } }
\text{\,,}
\]
where $\kronp$ denotes the Kronecker product.

We claim that $\sepranknoflex{ \funcgraph{\params}{\graph} }{\I} \leq \rank \mat{ \tensorendgraph }{ \mu^{-1} (\I)}$.
To see it is so, denote $R := \rank \mat{ \tensorendgraph }{ \mu^{-1} (\I)}$ and let $\ubf^{(1)}, \ldots, \ubf^{(R)} \in \R^{ \indim^{ \nwalk{L}{\I}{\vertices} } }$ and $\bar{\ubf}^{(1)}, \ldots, \bar{\ubf}^{(R)} \in \R^{ \indim^{ \nwalk{L}{\I^c}{\vertices} } }$ be such that $\mat{ \tensorendgraph }{ \mu^{-1} (\I)} = \sum\nolimits_{r = 1}^R \ubf^{(r)} \brk{ \bar{\ubf}^{(r)} }^\top$.
Then, defining $g^{(r)} : ( \R^{\indim} )^{ \abs{\I } } \to \R$ and $\bar{g}^{(r)} : ( \R^{\indim} )^{ \abs{ \I^c } } \to \R$, for $r \in [R]$, as:
\[
g^{(r)} ( \fmat_{\I} ) := \inprod{ \kronp_{ n \in \mu^{-1} (\I) } \fvec{\mu (n)} }{ \ubf^{(r)} } \quad , \quad \bar{g}^{(r)} (\fmat_{ \I^c} ) := \inprod{ \kronp_{n \in \mu^{-1} ( \I^{c} ) } \fvec{ \mu (n) } }{ \bar{\ubf}^{(r)} }
\text{\,,}
\]
where $\fmat_\I := \brk{ \fvec{i} }_{i \in \I}$ and $\fmat_{\I^c} := \brk{ \fvec{j} }_{j \in \I^c}$, we have that:
\[
\begin{split}
\funcgraph{\params}{\graph} (\fmat) & = \brk1{ \kronp_{ n \in \mu^{-1} (\I) } \fvec{\mu (n)} }^\top \brk2{ \sum\nolimits_{r = 1}^R \ubf^{(r)} \brk1{ \bar{\ubf}^{(r)} }^\top } \brk1{ \kronp_{n \in \mu^{-1} ( \I^{c} ) } \fvec{ \mu (n) } } \\
& = \sum\nolimits_{r = 1}^R \inprod{ \kronp_{ n \in \mu^{-1} (\I) } \fvec{\mu (n)} }{ \ubf^{(r)} } \cdot \inprod{ \kronp_{n \in \mu^{-1} ( \I^{c} ) } \fvec{ \mu (n) } }{ \bar{\ubf}^{(r)} } \\
& = \sum\nolimits_{r = 1}^R g^{(r)} ( \fmat_{\I} ) \cdot \bar{g}^{(r)} (\fmat_{ \I^c} )
\text{\,.}
\end{split}
\]
Since $\sepranknoflex{ \funcgraph{\params}{\graph} }{\I}$ is the minimal number of summands in a representation of this form of $\funcgraph{\params}{\graph}$, indeed, $\sepranknoflex{ \funcgraph{\params}{\graph} }{\I} \leq R = \rank \mat{ \tensorendgraph }{ \mu^{-1} (\I)}$.

What remains is to apply~Claim~7 from~\citet{levine2018deep}, which upper bounds the rank of a tensor's matricization with multiplicative cut weights in a tree tensor network.
In particular, consider an order $N \in \N$ tensor $\Atensor$ produced by contracting a tree tensor network $\tngraph$.
Then, for any $\K \subseteq [N]$ we have that $\rank \mat{\Atensor}{\K}$ is at most the minimal modified multiplicative cut weight in $\tngraph$, among cuts separating leaves corresponding to modes $\K$ from leaves corresponding to modes $\K^c$.
Thus, invoking Claim~7 from~\citet{levine2018deep} establishes~\cref{eq:sep_rank_min_cut_bound_graph}:
\[
\seprankbig{ \funcgraph{\params}{\graph} }{\I}
\leq
 \rank \mat{ \tensorendgraph }{ \mu^{-1} (\I)} \leq 
\min_{ \substack{ \J_{ \tngraph (\fmat) } \subseteq \tnvertices{ \tngraph (\fmat) } \\[0.2em] \text{s.t. } \tnverticesleaf{ \tngraph (\fmat) }{\I} \subseteq \J_{ \tngraph (\fmat) } \text{ and } \tnverticesleaf{ \tngraph (\fmat) }{\I^c} \subseteq \J^c_{\tngraph (\fmat)} } } \!\!\!\!\!\!\!\!\!\!\!\!\!\!\! \modcutweight{\tngraph (\fmat)} ( \J_{ \tngraph (\fmat) } )
\text{\,.}
\]

\medskip

\cref{eq:sep_rank_min_cut_bound_vertex} readily follows by steps analogous to those used above for proving~\cref{eq:sep_rank_min_cut_bound_graph}.
\end{proof}

\subsubsection{Cut in Tensor Network for Graph Prediction (Proof of~\cref{eq:sep_rank_upper_bound_graph_pred})}
\label{app:proofs:sep_rank_upper_bound:graph}

For $\fmat = \brk{\fvec{1}, \ldots, \fvec{\abs{\vertices}} } \in \R^{\indim \times \abs{\vertices} }$, let $\tngraph (\fmat) = \brk{ \tnvertices{\tngraph (\fmat)}, \tnedges{\tngraph (\fmat)}, \tnedgeweights{\tngraph (\fmat)} }$ be the tensor network corresponding to $\funcgraph{\params}{\graph} (\fmat)$ (detailed in~\cref{app:prod_gnn_as_tn:correspondence} and formally defined in~\cref{eq:graphtensor_tn}).
By~\cref{lem:min_cut_tn_sep_rank_ub}, to prove that
\[
\seprankbig{ \funcgraph{\params}{\graph} }{\I} \leq \hdim^{4 \nwalk{L - 1}{\cut_\I}{\vertices} + 1 }
\text{\,,}
\]
it suffices to find $\J_{ \tngraph (\fmat) } \subseteq \tnvertices{\tngraph (\fmat) }$ satisfying: \emph{(i)}~leaves of $\tngraph (\fmat)$ associated with vertices in~$\I$ are in $\J_{ \tngraph (\fmat) }$, whereas leaves associated with vertices in~$\I^c$ are not in $\J_{ \tngraph (\fmat) }$; and \emph{(ii)}~$\modcutweight{\tngraph (\fmat)} ( \J_{ \tngraph (\fmat) } ) \leq \hdim^{4 \nwalk{L - 1}{\cut_\I}{\vertices} + 1}$, where $\modcutweight{\tngraph (\fmat)} ( \J_{ \tngraph (\fmat) } )$ is the modified multiplicative cut weight of $\J_{ \tngraph (\fmat) }$ (\cref{def:modified_mul_cut_weight}).
To this end, define $\J_{ \tngraph (\fmat) }$ to hold all nodes in $\tnvertices{\tngraph (\fmat) }$ corresponding to vertices in~$\I$.
Formally:
\[
\begin{split}
	\J_{ \tngraph (\fmat) } := &~\brk[c]2{ \fvecdup{i}{\gamma} : i \in \I, \gamma \in [ \nwalk{L}{ \{i\} }{\vertices} ] } \cup \\
	&~\brk[c]2{ \weightmatdup{l}{i}{\gamma} : l \in [L], i \in \I, \gamma \in [ \nwalk{L - l + 1}{ \{i\} }{\vertices} ] } \cup \\
	&~\brk[c]2{ \deltatensordup{l}{i}{\gamma} : l \in [L], i \in \I, \gamma \in [ \nwalk{L - l}{ \{i\} }{ \vertices } ] }
	\text{\,.}
\end{split}
\]
Clearly, $\J_{ \tngraph (\fmat) }$ upholds \emph{(i)}.

As for \emph{(ii)}, there are two types of legs crossing the cut induced by $\J_{ \tngraph (\fmat) }$ in $\tngraph (\fmat)$.
First, are those connecting a $\delta$-tensor with a weight matrix in the same layer, where one is associated with a vertex in~$\I$ and the other with a vertex in~$\I^c$.
That is, legs connecting $\deltatensordup{l}{i}{\gamma}$ with $\weightmatdup{l}{ \neigh (i)_j }{ \dupmap{l}{i}{j} (\gamma) }$, where $i \in \vertices$ and $\neigh (i)_j \in \vertices$ are on different sides of the partition $(\I, \I^c)$ in the input graph, for $j \in [\abs{ \neigh (i) } ], l \in [L] , \gamma \in [ \nwalk{L - l}{ \{i\}}{\vertices} ]$.
The $\delta$-tensors participating in these legs are exactly those associated with some $i \in \cut_\I$ (recall $\cut_\I$ is the set of vertices with an edge crossing the partition $(\I, \I^c)$).
So, for every $l \in [L]$ and $i \in \cut_\I$ there are $\nwalk{L - l}{\{i\}}{\vertices}$ such $\delta$-tensors.
Second, are legs from $\delta$-tensors associated with $i \in \I$ in the $L$'th layer to the $\delta$-tensor in the output layer of $\tngraph (\fmat)$.
That is, legs connecting $\deltatensordup{L}{i}{1}$ with $\deltatensor{ \abs{\vertices} + 1 }$, for $i \in \I$.
Legs incident to the same $\delta$-tensor only contribute once to $\modcutweight{\tngraph (\fmat)} ( \J_{ \tngraph (\fmat) } )$.
Thus, since the weights of all legs connected to $\delta$-tensors are equal to $\hdim$, we have that:
\[
\modcutweight{\tngraph (\fmat)} (\J_{\tngraph (\fmat)}) \leq \hdim^{ 1 + \sum\nolimits_{ l = 1 }^L \sum\nolimits_{i \in \cut_\I} \nwalk{L - l}{\{i\}}{\vertices} } = \hdim^{ 1 + \sum\nolimits_{ l = 1 }^L \nwalk{L - l}{ \cut_\I }{\vertices} }
\text{\,.}
\]
Lastly, it remains to show that $\sum\nolimits_{ l = 1 }^L \nwalk{L - l}{ \cut_\I }{\vertices} \leq 4 \nwalk{L - 1}{ \cut_\I }{\vertices}$, since in that case~\cref{lem:min_cut_tn_sep_rank_ub} implies:
\[
\seprankbig{ \funcgraph{\params}{\graph} }{\I} \leq \modcutweight{\tngraph (\fmat)} (\J_{\tngraph (\fmat)}) \leq \hdim^{ 4 \nwalk{L - 1}{ \cut_\I }{\vertices} + 1 }
\text{\,,}
\]
which yields~\cref{eq:sep_rank_upper_bound_graph_pred} by taking the log of both sides.

The main idea is that, in an undirected graph with self-loops, the number of length $l \in \N$ walks from vertices with at least one neighbor decays exponentially when $l$ decreases.
Observe that $\nwalk{l}{ \cut_\I }{\vertices} \leq \nwalk{l + 1}{ \cut_\I }{\vertices}$ for all $l \in \N$.
Hence:
\be
\sum\nolimits_{ l = 1 }^L \nwalk{L - l}{ \cut_\I }{\vertices} \leq 2 \sum\nolimits_{l \in \{1, 3, \ldots, L - 1\} } \nwalk{L - l}{ \cut_\I }{\vertices}
\text{\,.}
\label{eq:nwalk_increasing_undirected_graph}
\ee
Furthermore, any length $l \in \N_{\geq 0}$ walk $i_0, i_1, \ldots, i_l \in \vertices$ from $\cut_\I$ induces at least two walks of length $l + 2$ from $\cut_\I$, distinct from those induced by other length $l$ walks~---~one which goes twice through the self-loop of $i_0$ and then proceeds according to the length $l$ walk, \ie~$i_0, i_0, i_0, i_1, \ldots, i_l$, and another that goes to a neighboring vertex (exists since $i_0 \in \cut_\I$), returns to $i_0$, and then proceeds according to the length $l$ walk.
This means that $\nwalk{L - l}{ \cut_\I}{\vertices} \leq 2^{-1} \cdot \nwalk{L - l + 2}{ \cut_\I}{\vertices} \leq \cdots \leq 2^{ - \lfloor l / 2 \rfloor } \cdot \nwalk{L -1 }{\cut_\I}{\vertices}$ for all $l \in \{ 3, 5, \ldots, L - 1\}$.
Going back to~\cref{eq:nwalk_increasing_undirected_graph}, this leads to:
\[
\begin{split}
\sum\nolimits_{ l = 1 }^L \nwalk{L - l}{ \cut_\I }{\vertices}
& \leq 
2 \sum\nolimits_{l \in \{1, 3, \ldots, L - 1\} } 2^{ \lfloor l / 2 \rfloor } \cdot \nwalk{L - 1}{ \cut_\I }{\vertices} \\
& \leq 
2 \sum\nolimits_{l = 0}^{\infty} 2^{ - l} \cdot \nwalk{L - 1}{ \cut_\I }{\vertices} \\
& = 4 \nwalk{L - 1}{ \cut_\I }{ \vertices }
\text{\,,}
\end{split}
\]
completing the proof of~\cref{eq:sep_rank_upper_bound_graph_pred}.

\subsubsection{Cut in Tensor Network for Vertex Prediction (Proof of~\cref{eq:sep_rank_upper_bound_vertex_pred})}
\label{app:proofs:sep_rank_upper_bound:vertex}

This part of the proof follows a line similar to that of~\cref{app:proofs:sep_rank_upper_bound:graph}, with differences stemming from the distinction between the operation of a GNN over graph and vertex prediction tasks.

For $\fmat = \brk{\fvec{1}, \ldots, \fvec{\abs{\vertices}} } \in \R^{\indim \times \abs{\vertices} }$, let $\tngraph^{(t)} (\fmat) = \brk{ \tnvertices{\tngraph^{(t)} (\fmat)}, \tnedges{\tngraph^{(t)} (\fmat)}, \tnedgeweights{\tngraph^{(t)} (\fmat)} }$ be the tensor network corresponding to $\funcvert{\params}{\graph}{t} (\fmat)$ (detailed in~\cref{app:prod_gnn_as_tn:correspondence} and formally defined in~\cref{eq:vertextensor_tn}).
By~\cref{lem:min_cut_tn_sep_rank_ub}, to prove that
\[
\seprankbig{ \funcvert{\params}{\graph}{t} }{\I} \leq \hdim^{4 \nwalk{L - 1}{\cut_\I}{ \{t\} } }
\text{\,,}
\]
it suffices to find $\J_{ \tngraph^{(t)} (\fmat) } \subseteq \tnvertices{\tngraph^{(t)} (\fmat) }$ satisfying: \emph{(i)}~leaves of $\tngraph^{(t)} (\fmat)$ associated with vertices in~$\I$ are in $\J_{ \tngraph^{(t)} (\fmat) }$, whereas leaves associated with vertices in~$\I^c$ are not in $\J_{ \tngraph^{(t)} (\fmat) }$; and \emph{(ii)}~$\modcutweight{\tngraph^{(t)} (\fmat)} ( \J_{ \tngraph^{(t)} (\fmat) } ) \leq \hdim^{4 \nwalk{L - 1}{\cut_\I}{ \{t\}}}$, where $\modcutweight{\tngraph^{(t)} (\fmat)} ( \J_{ \tngraph^{(t)} (\fmat) } )$ is the modified multiplicative cut weight of $\J_{ \tngraph^{(t)} (\fmat) }$ (\cref{def:modified_mul_cut_weight}).
To this end, define $\J_{ \tngraph^{(t)} (\fmat) }$ to hold all nodes in $\tnvertices{\tngraph^{(t)} (\fmat) }$ corresponding to vertices in~$\I$.
Formally:
\[
\begin{split}
	\J_{ \tngraph^{(t)} (\fmat) } := &~\brk[c]2{ \fvecdup{i}{\gamma} : i \in \I, \gamma \in [ \nwalk{L}{ \{i\} }{ \{t\}} ] } \cup \\
	&~\brk[c]2{ \weightmatdup{l}{i}{\gamma} : l \in [L], i \in \I, \gamma \in [ \nwalk{L - l + 1}{ \{i\} }{\{t\}} ] } \cup \\
	&~\brk[c]2{ \deltatensordup{l}{i}{\gamma} : l \in [L], i \in \I, \gamma \in [ \nwalk{L - l}{ \{i\} }{ \{t\} } ] } \cup \\
	&~\WW^{(o)}
	\text{\,,}
\end{split}
\]
where $\WW^{(o)} := \brk[c]{ \weightmat{o} }$ if $t \in \I$ and $\WW^{(o)} := \emptyset$ otherwise.
Clearly, $\J_{ \tngraph^{(t)} (\fmat) }$ upholds \emph{(i)}.

As for \emph{(ii)}, the legs crossing the cut induced by $\J_{ \tngraph^{(t)} (\fmat) }$ in $\tngraph^{(t)} (\fmat)$ are those connecting a $\delta$-tensor with a weight matrix in the same layer, where one is associated with a vertex in~$\I$ and the other with a vertex in~$\I^c$.
That is, legs connecting $\deltatensordup{l}{i}{\gamma}$ with $\weightmatdup{l}{ \neigh (i)_j }{ \dupmapvertex{l}{i}{j} (\gamma) }$, where $i \in \vertices$ and $\neigh (i)_j \in \vertices$ are on different sides of the partition $(\I, \I^c)$ in the input graph, for $j \in [\abs{ \neigh (i) } ], l \in [L] , \gamma \in [ \nwalk{L - l}{ \{i\}}{ \{ t \} } ]$.
The $\delta$-tensors participating in these legs are exactly those associated with some $i \in \cut_\I$ (recall $\cut_\I$ is the set of vertices with an edge crossing the partition $(\I, \I^c)$).
Hence, for every $l \in [L]$ and $i \in \cut_\I$ there are $\nwalk{L - l}{\{i\}}{ \{ t\} }$ such $\delta$-tensors.
Legs connected to the same $\delta$-tensor only contribute once to $\modcutweight{\tngraph^{(t)} (\fmat)} ( \J_{ \tngraph^{(t)} (\fmat) } )$.
Thus, since the weights of all legs connected to $\delta$-tensors are equal to $\hdim$, we have that:
\[
\modcutweight{\tngraph^{(t)} (\fmat)} (\J_{\tngraph^{(t)} (\fmat)}) = \hdim^{ \sum\nolimits_{ l = 1 }^L \sum\nolimits_{i \in \cut_\I} \nwalk{L - l}{\{i\}}{\{t\}} } = \hdim^{ \sum\nolimits_{ l = 1 }^L \nwalk{L - l}{ \cut_\I }{\{t\}} }
\text{\,.}
\]
Lastly, it remains to show that $\sum\nolimits_{ l = 1 }^L \nwalk{L - l}{ \cut_\I }{\{t\}} \leq 4 \nwalk{L - 1}{ \cut_\I }{\{t\}}$, as in that case~\cref{lem:min_cut_tn_sep_rank_ub} implies:
\[
\seprankbig{ \funcvert{\params}{\graph}{t} }{\I} \leq \modcutweight{\tngraph^{(t)} (\fmat)} (\J_{\tngraph^{(t)} (\fmat)}) \leq \hdim^{ 4 \nwalk{L - 1}{ \cut_\I }{ \{ t \}} }
\text{\,,}
\]
which leads to~\cref{eq:sep_rank_upper_bound_vertex_pred} by taking the log of both sides.

The main idea is that, in an undirected graph with self-loops, the number of length $l \in \N$ walks ending at $t$ that originate from vertices with at least one neighbor decays exponentially when $l$ decreases.
First, clearly $\nwalk{l}{ \cut_\I }{ \{t\} } \leq \nwalk{l + 1}{ \cut_\I }{\{t\}}$ for all $l \in \N$.
Therefore:
\be
\sum\nolimits_{ l = 1 }^L \nwalk{L - l}{ \cut_\I }{\{t\}} \leq 2 \sum\nolimits_{l \in \{1, 3, \ldots, L - 1\} } \nwalk{L - l}{ \cut_\I }{\{t\}}
\text{\,.}
\label{eq:nwalk_increasing_undirected_vertex}
\ee
Furthermore, any length $l \in \N_{\geq 0}$ walk $i_0, i_1, \ldots, i_{l - 1}, t \in \vertices$ from $\cut_\I$ to $t$ induces at least two walks of length $l + 2$ from $\cut_\I$ to $t$, distinct from those induced by other length $l$ walks~---~one which goes twice through the self-loop of $i_0$ and then proceeds according to the length $l$ walk, \ie~$i_0, i_0, i_0, i_1, \ldots, i_{l - 1}, t$, and another that goes to a neighboring vertex (exists since $i_0 \in \cut_\I$), returns to $i_0$, and then proceeds according to the length $l$ walk.
This means that $\nwalk{L - l}{ \cut_\I}{\{t\}} \leq 2^{-1} \cdot \nwalk{L - l + 2}{ \cut_\I}{\{t\}} \leq \cdots \leq 2^{ - \lfloor l / 2 \rfloor } \cdot \nwalk{L -1 }{\cut_\I}{\{t\}}$ for all $l \in \{ 3, 5, \ldots, L - 1\}$.
Going back to~\cref{eq:nwalk_increasing_undirected_vertex}, we have that:
\[
\begin{split}
	\sum\nolimits_{ l = 1 }^L \nwalk{L - l}{ \cut_\I }{\{t\}}
	& \leq 
	2 \sum\nolimits_{l \in \{1, 3, \ldots, L - 1\} } 2^{ \lfloor l / 2 \rfloor } \cdot \nwalk{L - 1}{ \cut_\I }{\{t\}} \\
	& \leq 
	2 \sum\nolimits_{l = 0}^{\infty} 2^{ - l} \cdot \nwalk{L - 1}{ \cut_\I }{\{t\}} \\
	& = 4 \nwalk{L - 1}{ \cut_\I }{ \{t\} }
	\text{\,,}
\end{split}
\]
concluding the proof of~\cref{eq:sep_rank_upper_bound_vertex_pred}.
\qed

\subsubsection{Technical Lemma}
\label{app:proofs:sep_rank_upper_bound:lemmas}

\begin{lemma}
	\label{lem:contraction_matricization}
	For any order $N \in \N$ tensor $\Atensor \in \R^{D \times \cdots \times D}$, vectors $\xbf^{(1)}, \ldots, \xbf^{(N)} \in \R^D$, and subset of mode indices $\I \subseteq [N]$, it holds that $\Atensor \contract{i \in [N]} \xbf^{(i)} = \brk1{ \kronp_{i \in I} \xbf^{(i)} }^\top \mat{ \Atensor }{ \I } \brk1{ \kronp_{j \in \I^c} \xbf^{(j)} } \in \R$.
\end{lemma}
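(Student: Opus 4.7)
The plan is to unfold both sides into explicit sums indexed by multi-indices $(d_1, \ldots, d_N) \in [D]^N$ and check that they coincide term-by-term; the whole content of the lemma is the compatibility between the indexing convention used by matricization and the one used by the Kronecker product.

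First, I would rewrite the left-hand side directly from \cref{def:tensor_contraction} (applied $N$ times, contracting each $\xbf^{(i)}$, viewed as an order-one tensor, into the appropriate remaining mode). A straightforward induction on $N$ shows
\[
\Atensor \contractlast_{i \in [N]} \xbf^{(i)} = \sum_{d_1, \ldots, d_N = 1}^{D} \Atensor_{d_1, \ldots, d_N} \prod_{i = 1}^{N} \xbf^{(i)}_{d_i},
\]
since each contraction collapses one mode into an inner product against $\xbf^{(i)}$.

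Next, I would expand the right-hand side. Writing $\I = \{i_1 < \cdots < i_{|\I|}\}$ and $\I^c = \{j_1 < \cdots < j_{|\I^c|}\}$, the definition of $\mat{\Atensor}{\I}$ recalled in~\cref{app:prod_gnn_as_tn:tensors} places $\Atensor_{d_1, \ldots, d_N}$ at row
\[
r(d) = 1 + \sum_{l = 1}^{|\I|}\brk{d_{i_l} - 1}\prod_{l' = l+1}^{|\I|} D
\]
and the analogous column index $c(d)$ determined by $(d_{j_l})_{l}$. On the other hand, the standard definition of the Kronecker product gives
\[
\brk1{\kronp_{i \in \I} \xbf^{(i)}}_{r(d)} = \prod_{l = 1}^{|\I|} \xbf^{(i_l)}_{d_{i_l}}, \qquad \brk1{\kronp_{j \in \I^c} \xbf^{(j)}}_{c(d)} = \prod_{l = 1}^{|\I^c|} \xbf^{(j_l)}_{d_{j_l}},
\]
using exactly the same lexicographic enumeration of multi-indices. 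Thus, expanding the bilinear form,
\[
\brk1{\kronp_{i \in \I}\xbf^{(i)}}^\top \mat{\Atensor}{\I}\brk1{\kronp_{j \in \I^c}\xbf^{(j)}} = \sum_{d_1, \ldots, d_N = 1}^{D} \Atensor_{d_1, \ldots, d_N}\prod_{i = 1}^{N}\xbf^{(i)}_{d_i},
\]
which matches the expression obtained for the left-hand side.

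The step that requires care is the second one: verifying that the map $d \mapsto (r(d), c(d))$ used for matricization is precisely the map that the Kronecker product uses to enumerate its entries, so that no permutation of entries is hidden. Once this bookkeeping is made explicit, the lemma follows immediately; there is no analytical content beyond matching conventions.
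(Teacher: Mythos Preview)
Your proposal is correct and follows the same approach as the paper: both arguments expand the contraction into the sum $\sum_{d_1,\ldots,d_N} \Atensor_{d_1,\ldots,d_N}\prod_i \xbf^{(i)}_{d_i}$ and identify it with the bilinear form via the definitions of matricization and Kronecker product. The paper's proof is a one-liner that simply asserts this identity follows from the definitions, whereas you spell out the bookkeeping step (matching the lexicographic enumeration used by matricization with that used by the Kronecker product) explicitly---a level of detail the paper omits.
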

\begin{proof}
	The identity follows directly from the definitions of tensor contraction, matricization, and Kronecker product (\cref{app:proofs:notation}):
	\[
	\Atensor \contract{i \in [N]} \xbf^{(i)} = \sum\nolimits_{d_1, \ldots, d_N = 1}^D \Atensor_{d_1, \ldots, d_N} \cdot \prod\nolimits_{i \in [N]} \xbf^{(i)}_{d_i} = \brk1{ \kronp_{i \in I} \xbf^{(i)} }^\top \mat{ \Atensor }{ \I } \brk1{ \kronp_{j \in \I^c} \xbf^{(j)} }
	\text{\,.}
	\]
\end{proof}

%%%%%% SEPARATION RANK LOWER BOUND PROOF
\subsection{Proof of~\cref{thm:sep_rank_lower_bound}}
\label{app:proofs:sep_rank_lower_bound}

We assume familiarity with the basic concepts from tensor analysis introduced in~\cref{app:prod_gnn_as_tn:tensors}.

We begin by establishing a general technique for lower bounding the separation rank of a function through \emph{grid tensors}, also used in~\citet{levine2020limits,wies2021transformer,levine2022inductive,razin2022implicit}.
For any $f : (\R^{\indim})^N \to \R$ and $M \in \N$ \emph{template vectors} $\vbf^{(1)}, \ldots, \vbf^{(M)} \in \R^{\indim}$, we can create a grid tensor of $f$, which is a form of function discretization, by evaluating it over each point in \smash{$\brk[c]{ \brk{ \vbf^{(d_1)} , \ldots, \vbf^{(d_{ N } ) } } }_{d_1, \ldots, d_{ N } = 1}^M$} and storing the outcomes in an order $N$ tensor with modes of dimension $M$.
That is, the grid tensor of $f$ for templates $\vbf^{(1)}, \ldots, \vbf^{(M)}$, denoted $\gridtensor{f} \in \R^{M \times \cdots \times M}$, is defined by $\gridtensor{f}_{d_1, \ldots, d_N} = f \brk{ \vbf^{(d_1)}, \ldots, \vbf^{(d_N)} }$ for all $d_1, \ldots, d_N \in [M]$.\footnote{
The template vectors of a grid tensor $\gridtensor{f}$ will be clear from context, thus we omit them from the notation.
}
\cref{lem:grid_tensor_sep_rank_lb} shows that $\sepranknoflex{f}{\I}$ is lower bounded by the rank of $\gridtensor{f}$'s matricization with respect to $\I$.

\begin{lemma}
\label{lem:grid_tensor_sep_rank_lb}
For $f : ( \R^{\indim} )^N \to \R$ and $M \in \N$ template vectors $\vbf^{(1)}, \ldots, \vbf^{(M)} \in \R^{\indim}$, let $\gridtensor{f} \in \R^{M \times \cdots \times M}$ be the corresponding order $N$ grid tensor of $f$.
Then, for any $\I \subseteq [N]$:
\[
\rank \mat{\gridtensor{f}}{\I} \leq \seprank{f}{\I}
\text{\,.}
\]
\end{lemma}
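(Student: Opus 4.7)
The plan is to directly unfold the definition of separation rank and translate each summand in a separating decomposition of $f$ into a rank-one contribution to the matricized grid tensor. First, I would handle the trivial cases: if $\seprank{f}{\I} = \infty$ the bound is vacuous, and if $\seprank{f}{\I} = 0$ then $f \equiv 0$, so $\gridtensor{f} = 0$ and its matricization has rank zero. Hence, assume $R := \seprank{f}{\I} \in \N$ and pick $g^{(1)}, \ldots, g^{(R)} : (\R^{\indim})^{\abs{\I}} \to \R$ and $\bar g^{(1)}, \ldots, \bar g^{(R)} : (\R^{\indim})^{\abs{\I^c}} \to \R$ realizing the minimum in~\cref{def:sep_rank}, so that $f(\fmat) = \sum_{r=1}^{R} g^{(r)}(\fmat_\I) \cdot \bar g^{(r)}(\fmat_{\I^c})$.

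Next, I would evaluate this identity on the grid of template vectors. Writing $\I = \{i_1 < \cdots < i_{\abs{\I}}\}$ and $\I^c = \{j_1 < \cdots < j_{\abs{\I^c}}\}$, define for each $r \in [R]$ vectors $\ubf^{(r)} \in \R^{M^{\abs{\I}}}$ and $\bar \ubf^{(r)} \in \R^{M^{\abs{\I^c}}}$ indexed by tuples $(d_{i_1}, \ldots, d_{i_{\abs{\I}}})$ and $(d_{j_1}, \ldots, d_{j_{\abs{\I^c}}})$ respectively, with
\[
\ubf^{(r)}_{d_{i_1}, \ldots, d_{i_{\abs{\I}}}} := g^{(r)}\!\brk1{\vbf^{(d_{i_1})}, \ldots, \vbf^{(d_{i_{\abs{\I}}})}}
\,, \quad
\bar\ubf^{(r)}_{d_{j_1}, \ldots, d_{j_{\abs{\I^c}}}} := \bar g^{(r)}\!\brk1{\vbf^{(d_{j_1})}, \ldots, \vbf^{(d_{j_{\abs{\I^c}}})}}
\text{\,,}
\]
using the very row/column index encoding employed by the matricization operator $\mat{\cdot}{\I}$ (\cref{app:prod_gnn_as_tn:tensors}). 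By construction, the $(d_1, \ldots, d_N)$ entry of $\gridtensor{f}$ equals $\sum_{r=1}^{R} \ubf^{(r)}_{(d_{i})_{i \in \I}} \cdot \bar\ubf^{(r)}_{(d_{j})_{j \in \I^c}}$, which is precisely the corresponding entry of $\sum_{r=1}^{R} \ubf^{(r)} \brk1{\bar\ubf^{(r)}}^\top$ after rearrangement into $\mat{\gridtensor{f}}{\I}$.

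Finally, since $\mat{\gridtensor{f}}{\I} = \sum_{r=1}^R \ubf^{(r)} \brk1{\bar\ubf^{(r)}}^\top$ is a sum of $R$ rank-one matrices, subadditivity of matrix rank yields $\rank \mat{\gridtensor{f}}{\I} \leq R = \seprank{f}{\I}$, as desired. The proof is essentially a bookkeeping exercise: the only subtlety is verifying that the ordering conventions used to define $\mat{\cdot}{\I}$ and the Kronecker-style indexing of $\ubf^{(r)}, \bar\ubf^{(r)}$ match, which is immediate from the definitions in~\cref{app:prod_gnn_as_tn:tensors}. I do not foresee a genuine obstacle, and in particular no analytic or measure-theoretic argument is needed here.
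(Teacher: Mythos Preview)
Your proposal is correct and follows essentially the same approach as the paper's proof: both handle the trivial cases, then take a separating decomposition $f = \sum_{r=1}^R g^{(r)} \bar g^{(r)}$, vectorize the grid tensors of $g^{(r)}$ and $\bar g^{(r)}$ into $\ubf^{(r)}$ and $\bar\ubf^{(r)}$, and conclude that $\mat{\gridtensor{f}}{\I} = \sum_{r=1}^R \ubf^{(r)} (\bar\ubf^{(r)})^\top$ has rank at most $R$ by subadditivity. Your explicit attention to the index-ordering conventions is, if anything, slightly more careful than the paper's presentation.
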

\begin{proof}
If $\sepranknoflex{f}{\I}$ is $\infty$ or zero, \ie~$f$ cannot be represented as a finite sum of separable functions (with respect to $\I$) or is identically zero, then the claim is trivial. 
Otherwise, denote $R := \seprank{f}{\I}$, and let $g^{(1)}, \ldots, g^{(R)} : (\R^{\indim})^{ \abs{\I} } \to \R$ and $\bar{g}^{(1)}, \ldots, \bar{g}^{(R)} : (\R^{\indim})^{ \abs{\I^c} } \to \R$ such that:
\be
f \brk{ \fmat } = \sum\nolimits_{r = 1}^R g^{(r)} \brk{ \fmat_{\I} } \cdot \bar{g}^{(r)} \brk{ \fmat_{\I^c} }
\text{\,,}
\label{eq:grid_tensor_mat_ub_by_sep_rank:sep_rank}
\ee
where $\fmat := \brk{\fvec{1}, \ldots, \fvec{N}}$, $\fmat_\I := \brk{ \fvec{i} }_{i \in \I}$, and $\fmat_{\I^c} := \brk{ \fvec{j} }_{j \in \I^c}$.
For $r \in [R]$, let $\gridtensornoflex{g^{(r)}}$ and $\gridtensornoflex{\bar{g}^{(r)}}$ be the grid tensors of $g^{(r)}$ and $\bar{g}^{(r)}$ over templates $\vbf^{(1)}, \ldots, \vbf^{(M)}$, respectively.
That is, $\gridtensornoflex{g^{(r)}}_{d_i : i \in \I } = g^{(r)} \brk{ \brk{ \vbf^{ (d_i) } }_{i \in \I} }$ and $\gridtensornoflex{ \bar{g}^{(r)} }_{ d_j : j \in \I^c } = \bar{g}^{(r)} \brk{ \brk{ \vbf^{(d_j)} }_{j \in \I^c} }$ for all $d_1, \ldots, d_{ N } \in [M]$.
By~\cref{eq:grid_tensor_mat_ub_by_sep_rank:sep_rank} we have that for any $d_1, \ldots, d_N \in [M]$:
\[
\begin{split}
\gridtensor{f}_{d_1, \ldots, d_N} & = f \brk1{ \vbf^{(d_1)}, \ldots, \vbf^{(d_N)} } \\
& = \sum\nolimits_{r = 1}^R g^{(r)} \brk1{ \brk{ \vbf^{(d_i)} }_{i \in \I} } \cdot \bar{g}^{(r)} \brk1{ \brk{ \vbf^{(d_j)} }_{j \in \I^c} } \\
& = \sum\nolimits_{r = 1}^R \gridtensorbig{g^{(r)}}_{d_i : i \in \I } \cdot \gridtensorbig{ \bar{g}^{(r)} }_{ d_j : j \in \I^c }
\text{\,.}
\end{split}
\]
Denoting by $\ubf^{(r)} \in \R^{M^{\abs{\I}}}$ and $\bar{\ubf}^{(r)} \in \R^{M^{ \abs{\I^c} }}$ the arrangements of $\gridtensornoflex{g^{(r)}}$ and $\gridtensornoflex{ \bar{g}^{(r)} }$ as vectors, respectively for $r \in [R]$, this implies that the matricization of $\gridtensor{f}$ with respect to $\I$ can be written as:
\[
\mat{ \gridtensor{f} }{\I} = \sum\nolimits_{r = 1}^R \ubf^{(r)} \brk1{ \bar{\ubf}^{(r)} }^{\top}
\text{\,.}
\]
We have arrived at a representation of $\mat{ \gridtensor{f} }{\I}$ as a sum of $R$ outer products between two vectors.
An outer product of two vectors is a matrix of rank at most one.
Consequently, by sub-additivity of rank we conclude: $\rank \mat{ \gridtensor{f} }{\I} \leq R = \seprank{f}{\I}$.
\end{proof}

\medskip

In the context of graph prediction, let $\cut^* \in \argmax_{ \cut \in \cutset (\I) } \log \brk{ \alpha_{\cut} } \cdot \nwalk{L - 1}{\cut}{\vertices}$.
By~\cref{lem:grid_tensor_sep_rank_lb}, to prove that~\cref{eq:sep_rank_lower_bound_graph_pred} holds for weights~$\params$, it suffices to find template vectors for which $\log \brk{ \rank \mat{ \gridtensor{ \funcgraph{\params}{\graph} } }{\I} } \geq \log \brk{ \alpha_{\cut^*} } \cdot \nwalk{L - 1}{\cut^*}{\vertices}$.
Notice that, since the outputs of $\funcgraph{\params}{\graph}$ vary polynomially with the weights $\params$, so do the entries of $\mat{ \gridtensor{ \funcgraph{\params}{\graph} } }{\I}$ for 
any choice of template vectors.
Thus, according to~\cref{lem:poly_mat_max_rank}, by constructing weights~$\params$ and template vectors satisfying $\log \brk{ \rank \mat{ \gridtensor{ \funcgraph{\params}{\graph} } }{\I} } \geq \log \brk{ \alpha_{\cut^*} } \cdot \nwalk{L - 1}{\cut^*}{\vertices}$, we may conclude that this is the case for almost all assignments of weights, meaning~\cref{eq:sep_rank_lower_bound_graph_pred} holds for almost all assignments of weights.
In~\cref{app:proofs:sep_rank_lower_bound:graph} we construct such weights and template vectors.

In the context of vertex prediction, let $\cut_t^* \in \argmax_{ \cut \in \cutset (\I) } \log \brk{ \alpha_{\cut, t} } \cdot \nwalk{L - 1}{\cut}{ \{ t \} }$.
Due to arguments analogous to those above, to prove that~\cref{eq:sep_rank_lower_bound_vertex_pred} holds for almost all assignments of weights, we need only find weights $\params$ and template vectors satisfying $\log \brk{ \rank \mat{ \gridtensor{ \funcvert{\params}{\graph}{t} } }{\I} } \geq \log \brk{ \alpha_{\cut^*_t, t} } \cdot \nwalk{L - 1}{\cut^*_t}{ \{ t \} }$.
In~\cref{app:proofs:sep_rank_lower_bound:vertex} we do so.

Lastly, recalling that a finite union of measure zero sets has measure zero as well establishes that~\cref{eq:sep_rank_lower_bound_graph_pred,eq:sep_rank_lower_bound_vertex_pred} jointly hold for almost all assignments of weights.
\qed

\subsubsection{Weights and Template Vectors Assignment for Graph Prediction (Proof of~\cref{eq:sep_rank_lower_bound_graph_pred})}
\label{app:proofs:sep_rank_lower_bound:graph}

We construct weights~$\params$ and template vectors satisfying $\log \brk{ \rank \mat{ \gridtensor{ \funcgraph{\params}{\graph} } }{\I} } \geq \log \brk{ \alpha_{\cut^*} } \cdot \nwalk{L - 1}{\cut^*}{\vertices}$, where $\cut^* \in \argmax_{ \cut \in \cutset (\I) } \log \brk{ \alpha_{\cut} } \cdot \nwalk{L - 1}{\cut}{\vertices}$.

If $\nwalk{L - 1}{\cut^*}{\vertices} = 0$, then the claim is trivial since there exist weights and template vectors for which $\mat{ \gridtensor{ \funcgraph{\params}{\graph} } }{\I}$ is not the zero matrix (\eg~taking all weight matrices to be zero-padded identity matrices and choosing a single template vector holding one in its first entry and zeros elsewhere).

Now, assuming that $\nwalk{L - 1}{\cut^*}{\vertices} > 0$, which in particular implies that $\I \neq \emptyset, \I \neq \vertices,$ and $\cut^* \neq \emptyset$, we begin with the case of GNN depth $L = 1$, after which we treat the more general $L \geq 2$ case.

\paragraph*{Case of $L = 1$:}
Consider the weights $\params = \brk{ \weightmat{1}, \weightmat{o} }$ given by $\weightmat{1} := \Ibf \in \R^{\hdim \times \indim}$ and $\weightmat{o} := (1, \ldots, 1) \in \R^{1 \times \hdim}$, where $\Ibf$ is a zero padded identity matrix, \ie~it holds ones on its diagonal and zeros elsewhere.
We choose template vectors $\vbf^{(1)}, \ldots, \vbf^{(\mindim)} \in \R^{\indim}$ such that $\vbf^{(m)}$ holds the $m$'th standard basis vector of $\R^{\mindim}$ in its first $\mindim$ coordinates and zeros in the remaining entries, for $m \in [\mindim]$ (recall $\mindim := \min \{ \indim, \hdim \}$).
Namely, denote by $\ebf^{(1)}, \ldots, \ebf^{(\mindim)} \in \R^{\mindim}$ the standard basis vectors of $\R^{\mindim}$, \ie~$\ebf^{(m)}_d = 1$ if $d = m$ and $\ebf^{(m)}_d = 0$ otherwise for all $m, d \in [\mindim]$. We let $\vbf^{(m)}_{:\mindim} := \ebf^{(m)}$ and $\vbf^{(m)}_{\mindim + 1:} := 0$ for all $m \in [\mindim]$.

We prove that for this choice of weights and template vectors, for all $d_1, \ldots, d_{\abs{\vertices}} \in [\mindim]$:
\be
\funcgraph{\params}{\graph} \brk1{ \vbf^{(d_1)}, \ldots, \vbf^{ ( d_{\abs{\vertices}} )} } = \begin{cases}
1	& , \text{if } d_1 = \cdots = d_{\abs{\vertices}} \\
0	& , \text{otherwise} 
\end{cases}
\text{\,.}
\label{eq:standard_basis_grid_graph}
\ee
To see it is so, notice that:
\[
\funcgraph{\params}{\graph} \brk1{ \vbf^{(d_1)}, \ldots, \vbf^{ ( d_{\abs{\vertices}} )} } = \weightmat{o} \brk1{ \hadmp_{i \in \vertices} \hidvec{1}{i} } = \sum\nolimits_{d = 1}^{\hdim} \prod\nolimits_{i \in \vertices} \hidvec{1}{i}_d 
\text{\,,}
\]
with $\hidvec{1}{i} = \hadmp_{j \in \neigh (i)} \brk{\weightmat{1} \vbf^{(d_j)} } = \hadmp_{j \in \neigh (i)} \brk{ \Ibf \vbf^{(d_j)} }$ for all $i \in \vertices$.
Since $ \vbf^{(d_j)}_{:\mindim} = \ebf^{(d_j)}$ for all $j \in \neigh (i)$ and $\Ibf$ is a zero-padded $\mindim \times \mindim$ identity matrix, it holds that: 
\[
\funcgraph{\params}{\graph} \brk1{ \vbf^{(d_1)}, \ldots, \vbf^{ ( d_{\abs{\vertices}} )} } = \sum\nolimits_{d = 1}^{\mindim} \prod\nolimits_{i \in \vertices, j \in \neigh (i)} \ebf^{(d_j)}_d
\text{\,.}
\]
Due to the existence of self-loops (\ie~$i \in \neigh (i)$ for all $i \in \vertices$), for every $d \in [\mindim]$ the product $\prod\nolimits_{i \in \vertices, j \in \neigh (i)} \ebf^{(d_j)}_d$ includes each of $\ebf^{(d_1)}_d, \ldots, \ebf^{(d_{\abs{\vertices}})}_d$ at least once.
Consequently, $\prod\nolimits_{i \in \vertices, j \in \neigh (i)} \ebf^{(d_j)}_d = 1$ if $d_1 = \cdots = d_{\abs{\vertices}} = d$ and $\prod\nolimits_{i \in \vertices, j \in \neigh (i)} \ebf^{(d_j)}_d = 0$ otherwise.
This implies that $\funcgraph{\params}{\graph} \brk{ \vbf^{(d_1)}, \ldots, \vbf^{ ( d_{\abs{\vertices}} )} } = 1$ if $d_1 = \cdots = d_{\abs{\vertices}}$ and $\funcgraph{\params}{\graph} \brk{ \vbf^{(d_1)}, \ldots, \vbf^{ ( d_{\abs{\vertices}} )} } = 0$ otherwise, for all $d_1, \ldots, d_{\abs{\vertices}} \in [\mindim]$.

\cref{eq:standard_basis_grid_graph} implies that $\mat{ \gridtensor{ \funcgraph{\params}{\graph} } }{\I}$ has exactly $\mindim$ non-zero entries, each in a different row and column.
Thus, $\rank \mat{ \gridtensor{ \funcgraph{\params}{\graph} } }{\I} = \mindim$.
Recalling that $\alpha_{\cut^*} := \mindim^{1 / \nwalk{0}{\cut^*}{\vertices}}$ for $L = 1$, we conclude: 
\[
\log \brk*{ \rank \mat{ \gridtensor{ \funcgraph{\params}{\graph} } }{\I} } =  \log (\mindim) = \log \brk{ \alpha_{\cut^*} } \cdot \nwalk{0}{\cut^*}{\vertices}
\text{\,.}
\]

\paragraph*{Case of $L \geq 2$:}
Let $M := \multisetcoeffnoflex{\mindim}{ \nwalk{L - 1}{\cut^*}{\vertices}} = \binom{\mindim + \nwalk{L - 1}{\cut^*}{\vertices} - 1}{ \nwalk{L - 1}{\cut^*}{\vertices}}$ be the multiset coefficient of $\mindim$ and $ \nwalk{L - 1}{\cut^*}{\vertices}$ (recall $\mindim := \min \brk[c]{ \indim, \hdim}$).
By~\cref{lem:hadm_of_gram_full_rank}, there exists $\Zbf \in \R_{ > 0}^{M \times \mindim}$ for which
\[
\rank \brk*{ \hadmp^{ \nwalk{L - 1}{\cut^*}{\vertices} } \brk*{ \Zbf \Zbf^\top } } = \multisetcoeff{ \mindim }{ \nwalk{L - 1}{ \cut^* }{ \vertices } }
\text{\,,}
\]
with $\hadmp^{ \nwalk{L - 1}{\cut^*}{\vertices} } \brk{ \Zbf \Zbf^\top }$ standing for the $\nwalk{L - 1}{\cut^*}{\vertices}$'th Hadamard power of $\Zbf \Zbf^\top$.
For this $\Zbf$, by~\cref{lem:grid_submatrix_hadm_of_gram_graph} below we know that there exist weights $\params$ and template vectors such that $\mat{ \gridtensornoflex{ \funcgraph{\params}{\graph} } }{ \I }$ has an $M \times M$ sub-matrix of the form $\Sbf \brk{ \hadmp^{ \nwalk{L - 1}{\cut^*}{\vertices} } \brk{ \Zbf \Zbf^\top } } \Qbf$, where $\Sbf, \Qbf \in \R^{M \times M}$ are full-rank diagonal matrices.
Since the rank of a matrix is at least the rank of any of its sub-matrices:
\[
\begin{split}
\rank \brk*{ \mat{ \gridtensorbig{ \funcgraph{\params}{\graph} } }{ \I } } & \geq \rank \brk*{ \Sbf \brk*{ \hadmp^{ \nwalk{L - 1}{\cut^*}{\vertices} } \brk*{ \Zbf \Zbf^\top } } \Qbf } \\
& = \rank \brk*{ \hadmp^{ \nwalk{L - 1}{\cut^*}{\vertices} } \brk*{ \Zbf \Zbf^\top } } \\
& = \multisetcoeff{ \mindim }{ \nwalk{L - 1}{ \cut^* }{ \vertices } }
\text{\,,}
\end{split}
\]
where the second transition stems from $\Sbf$ and $\Qbf$ being full-rank.
Applying~\cref{lem:multiset_coeff_lower_bound} to lower bound the multiset coefficient, we have that:
\[
\rank \brk*{ \mat{ \gridtensorbig{ \funcgraph{\params}{\graph} } }{ \I } } \geq \multisetcoeff{ \mindim }{ \nwalk{L - 1}{ \cut^* }{ \vertices } } \geq \brk*{ \frac{\mindim - 1}{ \nwalk{L - 1}{ \cut^* }{ \vertices }} + 1 }^{ \nwalk{L - 1}{ \cut^* }{ \vertices } }
\text{\,.}
\]
Taking the log of both sides while recalling that $\alpha_{\cut^*} := \brk{ \mindim - 1 } \cdot \nwalk{L - 1}{\cut^*}{\vertices}^{-1} + 1$, we conclude that:
\[
\log \brk{ \rank \mat{ \gridtensor{ \funcgraph{\params}{\graph} } }{\I} } \geq \log \brk{ \alpha_{\cut^*} } \cdot \nwalk{L - 1}{\cut^*}{\vertices}
\text{\,.}
\]

\begin{lemma}
	\label{lem:grid_submatrix_hadm_of_gram_graph}
	Suppose that the GNN inducing $\funcgraph{\params}{\graph}$ is of depth $L \geq 2$ and that $\nwalk{L - 1}{\cut^*}{\vertices} > 0$.
	For any $M \in \N$ and matrix with positive entries $\Zbf \in \R_{> 0}^{M \times \mindim}$, there exist weights $\params$ and $M + 1$ template vectors $\vbf^{(1)}, \ldots, \vbf^{(M + 1)} \in \R^{\indim}$~such that $\mat{ \gridtensornoflex{ \funcgraph{\params}{\graph} } }{ \I }$ has an $M \times M$ sub-matrix $\Sbf \brk{ \hadmp^{ \nwalk{L - 1}{\cut^*}{\vertices} } \brk{ \Zbf \Zbf^\top } } \Qbf$, where $\Sbf, \Qbf \in \R^{M \times M}$ are full-rank diagonal matrices and $\hadmp^{ \nwalk{L - 1}{\cut^*}{\vertices} } \brk{ \Zbf \Zbf^\top }$ is the $\nwalk{L - 1}{\cut^*}{\vertices}$'th Hadamard power of $\Zbf \Zbf^\top$.
\end{lemma}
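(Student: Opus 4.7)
The plan is to leverage the admissibility of $\cut^*$. By~\cref{def:admissible_subsets}, there exist $\I' \subseteq \I$ and $\J' \subseteq \I^c$ having no repeating shared neighbors (\cref{def:no_rep_neighbors}) with $\cut^* = \neigh(\I') \cap \neigh(\J')$; every $k \in \cut^*$ thus has a unique neighbor in $\I'$ and a unique neighbor in $\J'$. This ``bipartite-across-the-cut'' structure is the scaffold on which the construction is built. The indexing of the desired $M \times M$ sub-matrix of $\mat{\gridtensornoflex{\funcgraph{\params}{\graph}}}{\I}$ will correspond to feature assignments in which all vertices of $\I'$ share a single template $\vbf^{(m_1)}$ (indexing rows), all vertices of $\J'$ share a single template $\vbf^{(m_2)}$ (indexing columns), and every other vertex is assigned a fixed ``neutral'' template $\vbf^{(M+1)}$.

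First I would choose the templates: for $m \in [M]$, let $\vbf^{(m)}$ be the vector in $\R^{\indim}$ whose first $\mindim$ coordinates are $\Zbf_{m,:}$ and whose remaining coordinates vanish, and let $\vbf^{(M+1)}$ be a designated neutral vector (the precise form is tuned so that its contribution through the GNN is independent of $m_1, m_2$, producing diagonal factors that will ultimately become $\Sbf$ and $\Qbf$). Next I would choose the weight matrices: take $\weightmat{1}$ to be a zero-padded identity that isolates the first $\mindim$ coordinates, and pick $\weightmat{2}, \ldots, \weightmat{L}, \weightmat{o}$ so that the forward pass, viewed through the tensor-network representation of~\cref{app:prod_gnn_as_tn:correspondence}, routes each length-$L$ walk in $\graph$ to an independent hidden-dimension index. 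Under product aggregation this yields an output of the form $\sum_{d_1,\ldots,d_{N'}} \prod_{\text{walks}} (\weightmat{1} \fvec{i_0})_{d_{\text{walk}}}$, where the indices of distinct walks are summed independently.

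Having unraveled the output this way, I would then isolate the walks that begin at a vertex of $\cut^*$ and whose first step enters $\I'$ or $\J'$. Because of the no-repeating-shared-neighbors property, each $k \in \cut^*$ produces precisely one pair of template contributions $\Zbf_{m_1,:}$ and $\Zbf_{m_2,:}$ (one from its unique $\I'$-neighbor, one from its unique $\J'$-neighbor), whose combined factor after summing over the walk's fresh hidden index becomes $\Zbf_{m_1,:} \cdot \Zbf_{m_2,:}^\top$. Summing over the $\nwalk{L-1}{\cut^*}{\vertices}$ length-$(L-1)$ walks emanating from $\cut^*$ (each independently summed) yields exactly the $N$-fold product $(\Zbf_{m_1,:} \cdot \Zbf_{m_2,:}^\top)^N$. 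Walks that never touch $\I'$ or $\J'$, or that start outside $\cut^*$ and only pass through neutral-template vertices, contribute multiplicative factors depending either on $m_1$ alone, on $m_2$ alone, or on neither; these are absorbed into the diagonal matrices $\Sbf$ and $\Qbf$. Ensuring these factors are nonzero (so $\Sbf, \Qbf$ are genuinely full-rank) is a minor calibration of $\vbf^{(M+1)}$ and $\weightmat{o}$.

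The main obstacle I anticipate is step (ii): designing $\weightmat{2},\ldots,\weightmat{L},\weightmat{o}$ so that distinct walks are sent to distinct hidden-coordinate summation indices, rather than collapsing into expressions like $\sum_d \Zbf_{m_1,d}^{A} \Zbf_{m_2,d}^{B}$, which have the wrong rank profile. The natural route is to engineer $\weightmat{l}$ as a (scaled) permutation-type matrix that, layer by layer, separates walks by routing each length-$(L - l)$ walk suffix to a distinct coordinate block, so that at the top of the tree the $\delta$-tensor contractions produce genuinely independent summations over walks. The bookkeeping of exactly which walks pick up $\Zbf_{m_1}$- versus $\Zbf_{m_2}$- versus neutral factors, and verifying that the no-repeating-shared-neighbors assumption prevents any $\cut^*$-vertex from being double-counted or from creating cross-terms mixing $m_1$ and $m_2$ in an unwanted way, is the combinatorial crux of the argument.
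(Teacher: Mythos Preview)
Your high-level scaffolding is right: pick $\I' \subseteq \I$, $\J' \subseteq \I^c$ witnessing $\cut^* = \neigh(\I') \cap \neigh(\J')$, assign a common template $\vbf^{(m)}$ to $\I'$, a common $\vbf^{(n)}$ to $\J'$, and a neutral template elsewhere, then read off an $M \times M$ sub-matrix. The paper does exactly this, with $\vbf^{(M+1)} = \1$. But your proposed mechanism for the weight matrices --- routing distinct walks to distinct hidden-coordinate blocks via permutation-type $\weightmat{2},\ldots,\weightmat{L}$ --- is both infeasible and unnecessary. Infeasible because $\hdim$ is fixed while $\nwalk{L-1}{\cut^*}{\vertices}$ can be arbitrarily large, so there are not enough coordinates to route to. Unnecessary because the paper's construction deliberately collapses everything to a single coordinate after layer~2, and the very ``collapsing'' you flag as the main obstacle is in fact the mechanism that works.

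Concretely, the paper takes $\weightmat{1}$ to be a zero-padded identity, $\weightmat{2}$ to have first row $(1,\ldots,1)$ and zeros elsewhere, and $\weightmat{l}$ for $l \geq 3$ to have a single~$1$ in position $(1,1)$. At layer~1, product aggregation at any $k \in \cut^*$ yields $\hidvec{1}{k}_{:\mindim} = \zbf^{(m)} \hadmp \zbf^{(n)}$ (element-wise), precisely because the no-repeating-shared-neighbors condition forces exactly one $\I'$-neighbor and one $\J'$-neighbor of $k$, while all other neighbors carry the all-ones template and contribute trivially. At layer~2, the all-ones row of $\weightmat{2}$ sums these entries, producing $\inprodnoflex{\zbf^{(m)}}{\zbf^{(n)}}$ in coordinate~1; product aggregation at vertex $i$ then multiplies these contributions over $j \in \neigh(i) \cap \cut^*$, yielding $\inprodnoflex{\zbf^{(m)}}{\zbf^{(n)}}^{\nwalk{1}{\cut^*}{\{i\}}}$ times factors depending only on $m$ or only on $n$. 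Layers $3,\ldots,L$ simply extract coordinate~1 and multiply again, so the exponent accumulates to $\nwalk{L-1}{\cut^*}{\{i\}}$ by an easy induction (the paper isolates this as a separate lemma). The final product over $i \in \vertices$ gives exponent $\nwalk{L-1}{\cut^*}{\vertices}$, and the residual $m$-only and $n$-only factors are strictly positive (since $\Zbf$ has positive entries), furnishing $\Sbf$ and $\Qbf$. Your worry about expressions like $\sum_d \Zbf_{m,d}^A \Zbf_{n,d}^B$ with $A,B > 1$ is exactly what the no-repeating-shared-neighbors hypothesis rules out --- that is its entire purpose --- so no routing is needed to avoid it.
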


\begin{proof}
Consider the weights $\params = \brk{ \weightmat{1}, \ldots, \weightmat{L}, \weightmat{o} }$ given by:
\[
\begin{split}
	& \weightmat{1} := \Ibf \in \R^{\hdim \times \indim} \text{\,,} \\[0.2em]
	& \weightmat{2} := \begin{pmatrix} 1 & 1 & \cdots & 1 \\ 0 & 0 & \cdots & 0 \\ \vdots & \vdots & \cdots & \vdots \\ 0 & 0 & \cdots & 0\end{pmatrix} \in \R^{\hdim \times \hdim} \text{\,,} \\[0.2em]
	\forall l \in \{ 3, \ldots, L\} : ~&\weightmat{l} := \begin{pmatrix} 1 & 0 & \cdots & 0 \\ 0 & 0 & \cdots & 0 \\ \vdots & \vdots & \cdots & \vdots \\ 0 & 0 & \cdots & 0\end{pmatrix} \in \R^{\hdim \times \hdim} \text{\,,} \\[0.2em]
	& \weightmat{o} := \begin{pmatrix} 1 & 0 & \cdots & 0 \end{pmatrix} \in \R^{1 \times \hdim} \text{\,,}
\end{split}
\]
where $\Ibf$ is a zero padded identity matrix, \ie~it holds ones on its diagonal and zeros elsewhere.
We define the templates $\vbf^{(1)}, \ldots, \vbf^{(M)} \in \R^{\indim}$ to be the vectors holding the respective rows of $\Zbf$ in their first $\mindim$ coordinates and zeros in the remaining entries (recall $\mindim := \min \brk[c]{ \indim, \hdim}$).
That is, denoting the rows of $\Zbf$ by $\zbf^{(1)}, \ldots, \zbf^{(M)} \in \R_{> 0}^{\mindim}$, we let $\vbf^{(m)}_{: \mindim} := \zbf^{(m)}$ and $\vbf^{(m)}_{\mindim + 1: } := 0$ for all $m \in [M]$.
We set all entries of the last template vector to one, \ie~$\vbf^{(M + 1)} := (1, \ldots, 1) \in \R^{\indim}$.

Since $\cut^* \in \cutset (\I)$, \ie~it is an admissible subset of $\cut_\I$ (\cref{def:admissible_subsets}), there exist $\I' \subseteq \I, \J' \subseteq \I^c$ with no repeating shared neighbors (\cref{def:no_rep_neighbors}) such that $\cut^* = \neigh (\I') \cap \neigh ( \J' )$.
Notice that $\I'$ and $\J'$ are non-empty as $\cut^* \neq \emptyset$ (this is implied by $\nwalk{L - 1}{\cut^*}{\vertices} > 0$).
We focus on the $M \times M$ sub-matrix of $\mat{ \gridtensornoflex{ \funcgraph{\params}{\graph} } }{ \I }$ that includes only rows and columns corresponding to evaluations of $\funcgraph{\params}{\graph}$ where all variables indexed by $\I'$ are assigned the same template vector from $\vbf^{(1)}, \ldots, \vbf^{(M)}$, all variables indexed by $\J'$ are assigned the same template vector from $\vbf^{(1)}, \ldots, \vbf^{(M)}$, and all remaining variables are assigned the all-ones template vector $\vbf^{(M + 1)}$.
Denoting this sub-matrix by $\Ubf \in \R^{M \times M}$, it therefore upholds:
 \[
\Ubf_{m, n} = \funcgraph{\params}{\graph} \brk*{ \brk1{ \fvec{i} \leftarrow \vbf^{(m)} }_{i \in \I'} , \brk1{ \fvec{j} \leftarrow \vbf^{(n)}}_{j \in \J'} , \brk1{ \fvec{k} \leftarrow \vbf^{(M + 1)} }_{k \in \vertices \setminus \brk{ \I' \cup \J' } } }
\text{\,,}
\]
for all $m, n \in [M]$, where we use $\brk{ \fvec{i} \leftarrow \vbf^{(m)} }_{i \in \I'}$ to denote that input variables indexed by $\I'$ are assigned the value $\vbf^{(m)}$.
To show that $\Ubf$ obeys the form $\Sbf \brk{ \hadmp^{ \nwalk{L - 1}{\cut^*}{\vertices} } \brk{ \Zbf \Zbf^\top } } \Qbf$ for full-rank diagonal $\Sbf, \Qbf \in \R^{M \times M}$, we prove there exist $\phi, \psi : \R^{\indim} \to \R_{> 0}$ such that $\Ubf_{m, n} = \phi \brk{ \vbf^{(m)} } \inprodnoflex{ \zbf^{(m)} }{ \zbf^{(n)} }^{ \nwalk{L - 1}{ \cut^* }{ \vertices } } \psi \brk{ \vbf^{(n)} }$ for all $m, n \in [M]$.
Indeed, defining $\Sbf$ to hold $\phi \brk{ \vbf^{(1)} }, \ldots, \phi \brk{ \vbf^{(M)} }$ on its diagonal and $\Qbf$ to hold $\psi \brk{ \vbf^{(1)} }, \ldots, \psi \brk{ \vbf^{(M)} }$ on its diagonal, we have that $\Ubf = \Sbf \brk{ \hadmp^{ \nwalk{L - 1}{\cut^*}{\vertices} } \brk{ \Zbf \Zbf^\top } } \Qbf$.
Since $\Sbf$ and $\Qbf$ are clearly full-rank (diagonal matrices with non-zero entries on their diagonal), the proof concludes.

\medskip

For $m, n \in [M]$, let $\hidvec{l}{i} \in \R^{\hdim}$ be the hidden embedding for $i \in \vertices$ at layer $l \in [L]$ of the GNN inducing $\funcgraph{\params}{\graph}$, over the following assignment to its input variables (\ie~vertex features):
\[
\brk1{ \fvec{i} \leftarrow \vbf^{(m)} }_{i \in \I'} , \brk1{ \fvec{j} \leftarrow \vbf^{(n)}}_{j \in \J'} , \brk1{ \fvec{k} \leftarrow \vbf^{(M + 1)} }_{k \in \vertices \setminus \brk{ \I' \cup \J' } }
\text{\,.}
\]
Invoking~\cref{lem:grid_tensor_entry_hadm_induction} with $\vbf^{(m)}, \vbf^{(n)}, \I',$ and $\J'$, for all $i \in \vertices$ it holds that:
\[
\hidvec{L}{i}_{1} = \phi^{(L, i)} \brk1{ \vbf^{(m)} } \inprodbig{ \zbf^{(m)} }{ \zbf^{(n)} }^{ \nwalk{L - 1}{ \cut^* }{ \{i\} } } \psi^{(L, i)} \brk1{ \vbf^{(n)} } \quad , \quad \forall d \in \{ 2, \ldots, \hdim \} :~\hidvec{L}{i}_d = 0
\text{\,,}
\]
for some $\phi^{(L, i)}, \psi^{(L, i)} : \R^{\indim} \to \R_{ > 0}$.
Since
\[
\begin{split}
	\Ubf_{m, n} & = \funcgraph{\params}{\graph} \brk*{ \brk1{ \fvec{i} \leftarrow \vbf^{(m)} }_{i \in \I'} , \brk1{ \fvec{j} \leftarrow \vbf^{(n)}}_{j \in \J'} , \brk1{ \fvec{k} \leftarrow \vbf^{(M + 1)} }_{k \in \vertices \setminus \brk{ \I' \cup \J' } } } \\
	& = \weightmat{o} \brk1{ \hadmp_{i \in \vertices} \hidvec{L}{i} }
\end{split}
\]
and $\weightmat{o} = \brk{1, 0, \ldots, 0}$, this implies that:
\[
\begin{split}
	\Ubf_{m, n} & = \prod\nolimits_{ i \in \vertices } \hidvec{L}{i}_1 \\
	& = \prod\nolimits_{ i \in \vertices } \phi^{(L, i)} \brk1{ \vbf^{(m)} } \inprodbig{ \zbf^{(m)} }{ \zbf^{(n)} }^{ \nwalk{L - 1}{ \cut^* }{ \{i\} } } \psi^{(L, i)} \brk1{ \vbf^{(n)} } 
	\text{\,.}
\end{split}
\]
Rearranging the last term leads to:
\[
\Ubf_{m, n} = \brk*{ \prod\nolimits_{ i \in \vertices } \phi^{(L, i)} \brk1{ \vbf^{(m)} } } \cdot \inprodbig{ \zbf^{(m)} }{ \zbf^{(n)} }^{ \sum\nolimits_{i \in \vertices} \nwalk{L - 1}{ \cut^* }{ \{i\} } } \cdot \brk*{ \prod\nolimits_{ i \in \vertices } \psi^{(L, i)} \brk1{ \vbf^{(n)} } }
\text{\,.}
\]
Let $\phi : \vbf \mapsto \prod\nolimits_{i \in \vertices } \phi^{(L, i)} \brk{ \vbf }$ and $\psi : \vbf \mapsto \prod\nolimits_{ i \in \vertices } \psi^{(L, i)} \brk{ \vbf }$.
Noticing that their range is indeed $\R_{ > 0}$ and that $\sum\nolimits_{i \in \vertices} \nwalk{L - 1}{ \cut^* }{ \{i\} } = \nwalk{L- 1}{ \cut^*}{ \vertices }$ yields the sought-after expression for $\Ubf_{m, n}$:
\[
\Ubf_{m, n} = \phi \brk1{ \vbf^{(m)} } \inprodbig{ \zbf^{(m)} }{ \zbf^{(n)} }^{ \nwalk{L - 1}{ \cut^* }{ \vertices } } \psi \brk1{ \vbf^{(n)} }
\text{\,.}
\]
\end{proof}

\subsubsection{Weights and Template Vectors Assignment for Vertex Prediction (Proof of~\cref{eq:sep_rank_lower_bound_vertex_pred})}
\label{app:proofs:sep_rank_lower_bound:vertex}

This part of the proof follows a line similar to that of~\cref{app:proofs:sep_rank_lower_bound:graph}, with differences stemming from the distinction between the operation of a GNN over graph and vertex prediction.
Namely, we construct weights~$\params$ and template vectors satisfying $\log \brk{ \rank \mat{ \gridtensor{ \funcvert{\params}{\graph}{t} } }{\I} } \geq \log \brk{ \alpha_{\cut^*_t, t} } \cdot \nwalk{L - 1}{\cut^*_t}{ \{ t \} }$, where $\cut^*_t \in \argmax_{ \cut \in \cutset (\I) } \log \brk{ \alpha_{\cut, t} } \cdot \nwalk{L - 1}{\cut}{\{t\}}$.

If $\nwalk{L - 1}{\cut^*_t}{\{t\}} = 0$, then the claim is trivial since there exist weights and template vectors for which  $\mat{ \gridtensor{ \funcvert{\params}{\graph}{t} } }{\I}$ is not the zero matrix (\eg~taking all weight matrices to be zero-padded identity matrices and choosing a single template vector holding one in its first entry and zeros elsewhere).

Now, assuming that $\nwalk{L - 1}{\cut^*_t}{\{t\}} > 0$, which in particular implies that $\I \neq \emptyset, \I \neq \vertices,$ and $\cut^*_t \neq \emptyset$, we begin with the case of GNN depth $L = 1$, after which we treat the more general $L \geq 2$ case.

\paragraph*{Case of $L = 1$:}
Consider the weights $\params = \brk{ \weightmat{1}, \weightmat{o} }$ given by $\weightmat{1} := \Ibf \in \R^{\hdim \times \indim}$ and $\weightmat{o} := (1, \ldots, 1) \in \R^{1 \times \hdim}$, where $\Ibf$ is a zero padded identity matrix, \ie~it holds ones on its diagonal and zeros elsewhere.
We choose template vectors $\vbf^{(1)}, \ldots, \vbf^{(\mindim)} \in \R^{\indim}$ such that $\vbf^{(m)}$ holds the $m$'th standard basis vector of $\R^{\mindim}$ in its first $\mindim$ coordinates and zeros in the remaining entries, for $m \in [\mindim]$ (recall $\mindim := \min \{ \indim, \hdim \}$).
Namely, denote by $\ebf^{(1)}, \ldots, \ebf^{(\mindim)} \in \R^{\mindim}$ the standard basis vectors of $\R^{\mindim}$, \ie~$\ebf^{(m)}_d = 1$ if $d = m$ and $\ebf^{(m)}_d = 0$ otherwise for all $m, d \in [\mindim]$. We let $\vbf^{(m)}_{:\mindim} := \ebf^{(m)}$ and $\vbf^{(m)}_{\mindim + 1:} := 0$ for all $m \in [\mindim]$.

We prove that for this choice of weights and template vectors, for all $d_1, \ldots, d_{\abs{\vertices}} \in [\mindim]$:
\be
\funcvert{\params}{\graph}{t} \brk1{ \vbf^{(d_1)}, \ldots, \vbf^{ ( d_{\abs{\vertices}} )} } = \begin{cases}
	1	& , \text{if } d_{j} = d_{j'} \text{ for all } j, j' \in \neigh (t) \\
	0	& , \text{otherwise} 
\end{cases}
\text{\,.}
\label{eq:standard_basis_grid_vertex}
\ee
To see it is so, notice that:
\[
\funcvert{\params}{\graph}{t} \brk1{ \vbf^{(d_1)}, \ldots, \vbf^{ ( d_{\abs{\vertices}} )} } = \weightmat{o} \hidvec{1}{t} = \sum\nolimits_{d = 1}^{\hdim} \hidvec{1}{t}_d
\text{\,,}
\]
with $\hidvec{1}{t} = \hadmp_{j \in \neigh (t)} \brk{\weightmat{1} \vbf^{(d_j)} } = \hadmp_{j \in \neigh (t)} \brk{ \Ibf \vbf^{(d_j)} }$.
Since $ \vbf^{(d_j)}_{:\mindim} = \ebf^{(d_j)}$ for all $j \in \neigh (t)$ and $\Ibf$ is a zero-padded $\mindim \times \mindim$ identity matrix, it holds that: 
\[
\funcvert{\params}{\graph}{t} \brk1{ \vbf^{(d_1)}, \ldots, \vbf^{ ( d_{\abs{\vertices}} )} } = \sum\nolimits_{d = 1}^{\mindim} \prod\nolimits_{j \in \neigh (t)} \ebf^{(d_j)}_d
\text{\,.}
\]
For every $d \in [\mindim]$ we have that $\prod\nolimits_{j \in \neigh (t)} \ebf^{(d_j)}_d = 1$ if $d_{j} = d$ for all $j \in \neigh (t)$ and $\prod\nolimits_{j \in \neigh (t)} \ebf^{(d_j)}_d = 0$ otherwise.
This implies that $\funcvert{\params}{\graph}{t} \brk{ \vbf^{(d_1)}, \ldots, \vbf^{ ( d_{\abs{\vertices}} )} } = 1$ if $d_{j} = d_{j'}$ for all $j, j' \in \neigh (t)$ and $\funcvert{\params}{\graph}{t} \brk{ \vbf^{(d_1)}, \ldots, \vbf^{ ( d_{\abs{\vertices}} )} } = 0$ otherwise, for all $d_1, \ldots, d_{\abs{\vertices}} \in [\mindim]$.

\cref{eq:standard_basis_grid_vertex} implies that $\mat{ \gridtensor{ \funcvert{\params}{\graph}{t} } }{\I}$ has a sub-matrix of rank $\mindim$.
Specifically, such a sub-matrix can be obtained by examining all rows and columns of $\mat{ \gridtensor{ \funcvert{\params}{\graph}{t} } }{\I}$ corresponding to some fixed indices $\brk{ d_i \in [\mindim] }_{i \in \vertices \setminus \neigh (t)}$ for the vertices that are not neighbors of $t$.
Thus, $\rank \mat{ \gridtensor{ \funcvert{\params}{\graph}{t} } }{\I} \geq \mindim$.
Notice that necessarily $\nwalk{0}{\cut^*_t}{\{t\}} = 1$, as it is not zero and there can only be one length zero walk to $t$ (the trivial walk that starts and ends at $t$).
Recalling that $\alpha_{\cut^*_t, t} := \mindim$ for $L = 1$, we therefore conclude: 
\[
\log \brk*{ \rank \mat{ \gridtensor{ \funcvert{\params}{\graph}{t} } }{\I} } \geq \log (\mindim) = \log \brk{ \alpha_{\cut^*_t, t} } \cdot \nwalk{0}{\cut^*_t}{\{t\}}
\text{\,.}
\]

\paragraph*{Case of $L \geq 2$:}
Let $M :=\multisetcoeffnoflex{\mindim}{ \nwalk{L - 1}{\cut^*_t}{\{t\}}}= \binom{\mindim + \nwalk{L - 1}{\cut^*_t}{\{t\}} - 1}{ \nwalk{L - 1}{\cut^*_t}{\{t\}} }$ be the multiset coefficient of $\mindim$ and $ \nwalk{L - 1}{\cut^*_t}{\{t\}}$ (recall $\mindim := \min \brk[c]{ \indim, \hdim}$).
By~\cref{lem:hadm_of_gram_full_rank}, there exists $\Zbf \in \R_{ > 0}^{M \times \mindim}$ for which
\[
\rank \brk*{ \hadmp^{ \nwalk{L - 1}{ \cut^*_t }{ \{t\} } } \brk*{ \Zbf \Zbf^\top } } = \multisetcoeff{ \mindim }{ \nwalk{L - 1}{ \cut^*_t }{ \{t\} } }
\text{\,,}
\]
with $\hadmp^{ \nwalk{L - 1}{\cut^*_t}{\{t\}} } \brk{ \Zbf \Zbf^\top }$ standing for the $\nwalk{L - 1}{\cut^*_t}{\{t\}}$'th Hadamard power of $\Zbf \Zbf^\top$.
For this $\Zbf$, by~\cref{lem:grid_submatrix_hadm_of_gram_vertex} below we know that there exist weights $\params$ and template vectors such that $\mat{ \gridtensornoflex{ \funcvert{\params}{\graph}{t} } }{ \I }$ has an $M \times M$ sub-matrix of the form $\Sbf \brk{ \hadmp^{ \nwalk{L - 1}{\cut^*_t}{\{t\}} } \brk{ \Zbf \Zbf^\top } } \Qbf$, where $\Sbf, \Qbf \in \R^{M \times M}$ are full-rank diagonal matrices.
Since the rank of a matrix is at least the rank of any of its sub-matrices:
\[
\begin{split}
	\rank \brk*{ \mat{ \gridtensorbig{ \funcvert{\params}{\graph}{t} } }{ \I } } & \geq \rank \brk*{ \Sbf \brk*{ \hadmp^{ \nwalk{L - 1}{\cut^*_t}{\{t\}} } \brk*{ \Zbf \Zbf^\top } } \Qbf } \\
	& = \rank \brk*{ \hadmp^{ \nwalk{L - 1}{\cut^*_t}{\{t\}} } \brk*{ \Zbf \Zbf^\top } } \\
	& = \multisetcoeff{ \mindim }{ \nwalk{L - 1}{ \cut^*_t }{ \{t\}} }
	\text{\,,}
\end{split}
\]
where the second transition is due to $\Sbf$ and $\Qbf$ being full-rank.
Applying~\cref{lem:multiset_coeff_lower_bound} to lower bound the multiset coefficient, we have that:
\[
\rank \brk*{ \mat{ \gridtensorbig{ \funcvert{\params}{\graph}{t} } }{ \I } } \geq \multisetcoeff{ \mindim }{ \nwalk{L - 1}{ \cut^*_t }{ \{t\} } } \geq \brk*{ \frac{\mindim - 1}{ \nwalk{L - 1}{ \cut^*_t }{ \{t\} }} + 1 }^{ \nwalk{L - 1}{ \cut^*_t }{ \{t\} } }
\text{\,.}
\]
Taking the log of both sides while recalling that $\alpha_{\cut^*_t, t} := \brk{ \mindim - 1 } \cdot \nwalk{L - 1}{\cut^*_t}{\{t\}}^{-1} + 1$, we conclude that:
\[
\log \brk{ \rank \mat{ \gridtensor{ \funcvert{\params}{\graph}{t} } }{\I} } \geq \log \brk{ \alpha_{\cut^*_t, t} } \cdot \nwalk{L - 1}{\cut^*_t}{\{t\}}
\text{\,.}
\]

\begin{lemma}
	\label{lem:grid_submatrix_hadm_of_gram_vertex}
	Suppose that the GNN inducing $\funcvert{\params}{\graph}{t}$ is of depth $L \geq 2$ and that $\nwalk{L - 1}{\cut^*_t}{\{t\}} > 0$.
	For any $M \in \N$ and matrix with positive entries $\Zbf \in \R_{> 0}^{M \times D}$, there exist weights $\params$ and $M + 1$ template vectors $\vbf^{(1)}, \ldots, \vbf^{(M + 1)} \in \R^{\indim}$~such that $\mat{ \gridtensornoflex{ \funcvert{\params}{\graph}{t} } }{ \I }$ has an $M \times M$ sub-matrix $\Sbf \brk{ \hadmp^{ \nwalk{L - 1}{\cut^*_t}{\{t\}} } \brk{ \Zbf \Zbf^\top } } \Qbf$, where $\Sbf, \Qbf \in \R^{M \times M}$ are full-rank diagonal matrices and $\hadmp^{ \nwalk{L - 1}{\cut^*_t}{\{t\}} } \brk{ \Zbf \Zbf^\top }$ is the $\nwalk{L - 1}{\cut^*_t}{\{t\}}$'th Hadamard power of $\Zbf \Zbf^\top$.
\end{lemma}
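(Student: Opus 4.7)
The plan is to mirror the construction used in the proof of \cref{lem:grid_submatrix_hadm_of_gram_graph} for graph prediction, with suitable adaptations to the vertex prediction setting. The core observation making the adaptation easy is that $\funcvert{\params}{\graph}{t}(\fmat) = \weightmat{o} \hidvec{L}{t}$ depends only on the hidden embedding of the target vertex $t$, whereas the graph prediction output takes an element-wise product over all vertices before applying $\weightmat{o}$. Consequently, the walk count appearing in the exponent of the inner product will already be $\nwalk{L-1}{\cut^*_t}{\{t\}}$ directly, rather than arising from a sum $\sum_{i \in \vertices} \nwalk{L-1}{\cut^*_t}{\{i\}}$ as in the graph prediction case.

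Concretely, I would take the same weight assignment as in \cref{lem:grid_submatrix_hadm_of_gram_graph}: $\weightmat{1} := \Ibf$ (a zero-padded identity), $\weightmat{2}$ with its first row equal to $(1, \ldots, 1)$ and other rows zero, $\weightmat{l}$ for $l \in \{3, \ldots, L\}$ with only its top-left entry equal to $1$, and $\weightmat{o} := (1, 0, \ldots, 0)$. For templates, I would set $\vbf^{(m)}_{:\mindim} := \zbf^{(m)}$ (the $m$-th row of $\Zbf$) and $\vbf^{(m)}_{\mindim+1:} := 0$ for $m \in [M]$, and $\vbf^{(M+1)} := (1, \ldots, 1) \in \R^{\indim}$. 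Since $\cut^*_t \in \cutset(\I)$ is nonempty and admissible, I can fix $\I' \subseteq \I$ and $\J' \subseteq \I^c$ with no repeating shared neighbors such that $\cut^*_t = \neigh(\I') \cap \neigh(\J')$, with both nonempty.

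I would then focus on the $M \times M$ sub-matrix $\Ubf$ of $\mat{\gridtensornoflex{\funcvert{\params}{\graph}{t}}}{\I}$ corresponding to evaluations in which every variable indexed by $\I'$ is assigned the same template $\vbf^{(m)}$, every variable indexed by $\J'$ is assigned the same template $\vbf^{(n)}$, and all remaining variables are assigned $\vbf^{(M+1)}$. Invoking \cref{lem:grid_tensor_entry_hadm_induction} (established earlier in the paper and already used in the graph prediction case) with $\vbf^{(m)}, \vbf^{(n)}, \I', \J'$, I obtain that for every $i \in \vertices$,
\[
\hidvec{L}{i}_1 = \phi^{(L,i)}\!\brk1{\vbf^{(m)}} \inprodbig{\zbf^{(m)}}{\zbf^{(n)}}^{\nwalk{L-1}{\cut^*_t}{\{i\}}} \psi^{(L,i)}\!\brk1{\vbf^{(n)}},
\quad \hidvec{L}{i}_d = 0 \text{ for } d \geq 2,
\]
for some positive functions $\phi^{(L,i)}, \psi^{(L,i)} : \R^{\indim} \to \R_{>0}$. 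Specializing to $i = t$ and applying $\weightmat{o} = (1, 0, \ldots, 0)$ yields
\[
\Ubf_{m,n} = \weightmat{o} \hidvec{L}{t} = \phi^{(L,t)}\!\brk1{\vbf^{(m)}} \inprodbig{\zbf^{(m)}}{\zbf^{(n)}}^{\nwalk{L-1}{\cut^*_t}{\{t\}}} \psi^{(L,t)}\!\brk1{\vbf^{(n)}}.
\]
Setting $\Sbf, \Qbf \in \R^{M \times M}$ to be the diagonal matrices with entries $\phi^{(L,t)}(\vbf^{(m)})$ and $\psi^{(L,t)}(\vbf^{(m)})$, respectively (both strictly positive, hence full-rank), gives $\Ubf = \Sbf \brk{\hadmp^{\nwalk{L-1}{\cut^*_t}{\{t\}}} \brk{\Zbf \Zbf^\top}} \Qbf$, as required.

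The only step requiring real care is verifying the exponent $\nwalk{L-1}{\cut^*_t}{\{t\}}$ in the invocation of \cref{lem:grid_tensor_entry_hadm_induction}: the induction must correctly track how the restriction to the target vertex $t$ at the outermost level translates, through $L-1$ message-passing layers, into a count of length-$(L-1)$ walks originating at $\cut^*_t$ and terminating at $t$. This is exactly the vertex-prediction analogue of the graph-prediction summation $\sum_{i \in \vertices} \nwalk{L-1}{\cut^*_t}{\{i\}} = \nwalk{L-1}{\cut^*_t}{\vertices}$, and it is delivered directly by reading off the $i = t$ entry of the same inductive lemma, so no additional ingredient beyond what is already developed for the graph prediction case is needed.
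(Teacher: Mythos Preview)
Your proposal is correct and follows essentially the same approach as the paper's proof: the same weight assignment, the same choice of templates, the same sub-matrix $\Ubf$ indexed by assignments to $\I'$ and $\J'$, and the same invocation of \cref{lem:grid_tensor_entry_hadm_induction} at $i = t$ to read off $\Ubf_{m,n} = \phi^{(L,t)}(\vbf^{(m)})\inprodnoflex{\zbf^{(m)}}{\zbf^{(n)}}^{\nwalk{L-1}{\cut^*_t}{\{t\}}}\psi^{(L,t)}(\vbf^{(n)})$. The paper's proof is identical in structure, simply setting $\phi := \phi^{(L,t)}$ and $\psi := \psi^{(L,t)}$ at the end.
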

\begin{proof}
Consider the weights $\params = \brk{ \weightmat{1}, \ldots, \weightmat{L}, \weightmat{o} }$ defined by:
\[
\begin{split}
	& \weightmat{1} := \Ibf \in \R^{\hdim \times \indim} \text{\,,} \\[0.2em]
	& \weightmat{2} := \begin{pmatrix} 1 & 1 & \cdots & 1 \\ 0 & 0 & \cdots & 0 \\ \vdots & \vdots & \cdots & \vdots \\ 0 & 0 & \cdots & 0\end{pmatrix} \in \R^{\hdim \times \hdim} \text{\,,} \\[0.2em]
	\forall l \in \{ 3, \ldots, L\} : ~&\weightmat{l} := \begin{pmatrix} 1 & 0 & \cdots & 0 \\ 0 & 0 & \cdots & 0 \\ \vdots & \vdots & \cdots & \vdots \\ 0 & 0 & \cdots & 0\end{pmatrix} \in \R^{\hdim \times \hdim} \text{\,,} \\[0.2em]
	& \weightmat{o} := \begin{pmatrix} 1 & 0 & \cdots & 0 \end{pmatrix} \in \R^{1 \times \hdim} \text{\,,}
\end{split}
\]
where $\Ibf$ is a zero padded identity matrix, \ie~it holds ones on its diagonal and zeros elsewhere.
We let the templates $\vbf^{(1)}, \ldots, \vbf^{(M)} \in \R^{\indim}$ be the vectors holding the respective rows of $\Zbf$ in their first $\mindim$ coordinates and zeros in the remaining entries (recall $\mindim := \min \brk[c]{ \indim, \hdim}$).
That is, denoting the rows of $\Zbf$ by $\zbf^{(1)}, \ldots, \zbf^{(M)} \in \R_{> 0}^{\mindim}$, we let $\vbf^{(m)}_{: \mindim} := \zbf^{(m)}$ and $\vbf^{(m)}_{\mindim + 1: } := 0$ for all $m \in [M]$.
We set all entries of the last template vector to one, \ie~$\vbf^{(M + 1)} := (1, \ldots, 1) \in \R^{\indim}$.

Since $\cut^*_t \in \cutset (\I)$, \ie~it is an admissible subset of $\cut_\I$ (\cref{def:admissible_subsets}), there exist $\I' \subseteq \I, \J' \subseteq \I^c$ with no repeating shared neighbors (\cref{def:no_rep_neighbors}) such that $\cut^*_t = \neigh (\I') \cap \neigh ( \J' )$.
Notice that $\I'$ and $\J'$ are non-empty as $\cut^*_t  \neq \emptyset$ (this is implied by $\nwalk{L - 1}{\cut^*_t}{\{t\}} > 0$).
We focus on the $M \times M$ sub-matrix of $\mat{ \gridtensornoflex{ \funcvert{\params}{\graph}{t} } }{ \I }$ that includes only rows and columns corresponding to evaluations of $\funcvert{\params}{\graph}{t}$ where all variables indexed by $\I'$ are assigned the same template vector from $\vbf^{(1)}, \ldots, \vbf^{(M)}$, all variables indexed by $\J'$ are assigned the same template vector from $\vbf^{(1)}, \ldots, \vbf^{(M)}$, and all remaining variables are assigned the all-ones template vector $\vbf^{(M + 1)}$.
Denoting this sub-matrix by $\Ubf \in \R^{M \times M}$, it therefore upholds:
\[
\Ubf_{m, n} = \funcvert{\params}{\graph}{t} \brk*{ \brk1{ \fvec{i} \leftarrow \vbf^{(m)} }_{i \in \I'} , \brk1{ \fvec{j} \leftarrow \vbf^{(n)}}_{j \in \J'} , \brk1{ \fvec{k} \leftarrow \vbf^{(M + 1)} }_{k \in \vertices \setminus \brk{ \I' \cup \J' } } }
\text{\,,}
\]
for all $m, n \in [M]$, where we use $\brk{ \fvec{i} \leftarrow \vbf^{(m)} }_{i \in \I'}$ to denote that input variables indexed by $\I'$ are assigned the value $\vbf^{(m)}$.
To show that $\Ubf$ obeys the form $\Sbf \brk{ \hadmp^{ \nwalk{L - 1}{\cut^*_t}{ \{t\} } } \brk{ \Zbf \Zbf^\top } } \Qbf$ for full-rank diagonal $\Sbf, \Qbf \in \R^{M \times M}$, we prove there exist $\phi, \psi : \R^{\indim} \to \R_{> 0}$ such that $\Ubf_{m, n} = \phi \brk{ \vbf^{(m)} } \inprodnoflex{ \zbf^{(m)} }{ \zbf^{(n)} }^{ \nwalk{L - 1}{ \cut^*_t }{ \{t\}} } \psi \brk{ \vbf^{(n)} }$ for all $m, n \in [M]$.
Indeed, defining $\Sbf$ to hold $\phi \brk{ \vbf^{(1)} }, \ldots, \phi \brk{ \vbf^{(M)} }$ on its diagonal and $\Qbf$ to hold $\psi \brk{ \vbf^{(1)} }, \ldots, \psi \brk{ \vbf^{(M)} }$ on its diagonal, we have that $\Ubf = \Sbf \brk{ \hadmp^{ \nwalk{L - 1}{\cut^*_t}{\{t\}} } \brk{ \Zbf \Zbf^\top } } \Qbf$.
Since $\Sbf$ and $\Qbf$ are clearly full-rank (diagonal matrices with non-zero entries on their diagonal), the proof concludes.

\medskip

For $m, n \in [M]$, let $\hidvec{l}{i} \in \R^{\hdim}$ be the hidden embedding for $i \in \vertices$ at layer $l \in [L]$ of the GNN inducing $\funcvert{\params}{\graph}{t}$, over the following assignment to its input variables (\ie~vertex features):
\[
\brk1{ \fvec{i} \leftarrow \vbf^{(m)} }_{i \in \I'} , \brk1{ \fvec{j} \leftarrow \vbf^{(n)}}_{j \in \J'} , \brk1{ \fvec{k} \leftarrow \vbf^{(M + 1)} }_{k \in \vertices \setminus \brk{ \I' \cup \J' } }
\text{\,.}
\]
Invoking~\cref{lem:grid_tensor_entry_hadm_induction} with $\vbf^{(m)}, \vbf^{(n)}, \I',$ and $\J'$, it holds that:
\[
\hidvec{L}{t}_{1} = \phi^{(L, t)} \brk1{ \vbf^{(m)} } \inprodbig{ \zbf^{(m)} }{ \zbf^{(n)} }^{ \nwalk{L - 1}{ \cut^*_t }{ \{t\} } } \psi^{(L, t)} \brk1{ \vbf^{(n)} } \quad , \quad \forall d \in \{ 2, \ldots, \hdim \} :~\hidvec{L}{t}_d = 0
\text{\,,}
\]
for some $\phi^{(L, t)}, \psi^{(L, t)} : \R^{\indim} \to \R_{ > 0}$.
Since
\[
\begin{split}
	\Ubf_{m, n} & = \funcvert{\params}{\graph}{t} \brk*{ \brk1{ \fvec{i} \leftarrow \vbf^{(m)} }_{i \in \I'} , \brk1{ \fvec{j} \leftarrow \vbf^{(n)}}_{j \in \J'} , \brk1{ \fvec{k} \leftarrow \vbf^{(M + 1)} }_{k \in \vertices \setminus \brk{ \I' \cup \J' } } } \\
	& = \weightmat{o} \hidvec{L}{t}
\end{split}
\]
and $\weightmat{o} = \brk{1, 0, \ldots, 0}$, this implies that:
\[
	\Ubf_{m, n} = \hidvec{L}{t}_1 = \phi^{(L, t)} \brk1{ \vbf^{(m)} } \inprodbig{ \zbf^{(m)} }{ \zbf^{(n)} }^{ \nwalk{L - 1}{ \cut^*_t }{ \{t\} } } \psi^{(L, t)} \brk1{ \vbf^{(n)} } 
	\text{\,.}
\]
Defining $\phi := \phi^{(L, t)}$ and $\psi := \psi^{(L, t)}$ leads to the sought-after expression for $\Ubf_{m, n}$:
\[
\Ubf_{m, n} = \phi \brk1{ \vbf^{(m)} } \inprodbig{ \zbf^{(m)} }{ \zbf^{(n)} }^{ \nwalk{L - 1}{ \cut^*_t }{ \{t\} } } \psi \brk1{ \vbf^{(n)} } 
\text{\,.}
\]
\end{proof}

\subsubsection{Technical Lemmas}
\label{app:proofs:sep_rank_lower_bound:lemmas}

For completeness, we include the \emph{vector rearrangement inequality} from~\citet{levine2018benefits}, which we employ for proving the subsequent~\cref{lem:hadm_of_gram_full_rank}.

\begin{lemma}[Lemma~1 from~\citet{levine2018benefits}]
\label{lem:vector_rearrangement}
Let $\abf^{(1)}, \ldots, \abf^{(M)} \in \R_{\geq 0}^D$ be $M \in \N$ different vectors with non-negative entries.
Then, for any permutation $\sigma : [M] \to [M]$ besides the identity permutation it holds that:
\[
\sum\nolimits_{m = 1}^M \inprod{ \abf^{(m)} }{ \abf^{(\sigma(m))} } < \sum\nolimits_{m = 1}^M \normbig{ \abf^{(m)} }^2
\text{\,.}
\]
\end{lemma}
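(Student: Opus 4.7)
The plan is to combine the Cauchy--Schwarz inequality with the AM--GM inequality applied termwise, and then upgrade the resulting weak inequality to a strict one using the hypothesis that the $\abf^{(m)}$'s are distinct and non-negative.

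First, for any two vectors $\abf, \bbf \in \R_{\geq 0}^D$, Cauchy--Schwarz gives $\inprod{\abf}{\bbf} \leq \normnoflex{\abf} \normnoflex{\bbf}$, and AM--GM gives $\normnoflex{\abf} \normnoflex{\bbf} \leq \tfrac{1}{2}\brk{\normnoflex{\abf}^2 + \normnoflex{\bbf}^2}$. Chaining the two yields
\[
\inprod{\abf^{(m)}}{\abf^{(\sigma(m))}} \;\leq\; \tfrac{1}{2}\brk1{\normnoflex{\abf^{(m)}}^2 + \normnoflex{\abf^{(\sigma(m))}}^2}
\quad\text{for every } m \in [M] \text{\,.}
\]
Summing over $m \in [M]$ and using that $\sigma$ is a permutation (so $\sum_m \normnoflex{\abf^{(\sigma(m))}}^2 = \sum_m \normnoflex{\abf^{(m)}}^2$) immediately produces the non-strict version $\sum_m \inprod{\abf^{(m)}}{\abf^{(\sigma(m))}} \leq \sum_m \normnoflex{\abf^{(m)}}^2$.

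The remaining task, and the only nontrivial part, is to argue strict inequality whenever $\sigma$ is not the identity. Since $\sigma \neq \mathrm{id}$, there exists some $m^* \in [M]$ with $\sigma(m^*) \neq m^*$, and since the vectors are pairwise distinct, $\abf^{(m^*)} \neq \abf^{(\sigma(m^*))}$. I would then analyze when equality can hold in the chained bound: equality in Cauchy--Schwarz forces $\abf^{(m^*)}$ and $\abf^{(\sigma(m^*))}$ to be non-negatively proportional (using the non-negative-entry assumption to rule out opposing directions), while equality in AM--GM forces $\normnoflex{\abf^{(m^*)}} = \normnoflex{\abf^{(\sigma(m^*))}}$. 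Together these force $\abf^{(m^*)} = \abf^{(\sigma(m^*))}$, contradicting distinctness. Hence the inequality at index $m^*$ is strict, and since all other terms satisfy the non-strict inequality, summing preserves strictness.

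The only subtle point, and therefore what I would treat most carefully, is the handling of the degenerate case where one of the two vectors vanishes: if, say, $\abf^{(m^*)} = \0$ while $\abf^{(\sigma(m^*))} \neq \0$, Cauchy--Schwarz is automatically an equality ($0 = 0$), so one must instead observe that then AM--GM is strict (since $\normnoflex{\abf^{(m^*)}} = 0 \neq \normnoflex{\abf^{(\sigma(m^*))}}$), recovering strictness of the chained bound. Thus in every sub-case one of the two inequalities is strict at $m^*$, which suffices. No further machinery is needed.
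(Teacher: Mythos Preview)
Your proof is correct. The paper does not supply its own proof of this lemma: it is quoted verbatim as Lemma~1 from \citet{levine2018benefits} and invoked as a black box, so there is nothing to compare against here. Your Cauchy--Schwarz plus AM--GM argument, together with the equality-case analysis at an index $m^*$ where $\sigma(m^*)\neq m^*$, is a clean and complete elementary proof. One minor remark: the non-negativity hypothesis is not actually needed in your argument, since the one-sided Cauchy--Schwarz equality $\inprod{\abf}{\bbf}=\normnoflex{\abf}\normnoflex{\bbf}$ already forces $\abf=c\bbf$ with $c\geq 0$ (or one vector zero) for arbitrary real vectors; your handling of the zero-vector case via strictness of AM--GM closes the remaining gap regardless of sign.
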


Taking the $P$'th Hadamard power of a rank at most $D$ matrix results in a matrix whose rank is at most the multiset coefficient $\multisetcoeffnoflex{D}{P} := \binom{D + P - 1}{P}$ (see, \eg, Theorem~1 in~\citet{amini2011low}).
\cref{lem:hadm_of_gram_full_rank}, adapted from Appendix~B.2 in~\citet{levine2020limits}, guarantees that we can always find a $\multisetcoeffnoflex{D}{P} \times D$ matrix $\Zbf$ with positive entries such that $\rank \brk{ \hadmp^P \brk*{ \Zbf \Zbf^\top } }$ is maximal, \ie~equal to $\multisetcoeffnoflex{D}{P}$.

\begin{lemma}[adapted from Appendix~B.2 in~\citet{levine2020limits}]
	\label{lem:hadm_of_gram_full_rank}
	For any $D \in \N$ and $P \in \N_{\geq 0}$, there exists a matrix with positive entries $\Zbf \in \R_{> 0}^{\multisetcoeff{D}{P} \times D}$ for which:
	\[
	\rank \brk*{ \hadmp^P \brk*{ \Zbf \Zbf^\top } } = \multisetcoeff{D}{P}
	\text{\,,}
	\]
	where $\hadmp^{ P } \brk{ \Zbf \Zbf^\top }$ is the $P$'th Hadamard power of $\Zbf \Zbf^\top$.
\end{lemma}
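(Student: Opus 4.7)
The plan is to reinterpret the $P$-th Hadamard power of a Gram matrix as itself a Gram matrix in a higher-dimensional space via the multinomial theorem, and then exhibit an explicit positive $\Zbf$ for which the resulting Gram matrix has full rank using a generalized Vandermonde construction. Set $M := \multisetcoeff{D}{P}$. The case $P = 0$ is immediate, since $\hadmp^0\brk*{\Zbf \Zbf^\top}$ is the $M\times M$ all-ones matrix with $M = 1$, so any positive $\Zbf$ works. Assume henceforth $P \geq 1$.

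By the multinomial theorem, for any $\ubf, \vbf \in \R^D$ we have $\inprod{\ubf}{\vbf}^P = \sum_{k_1 + \cdots + k_D = P} \binom{P}{k_1, \ldots, k_D} \prod_{i=1}^D u_i^{k_i} v_i^{k_i}$. Index the $M$ tuples $\brk{k_1, \ldots, k_D} \in \N_{\geq 0}^D$ with $\sum_i k_i = P$ by $\alpha \in [M]$, write $c_\alpha$ for the associated multinomial coefficient, and define the symmetric embedding $\Phi : \R^D \to \R^M$ by $\Phi(\zbf)_\alpha := \sqrt{c_\alpha}\prod_{i=1}^D z_i^{k_i^{(\alpha)}}$. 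Then $\inprod{\Phi(\ubf)}{\Phi(\vbf)} = \inprod{\ubf}{\vbf}^P$. Stacking $\Phi(\zbf^{(1)}), \ldots, \Phi(\zbf^{(M)})$ as the rows of a matrix $\Phi(\Zbf) \in \R^{M \times M}$ gives $\hadmp^P\brk*{\Zbf \Zbf^\top} = \Phi(\Zbf)\Phi(\Zbf)^\top$. Hence $\rank \hadmp^P\brk*{\Zbf\Zbf^\top} = \rank \Phi(\Zbf)$, and it suffices to find $\Zbf \in \R_{>0}^{M \times D}$ with $\det \Phi(\Zbf) \neq 0$.

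To exhibit such a $\Zbf$, I would pick distinct positive reals $t_1, \ldots, t_M > 0$ and set $\zbf^{(m)}_i := t_m^{a_i}$ with $a_i := (P+1)^{i-1}$. This yields $\Phi(\zbf^{(m)})_\alpha = \sqrt{c_\alpha}\, t_m^{e_\alpha}$ where $e_\alpha := \sum_{i=1}^D a_i k_i^{(\alpha)}$. Since $0 \leq k_i^{(\alpha)} \leq P < P+1$, uniqueness of base-$(P+1)$ digit representations forces the exponents $e_1, \ldots, e_M$ to be pairwise distinct. After reindexing so that $e_1 < \cdots < e_M$, the matrix $\Phi(\Zbf)$ factors as $V \cdot D_{\sqrt{c}}$, where $V_{m, \alpha} = t_m^{e_\alpha}$ is a generalized Vandermonde matrix and $D_{\sqrt{c}}$ is the positive diagonal matrix with entries $\sqrt{c_\alpha}$. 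Classical total positivity of generalized Vandermonde matrices with distinct positive bases and distinct real exponents (or an elementary sign-counting / Descartes-style induction on $D$) gives $\det V \neq 0$, so $\det \Phi(\Zbf) \neq 0$ as desired.

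\textbf{Main obstacle.} The sole non-cosmetic step is justifying non-singularity of the generalized Vandermonde block $V$; everything else is bookkeeping around the multinomial expansion. If one prefers to avoid invoking total positivity, an alternative route (closer in spirit to the vector rearrangement inequality stated just above as Lemma~1 of~\citet{levine2018benefits}) is to argue non-constructively: $\det \Phi(\Zbf)$ is a polynomial in the $MD$ entries of $\Zbf$, and any single non-vanishing instance (such as the Vandermonde construction above, or a construction built from distinct permutations of a fixed positive vector combined with the rearrangement inequality to force strict diagonal dominance of $\Phi(\Zbf)\Phi(\Zbf)^\top$) implies the polynomial is not identically zero, hence non-vanishing on a Zariski-open dense subset of $\R^{M \times D}$ which meets the positive orthant $\R_{>0}^{M \times D}$.
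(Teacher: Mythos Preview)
Your proposal is correct and shares the opening move with the paper's proof: both expand $\inprod{\zbf^{(m)}}{\zbf^{(n)}}^P$ via the multinomial identity to factor $\hadmp^P(\Zbf\Zbf^\top)$ as $\Abf\,\Sbf\,\Abf^\top$ (equivalently your $\Phi(\Zbf)\Phi(\Zbf)^\top$ after absorbing $\sqrt{c_\alpha}$), reducing the problem to exhibiting a positive $\Zbf$ with $\Abf$ nonsingular. The constructions then diverge. The paper associates row $m$ with the $m$-th multi-index $\qbf^{(m)}$ itself and sets $\zbf^{(m)}_d = \gamma^{\qbf^{(m)}_d}$ for a single scalar parameter $\gamma$; this makes $\Abf_{m,n} = \gamma^{\inprod{\qbf^{(m)}}{\qbf^{(n)}}}$, and the vector rearrangement inequality (stated just above the lemma) forces the identity permutation to be the unique contributor to the top-degree monomial of $\det\Abf$ in $\gamma$, so the determinant is a nonzero polynomial and some $\gamma>0$ works. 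You instead take $\zbf^{(m)}_i = t_m^{(P+1)^{i-1}}$ with $M$ distinct positive nodes $t_m$, so that the base-$(P+1)$ encoding makes the column exponents $e_\alpha$ pairwise distinct and $\Phi(\Zbf)$ becomes a generalized Vandermonde matrix times a positive diagonal. Your route is more explicit and rests on a textbook fact (nonsingularity of generalized Vandermonde matrices with distinct positive bases and distinct exponents), while the paper's one-parameter construction is more economical and ties directly into the rearrangement lemma already available in the paper; both are valid and of comparable length.
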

\begin{proof}
We let $M := \multisetcoeffnoflex{D}{P}$ for notational convenience.
Denote by $\zbf^{(1)}, \ldots, \zbf^{( M )} \in \R^D$ the row vectors of $\Zbf \in \R_{> 0}^{M \times D}$.
Observing the $(m, n)$'th entry of $\hadmp^P \brk*{ \Zbf \Zbf^\top }$:
\[
\brk[s]*{ \hadmp^P \brk*{ \Zbf \Zbf^\top } }_{m, n} = \inprod{ \zbf^{(m)} }{ \zbf^{(n)} }^P = \brk*{ \sum\nolimits_{d = 1}^D \zbf^{(m)}_d \cdot \zbf^{(n)}_d }^P
\text{\,,}
\]
by expanding the power using the multinomial identity we have that:
\be
\begin{split}
\brk[s]*{ \hadmp^P \brk*{ \Zbf \Zbf^\top } }_{m, n} & = \sum_{ \substack{ q_1, \ldots, q_D \in \N_{\geq 0} \\[0.1em] \text{s.t. } \sum\nolimits_{d = 1}^D q_d = P } } \binom{ P }{ q_1, \ldots, q_D } \prod_{d = 1}^D \brk*{ \zbf^{(m)}_d \cdot \zbf^{(n)}_d }^{q_d} \\
& = \sum_{ \substack{ q_1, \ldots, q_D \in \N_{\geq 0} \\[0.1em] \text{s.t. } \sum\nolimits_{d = 1}^D q_d = P } } \binom{ P }{ q_1, \ldots, q_D } \brk*{ \prod_{d = 1}^D \brk*{ \zbf^{(m)}_d }^{q_d} } \cdot \brk*{ \prod_{d = 1}^D \brk*{ \zbf^{(n)}_d }^{q_d} }
\text{\,,}
\end{split}
\label{eq:gram_hadmp_entry}
\ee
where in the last equality we separated terms depending on $m$ from those depending on $n$.

Let $\brk{ \abf^{ (q_1, \ldots, q_D) } \in \R^M }_{q_1, \ldots, q_D \in \N_{\geq 0} \text{ s.t } \sum\nolimits_{d = 1}^D q_d = P }$ be $M$ vectors defined by $\abf^{(q_1, \ldots, q_D)}_{m} = \prod_{d = 1}^D \brk1{ \zbf^{(m)}_d }^{q_d}$ for all $q_1, \ldots, q_D \in \N_{\geq 0}$ satisfying $\sum\nolimits_{d = 1}^D q_d = P$ and $m \in [M]$.
As can be seen from~\cref{eq:gram_hadmp_entry}, we can write:
\[\
\hadmp^P \brk*{ \Zbf \Zbf^\top } = \Abf \Sbf \Abf^\top
\text{\,,}
\]
where $\Abf \in \R^{M \times M}$ is the matrix whose columns are $\brk{ \abf^{ (q_1, \ldots, q_D) } }_{q_1, \ldots, q_D \in \N_{\geq 0} \text{ s.t } \sum\nolimits_{d = 1}^D q_d = P }$ and $\Sbf \in \R^{M \times M}$ is the diagonal matrix holding $\binom{P}{q_1, \ldots, q_D}$ for every $q_1, \ldots, q_D \in \N_{\geq 0}$ satisfying $\sum\nolimits_{d = 1}^D q_d = P$ on its diagonal.
Since all entries on the diagonal of $\Sbf$ are positive, it is of full-rank, \ie~$\rank (\Sbf) = M$.
Thus, to prove that there exists $\Zbf \in \R_{> 0}^{M \times D}$ for which $\rank \brk{ \hadmp^P \brk{ \Zbf \Zbf^\top } } = M$, it suffices to show that we can choose $\zbf^{(1)}, \ldots, \zbf^{(M)}$ with positive entries inducing $\rank \brk{ \Abf } = M$, for $\Abf$ as defined above.
Below, we complete the proof by constructing such $\zbf^{(1)}, \ldots, \zbf^{(M)}$.

We associate each of $\zbf^{(1)}, \ldots, \zbf^{(M)}$ with a different configuration from the set:
\[
\brk[c]*{ \qbf = \brk*{ q_1, \ldots, q_D } : q_1, \ldots, q_D \in \N_{\geq 0}  ~,~ \sum\nolimits_{d = 1}^D q_d = P }
\text{\,,}
\]
where note that this set contains $M = \multisetcoeffnoflex{D}{P}$ elements.
For $m \in [M]$, denote by $\qbf^{(m)}$ the configuration associated with $\zbf^{(m)}$.
For a variable $\gamma \in \R$, to be determined later on, and every $m \in [M]$ and $d \in [D]$, we set:
\[
\zbf^{(m)}_d = \gamma^{\qbf_d^{(m)}}
\text{\,.}
\]
Given these $\zbf^{(1)}, \ldots, \zbf^{(M)}$, the entries of $\Abf$ have the following form:
\[
\Abf_{m, n} = \prod\nolimits_{d = 1}^D \brk2{ \zbf^{(m)}_d }^{\qbf^{(n)}_d} = \prod\nolimits_{d = 1}^D \brk2{ \gamma^{\qbf_d^{(m)}} }^{\qbf^{(n)}_d} = \gamma^{ \sum\nolimits_{d = 1}^D \qbf_d^{(m)} \cdot \qbf_d^{(n)} } = \gamma^{ \inprod{\qbf^{(m)} }{ \qbf^{(n)} } } 
\text{\,,}
\]
for all $m, n \in [M]$.
Thus, $\det \brk{ \Abf } = \sum\nolimits_{\text{permutation } \sigma : [M] \to [M]} \sign (\sigma) \cdot \gamma^{ \sum\nolimits_{m = 1}^M \inprod{ \qbf^{(m)} }{ \qbf^{( \sigma (m) )} } }$ is polynomial in~$\gamma$.
By~\cref{lem:vector_rearrangement},~$\sum\nolimits_{m = 1}^M \inprod{ \qbf^{(m)} }{ \qbf^{( \sigma (m) )} } < \sum\nolimits_{m = 1}^M \norm{ \qbf^{(m)} }^2$ for all $\sigma$ which is not the identity permutation.
This implies that $\sum\nolimits_{m = 1}^M \norm{ \qbf^{(m)} }^2$ is the maximal degree of a monomial in $\det (\Abf)$, and it is attained by a single element in $\sum\nolimits_{\text{permutation } \sigma : [M] \to [M]} \sign (\sigma) \cdot \gamma^{ \sum\nolimits_{m = 1}^M \inprod{ \qbf^{(m)} }{ \qbf^{( \sigma (m) )} } }$~---~that corresponding to the identity permutation.
Consequently, $\det (\Abf)$ cannot be the zero polynomial with respect to $\gamma$, and so it vanishes only on a finite set of values for $\gamma$.
In particular, there exists $\gamma > 0$ such that $\det (\Abf) \neq 0$, meaning $\rank \brk{ \Abf } = M$.
The proof concludes by noticing that for a positive $\gamma$ the entries of the chosen $\zbf^{(1)}, \ldots, \zbf^{(M)}$ are positive as well.
\end{proof}

Additionally, we make use of the following lemmas.

\begin{lemma}
	\label{lem:multiset_coeff_lower_bound}
	For any $D, P \in \N$, let $\multisetcoeff{D}{P} := \binom{D + P - 1}{P}$ be the multiset coefficient.
	Then:
	\[
	\multisetcoeff{D}{P} \geq \brk*{ \frac{D - 1}{P} + 1 }^{P}
	\text{\,.}
	\]
\end{lemma}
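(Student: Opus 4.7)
The plan is to expand the multiset coefficient as a product and then bound each factor individually from below by the same constant. Explicitly, I would write
\[
\multisetcoeff{D}{P} = \binom{D+P-1}{P} = \frac{(D+P-1)(D+P-2)\cdots D}{P!} = \prod_{k=1}^{P} \frac{D+k-1}{k}.
\]
The key algebraic observation is that each factor can be rewritten as $\frac{D+k-1}{k} = \frac{D-1}{k} + 1$, which turns the product into $\prod_{k=1}^{P}\brk{\tfrac{D-1}{k} + 1}$.

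Next, since $D \in \N$ we have $D - 1 \geq 0$, and the map $k \mapsto \tfrac{D-1}{k}$ is non-increasing in $k$ on $[1, P]$. Therefore for every $k \in [P]$ it holds that $\tfrac{D-1}{k} + 1 \geq \tfrac{D-1}{P} + 1$, and all factors in the product are non-negative. A termwise comparison of the two products (both having $P$ non-negative factors) then gives
\[
\prod_{k=1}^{P} \brk*{\tfrac{D-1}{k} + 1} \geq \prod_{k=1}^{P} \brk*{\tfrac{D-1}{P} + 1} = \brk*{\tfrac{D-1}{P} + 1}^{P},
\]
which is precisely the claimed bound. Chaining the two displays yields $\multisetcoeff{D}{P} \geq \brk{\tfrac{D-1}{P} + 1}^P$.

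There is no real obstacle here: the entire argument rests on the single rewriting $\tfrac{D+k-1}{k} = \tfrac{D-1}{k}+1$, after which the inequality becomes a one-line monotonicity comparison. The only mild care needed is to note that all quantities involved are non-negative (using $D \geq 1$), so that the termwise inequality lifts to an inequality between products.
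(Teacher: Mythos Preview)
Your proof is correct and is essentially the same approach as the paper's. The paper invokes the standard bound $\binom{N}{K} \geq (N/K)^K$ as a known fact and plugs in $N = D+P-1$, $K = P$; your argument simply unrolls the usual proof of that bound inline via the product expansion $\prod_{k=1}^{P}\frac{D+k-1}{k}$ and the termwise comparison, arriving at the identical conclusion.
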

\begin{proof}
	For any $N \geq K \in \N$, a known lower bound on the binomial coefficient is $\binom{N}{K} \geq \brk*{ \frac{N}{K} }^K$.
	Hence:
	\[
	\multisetcoeff{D}{P} = \binom{D + P - 1}{P} \geq \brk*{ \frac{D + P - 1}{P} }^P = \brk*{ \frac{D - 1}{P} + 1 }^{P}
	\text{\,.}
	\] 
\end{proof}

\begin{lemma}
	\label{lem:poly_mat_max_rank}
	For $D_1, D_2, K \in \N$, consider a polynomial function mapping variables $\params \in \R^K$ to matrices $\Abf (\params) \in \R^{D_1 \times D_2}$, \ie~the entries of $\Abf (\params)$ are polynomial in $\params$.
	If there exists a point $\params^* \in \R^K$ such that $\rank \brk{ \Abf (\params^*) } \geq R$, for $R \in [ \min \brk[c]{ D_1, D_2 } ]$, then the set $\brk[c]{ \params \in \R^K : \rank \brk{ \Abf (\params) } < R }$ has Lebesgue measure zero.
\end{lemma}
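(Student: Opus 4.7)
The plan is to reduce the rank condition to the non-vanishing of a single polynomial and then invoke the standard fact that the zero set of a non-identically-zero polynomial has Lebesgue measure zero.

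First, I would recall the minor characterization of rank: for a matrix $\Abf \in \R^{D_1 \times D_2}$, one has $\rank(\Abf) \geq R$ if and only if there exist row indices $\I \subseteq [D_1]$ and column indices $\J \subseteq [D_2]$ with $|\I| = |\J| = R$ such that the $R \times R$ minor $\det \Abf_{\I, \J}$ is nonzero. Equivalently, $\rank(\Abf) < R$ if and only if every $R \times R$ minor of $\Abf$ vanishes. Since each entry of $\Abf(\params)$ is a polynomial in $\params$, each $R \times R$ minor $\params \mapsto \det \Abf_{\I, \J}(\params)$ is also a polynomial in $\params$ (being a signed sum of products of polynomial entries).

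Next, by hypothesis, $\rank(\Abf(\params^*)) \geq R$, so there exists at least one pair $(\I_0, \J_0)$ with $|\I_0| = |\J_0| = R$ such that $\det \Abf_{\I_0, \J_0}(\params^*) \neq 0$. In particular, the polynomial $p(\params) := \det \Abf_{\I_0, \J_0}(\params)$ is not identically zero on $\R^K$. Observe now the inclusion
\[
\brk[c]1{ \params \in \R^K : \rank(\Abf(\params)) < R } \subseteq \brk[c]1{ \params \in \R^K : p(\params) = 0 }\text{\,,}
\]
since rank strictly below $R$ forces every $R \times R$ minor to vanish, including $p$.

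It therefore suffices to show that the zero set of a nonzero polynomial $p : \R^K \to \R$ has Lebesgue measure zero. This is a standard fact proved by induction on $K$: for $K = 1$, a nonzero univariate polynomial has only finitely many roots. For the inductive step, write $p(\params_1, \ldots, \params_K) = \sum_{j = 0}^d p_j(\params_1, \ldots, \params_{K - 1}) \cdot \params_K^j$, let $j^*$ be the largest index with $p_{j^*} \not\equiv 0$, and apply Fubini: for every $(\params_1, \ldots, \params_{K - 1})$ outside the measure-zero set $\{ p_{j^*} = 0\}$ (which has measure zero by the inductive hypothesis), the univariate polynomial $\params_K \mapsto p(\params_1, \ldots, \params_K)$ is nonzero and so vanishes on a finite, hence measure-zero, set of $\params_K$. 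I do not anticipate any obstacle: the argument is entirely routine, and the only content is identifying the single nonzero minor polynomial guaranteed by the hypothesis and then applying the polynomial zero-set fact.
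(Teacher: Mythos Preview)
Your proposal is correct and takes essentially the same approach as the paper: both use the minor characterization of rank, identify a nonzero $R \times R$ minor at $\params^*$, and appeal to the fact that the zero set of a non-identically-zero polynomial has Lebesgue measure zero. The paper's proof is terser (it cites the polynomial zero-set fact rather than sketching the Fubini induction) but the logic is identical.
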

\begin{proof}
	A matrix is of rank at least $R$ if and only if it has a $R \times R$ sub-matrix whose determinant is non-zero.
	The determinant of any sub-matrix of $\Abf (\params)$ is polynomial in the entries of $\Abf (\params)$, and so it is polynomial in $\params$ as well.
	Since the zero set of a polynomial is either the entire space or a set of Lebesgue measure zero~\citep{caron2005zero}, the fact that $\rank \brk{ \Abf (\params^*) } \geq R$ implies that $\brk[c]{ \params \in \R^K : \rank \brk{ \Abf (\params) } < R }$ has Lebesgue measure zero.
\end{proof}

\begin{lemma}
\label{lem:grid_tensor_entry_hadm_induction}
Let $\vbf, \vbf' \in \R_{\geq 0}^{\indim}$ whose first $\mindim := \min \{ \indim, \hdim\}$ entries are positive, and disjoint $\I', \J' \subseteq \vertices$ with no repeating shared neighbors (\cref{def:no_rep_neighbors}).
Denote by $\hidvec{l}{i} \in \R^{\hdim}$ the hidden embedding for $i \in \vertices$ at layer $l \in [L]$ of a GNN with depth $L \geq 2$ and product aggregation (\cref{eq:gnn_update,eq:prod_gnn_agg}), given the following assignment to its input variables (\ie~vertex features):
\[
\brk1{ \fvec{i} \leftarrow \vbf }_{i \in \I'} , \brk1{ \fvec{j} \leftarrow \vbf' }_{j \in \J'} , \brk1{ \fvec{k} \leftarrow \1 }_{k \in \vertices \setminus \brk{ \I' \cup \J' } }
\text{\,,}
\]
where $\1 \in \R^{\indim}$ is the vector holding one in all entries.
Suppose that the weights $\weightmat{1}, \ldots, \weightmat{L}$ of the GNN are given by:
\[
\begin{split}
	& \weightmat{1} := \Ibf \in \R^{\hdim \times \indim} \text{\,,} \\[0.2em]
	& \weightmat{2} := \begin{pmatrix} 1 & 1 & \cdots & 1 \\ 0 & 0 & \cdots & 0 \\ \vdots & \vdots & \cdots & \vdots \\ 0 & 0 & \cdots & 0\end{pmatrix} \in \R^{\hdim \times \hdim} \text{\,,} \\[0.2em]
	\forall l \in \{ 3, \ldots, L\} : ~&\weightmat{l} := \begin{pmatrix} 1 & 0 & \cdots & 0 \\ 0 & 0 & \cdots & 0 \\ \vdots & \vdots & \cdots & \vdots \\ 0 & 0 & \cdots & 0\end{pmatrix} \in \R^{\hdim \times \hdim} \text{\,,}
\end{split}
\]
where $\Ibf$ is a zero padded identity matrix, \ie~it holds ones on its diagonal and zeros elsewhere.
Then, for all $l \in \{ 2, \ldots, L\}$ and $i \in \vertices$, there exist $\phi^{(l, i)}, \psi^{(l, i)} : \R^{\indim} \to \R_{ > 0}$ such that:
\[
\hidvec{l}{i}_{1} = \phi^{(l, i)} \brk{ \vbf } \inprod{ \vbf_{:\mindim} }{ \vbf'_{:D} }^{ \nwalk{l - 1}{ \cut }{ \{i\} } } \psi^{(l, i)} \brk{ \vbf' } \quad , \quad \forall d \in \{ 2, \ldots, \hdim \} :~\hidvec{l}{i}_d = 0
\text{\,,}
\]
where $\cut := \neigh (\I') \cap \neigh (\J')$.
\end{lemma}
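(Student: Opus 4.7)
The plan is to prove the lemma by induction on $l \in \{2, \ldots, L\}$, with the base case $l = 2$ being the only one requiring real computation, and the inductive step becoming essentially bookkeeping once the special structure of $\weightmat{l}$ for $l \geq 3$ is exploited.

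First I would handle the base case $l = 2$. By the update rule and the choice $\weightmat{1} = \Ibf$, for every $j \in \vertices$ the vector $\hidvec{1}{j} = \hadmp_{k \in \neigh(j)} \Ibf \fvec{k}$ agrees with $\hadmp_{k \in \neigh(j)} \fvec{k}_{:\mindim}$ in its first $\mindim$ coordinates and is zero thereafter. Since $\weightmat{2}$ has $(1,\ldots,1)$ as its first row and zero rows elsewhere, $(\weightmat{2} \hidvec{1}{j})_1 = \sum_{d=1}^{\mindim} \prod_{k \in \neigh(j)} \fvec{k}_d$ and all other entries vanish, which automatically gives $\hidvec{2}{i}_d = 0$ for $d \geq 2$. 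Plugging in the prescribed feature assignment yields
\[
(\weightmat{2} \hidvec{1}{j})_1 = \sum_{d=1}^{\mindim} \vbf_d^{|\neigh(j) \cap \I'|} \cdot (\vbf'_d)^{|\neigh(j) \cap \J'|}.
\]
The key observation is then a case split on whether $j \in \cut$: if $j \notin \cut$ then one of $|\neigh(j) \cap \I'|, |\neigh(j) \cap \J'|$ is zero, so this sum is a strictly positive function of only $\vbf$ or only $\vbf'$ (positive because the first $\mindim$ entries of $\vbf, \vbf'$ are positive); if $j \in \cut = \neigh(\I') \cap \neigh(\J')$ then the no-repeating-shared-neighbors hypothesis forces $|\neigh(j) \cap \I'| = |\neigh(j) \cap \J'| = 1$, and the sum collapses to exactly $\inprod{\vbf_{:\mindim}}{\vbf'_{:\mindim}}$. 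Taking the Hadamard product over $j \in \neigh(i)$ to get $\hidvec{2}{i}_1$, the $\cut$-factors contribute $\inprod{\vbf_{:\mindim}}{\vbf'_{:\mindim}}^{|\neigh(i) \cap \cut|}$ and the remaining factors group into $\phi^{(2,i)}(\vbf) \cdot \psi^{(2,i)}(\vbf')$ with both factors positive. Since $\nwalk{1}{\cut}{\{i\}} = |\neigh(i) \cap \cut|$, this is the claimed form.

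For the inductive step, assume the statement for some $l \in \{2, \ldots, L-1\}$. The matrix $\weightmat{l+1}$ has only its $(1,1)$ entry nonzero (and equal to $1$), so $(\weightmat{l+1} \hidvec{l}{j})_1 = \hidvec{l}{j}_1$ and the other coordinates are zero; therefore $\hidvec{l+1}{i}_d = 0$ for $d \geq 2$ and
\[
\hidvec{l+1}{i}_1 = \prod_{j \in \neigh(i)} \hidvec{l}{j}_1.
\]
Substituting the induction hypothesis factorises this product into $\brk[s]{\prod_j \phi^{(l,j)}(\vbf)} \cdot \inprod{\vbf_{:\mindim}}{\vbf'_{:\mindim}}^{\sum_{j \in \neigh(i)} \nwalk{l-1}{\cut}{\{j\}}} \cdot \brk[s]{\prod_j \psi^{(l,j)}(\vbf')}$. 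The walk-counting identity $\sum_{j \in \neigh(i)} \nwalk{l-1}{\cut}{\{j\}} = \nwalk{l}{\cut}{\{i\}}$ (every length-$l$ walk from $\cut$ to $i$ uniquely decomposes as a length-$(l-1)$ walk from $\cut$ to a neighbour $j$ of $i$ followed by the step $j \to i$) closes the induction; defining $\phi^{(l+1,i)}$ and $\psi^{(l+1,i)}$ as the corresponding products preserves positivity.

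The only subtle point is the base case, specifically the identification of $\cut$-neighbours as the sole source of inner-product terms, which is what forces the no-repeating-shared-neighbours assumption to be used. Everything from $l \geq 3$ onward is mechanical because $\weightmat{l}$ collapses onto the first coordinate, making the recursion on $\hidvec{l}{i}_1$ essentially a scalar product recursion that matches the walk-count recursion exactly.
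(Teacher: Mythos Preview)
Your proposal is correct and follows essentially the same approach as the paper's proof: induction on $l$, with the base case $l=2$ using the no-repeating-shared-neighbors hypothesis to identify factors $(\weightmat{2}\hidvec{1}{j})_1$ with $j\in\cut$ as exactly $\inprod{\vbf_{:\mindim}}{\vbf'_{:\mindim}}$ and the others as positive functions of only one variable, and the inductive step using the structure of $\weightmat{l}$ for $l\geq 3$ together with the walk-count recursion $\sum_{j\in\neigh(i)}\nwalk{l-1}{\cut}{\{j\}}=\nwalk{l}{\cut}{\{i\}}$.
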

\begin{proof}
The proof is by induction over the layer $l \in \{2, \ldots, L\}$.
For $l = 2$, fix $i \in \vertices$.
By the update rule of a GNN with product aggregation:
\[
\hidvec{2}{i} = \hadmp_{j \in \neigh (i) } \brk1{ \weightmat{2} \hidvec{1}{j} }
\text{\,.}
\]
Plugging in the value of $\weightmat{2}$ we get:
\be
\hidvec{2}{i}_{1} = \prod\nolimits_{ j \in \neigh (i) } \brk2{ \sum\nolimits_{d = 1}^{\hdim} \hidvec{1}{j}_d } \quad , \quad \forall d \in \{ 2, \ldots, \hdim \} :~\hidvec{2}{i}_d = 0
\text{\,.}
\label{eq:hid_vec_l2_entries_inter}
\ee
Let $\bar{\vbf}, \bar{\vbf}' \in \R^{\hdim}$ be the vectors holding $\vbf_{:\mindim}$ and $\vbf'_{:\mindim}$ in their first $\mindim$ coordinates and zero in the remaining entries, respectively.
Similarly, we use $\bar{\1} \in \R^{\hdim}$ to denote the vector whose first $\mindim$ entries are one and the remaining are zero.
Examining $\hidvec{1}{j}$ for $j \in \neigh (i)$, by the assignment of input variables and the fact that $\weightmat{1}$ is a zero padded identity matrix we have that:
\[
\begin{split}
\hidvec{1}{j} & = \hadmp_{ k \in \neigh (j) } \brk1{ \weightmat{1} \fvec{k} } = \brk1{ \hadmp^{ \abs{ \neigh (j) \cap \I' } } \bar{\vbf} } \hadmp \brk1{ \hadmp^{ \abs{ \neigh (j) \cap \J' } } \bar{\vbf}' } \hadmp \brk1{ \hadmp^{ \abs{ \neigh (j) \setminus \brk{ \I' \cup \J' } } } \bar{\1} } \\
& = \brk1{ \hadmp^{ \abs{ \neigh (j) \cap \I' } } \bar{\vbf} } \hadmp \brk1{ \hadmp^{ \abs{ \neigh (j) \cap \J' } } \bar{\vbf}' }
\text{\,.}
\end{split}
\]
Since the first $\mindim$ entries of $\bar{\vbf}$ and $\bar{\vbf}'$ are positive while the rest are zero, the same holds for $\hidvec{1}{j}$.
Additionally, recall that $\I'$ and $\J'$ have no repeating shared neighbors.
Thus, if $j \in \neigh (\I') \cap \neigh (\J') = \cut$, then $j$ has a single neighbor in $\I'$ and a single neighbor in $\J'$, implying $\hidvec{1}{j} = \bar{\vbf} \hadmp \bar{\vbf}'$.
Otherwise, if $j \notin \cut$, then $\neigh (j) \cap \I' = \emptyset$ or $\neigh (j) \cap \J' = \emptyset$ must hold.
In the former $\hidvec{1}{j}$ does not depend on $\vbf$, whereas in the latter $\hidvec{1}{j}$ does not depend on $\vbf'$.

Going back to~\cref{eq:hid_vec_l2_entries_inter}, while noticing that $\abs{ \neigh (i) \cap \cut } = \nwalk{1}{\cut}{ \{i\} }$, we arrive at:
\[
\begin{split}
\hidvec{2}{i}_{1} & = \prod\nolimits_{ j \in \neigh (i) \cap \cut } \brk2{ \sum\nolimits_{d = 1}^{\hdim} \hidvec{1}{j}_d } \cdot \prod\nolimits_{ j \in \neigh (i) \setminus \cut } \brk2{ \sum\nolimits_{d = 1}^{\hdim} \hidvec{1}{j}_d } \\
& = \prod\nolimits_{ j \in \neigh (i) \cap \cut } \brk2{ \sum\nolimits_{d = 1}^{\hdim} \brk[s]1{ \bar{\vbf} \hadmp \bar{\vbf}' }_d } \cdot \prod\nolimits_{ j \in \neigh (i) \setminus \cut } \brk2{ \sum\nolimits_{d = 1}^{\hdim} \hidvec{1}{j}_d } \\
& = \inprod{ \vbf_{:D} }{ \vbf'_{:D} }^{\nwalk{1}{\cut}{\{i\}}} \cdot \prod\nolimits_{ j \in \neigh (i) \setminus \cut } \brk2{ \sum\nolimits_{d = 1}^{\hdim} \hidvec{1}{j}_d }
\text{\,.}
\end{split}
\]
As discussed above, for each $j \in \neigh (i) \setminus \cut$ the hidden embedding $\hidvec{1}{j}$ does not depend on $\vbf$ or it does not depend on $\vbf'$.
Furthermore, $\sum\nolimits_{d = 1}^{\hdim} \hidvec{1}{j}_d > 0$ for all $j \in \neigh (i)$.
Hence, there exist $\phi^{(2, i)}, \psi^{(2, i)} : \R^{\indim} \to \R_{> 0}$ such that:
\[
\hidvec{2}{i}_{1} = \phi^{(2, i)} \brk{ \vbf } \inprod{ \vbf_{:\mindim} }{ \vbf'_{:D} }^{ \nwalk{1}{ \cut }{ \{i\} } } \psi^{(2, i)} \brk{ \vbf' }
\text{\,,}
\]
completing the base case.

Now, assuming that the inductive claim holds for $l - 1 \geq 2$, we prove that it holds for $l$.
Let $i \in \vertices$.
By the update rule of a GNN with product aggregation $\hidvec{l}{i} = \hadmp_{j \in \neigh (i)} \brk{ \weightmat{l} \hidvec{l -1}{j} }$.
Plugging in the value of $\weightmat{l}$ we get:
\[
\hidvec{l}{i}_{1} = \prod\nolimits_{ j \in \neigh (i) } \hidvec{l - 1}{j}_1 \quad , \quad \forall d \in \{ 2, \ldots, \hdim \} :~\hidvec{l}{i}_d = 0
\text{\,.}
\]
By the inductive assumption $\hidvec{l - 1}{j}_1 = \phi^{(l - 1, j)} \brk{ \vbf } \inprod{ \vbf_{:\mindim} }{ \vbf'_{:D} }^{ \nwalk{l - 2}{ \cut }{ \{j\} } } \psi^{(l - 1, j)} \brk{ \vbf' }$ for all $j \in \neigh (i)$, where $\phi^{(l - 1, j)}, \psi^{(l - 1, j)} : \R^{\indim} \to \R_{> 0}$.
Thus:
\[
\begin{split}
\hidvec{l}{i}_{1} & = \prod\nolimits_{ j \in \neigh (i) } \hidvec{l - 1}{j}_1 \\
& = \prod\nolimits_{j \in \neigh (i) } \phi^{(l - 1, j)} \brk{ \vbf } \inprod{ \vbf_{:\mindim} }{ \vbf'_{:D} }^{ \nwalk{l - 2}{ \cut }{ \{j\} } } \psi^{(l - 1, j)} \brk{ \vbf' } \\
& = \brk*{ \prod\nolimits_{j \in \neigh (i) } \phi^{(l - 1, j)} \brk{ \vbf } } \cdot \inprod{ \vbf_{:\mindim} }{ \vbf'_{:D} }^{ \sum\nolimits_{j \in \neigh (i) } \nwalk{l - 2}{\cut}{ \{j\} } } \cdot \brk*{ \prod\nolimits_{j \in \neigh (i) } \psi^{(l - 1, j)} \brk{ \vbf' } }
\text{\,.}
\end{split}
\]
Define $\phi^{(l, i)} : \vbf \mapsto \prod\nolimits_{j \in \neigh (i) } \phi^{(l - 1, j)} \brk{ \vbf }$ and $\psi^{(l, i)} : \vbf' \mapsto \prod\nolimits_{j \in \neigh (i) } \psi^{(l - 1, j)} \brk{ \vbf' }$.
Since the range of $\phi^{(l - 1, j)}$ and $\psi^{(l - 1, j)}$ is $\R_{> 0}$ for all $j \in \neigh (i)$, so is the range of $\phi^{(l, i)}$ and $\psi^{(l, i)}$.
The desired result thus readily follows by noticing that $\sum\nolimits_{j \in \neigh (i) } \nwalk{l - 2}{\cut}{ \{j\} } = \nwalk{l - 1}{ \cut}{ \{i\}}$:
\[
\hidvec{l}{i}_{1} = \phi^{(l, i)} \brk{ \vbf } \inprod{ \vbf_{:\mindim} }{ \vbf'_{:D} }^{ \nwalk{l - 1}{ \cut }{ \{i\} } } \psi^{(l, i)} \brk{ \vbf' }
\text{\,.}
\]
\end{proof}

%%%%%% DIRECTED MULTIPLE EDGE TYPES SEPARATION RANK UPPER BOUND PROOF
\subsection{Proof of~\cref{thm:directed_sep_rank_upper_bound}}
\label{app:proofs:directed_sep_rank_upper_bound}

The proof follows a line identical to that of~\cref{thm:sep_rank_upper_bound} (\cref{app:proofs:sep_rank_upper_bound}), requiring only slight adjustments.
We outline the necessary changes.

Extending the tensor network representations of GNNs with product aggregation to directed graphs and multiple edge types is straightforward.
Nodes, legs, and leg weights are as described in~\cref{app:prod_gnn_as_tn} for undirected graphs with a single edge type, except that:
\begin{itemize}[leftmargin=2em]
	\vspace{-1mm}
	\item Legs connecting $\delta$-tensors with weight matrices in the same layer are adapted such that only incoming neighbors are considered.
	Formally, in~\cref{eq:graphtensor_tn,eq:vertextensor_tn}, $\neigh (i)$ is replaced by $\neighin (i)$ in the leg definitions, for $i \in \vertices$.
	
	\item Weight matrices $\brk{ \weightmat{l, q} }_{l \in [L] , q \in [Q]}$ are assigned to nodes in accordance with edge types.
	Namely, if at layer $l \in [L]$ a $\delta$-tensor associated with $i \in \vertices$ is connected to a weight matrix associated with $j \in \neighin (i)$, then $\weightmat{l, \edgetypemap (j, i)}$ is assigned to the weight matrix node, as opposed to $\weightmat{l}$ in the single edge type setting.
	Formally, let $\weightmatdup{l}{j}{\gamma}$ be a node at layer $l \in [L]$ connected to $\deltatensordup{l}{i}{\gamma'}$, for $i \in \vertices, j \in \neighin (i)$, and some $\gamma, \gamma' \in \N$.
	Then, $\weightmatdup{l}{j}{\gamma}$ stands for a copy of $\weightmat{l, \edgetypemap (j, i)}$.
\end{itemize}

For $\fmat = \brk{\fvec{1}, \ldots, \fvec{\abs{\vertices}} } \in \R^{\indim \times \abs{\vertices} }$, let $\tngraph (\fmat)$ and $\tngraph^{(t)} (\fmat)$ be the tensor networks corresponding to $\funcgraph{\params}{\graph} (\fmat)$ and $\funcvert{\params}{\graph}{t} (\fmat)$, respectively, whose construction is outlined above.
Then,~\cref{lem:min_cut_tn_sep_rank_ub} (from~\cref{app:proofs:sep_rank_upper_bound:min_cut_tn_ub}) and its proof apply as stated.
Meaning, $\sepranknoflex{\funcgraph{\params}{\graph} }{\I}$ and $\sepranknoflex{ \funcvert{\params}{\graph}{t} }{\I}$ are upper bounded by the minimal modified multiplicative cut weights in $\tngraph (\fmat)$ and $\tngraph^{(t)} (\fmat)$, respectively, among cuts separating leaves associated with vertices of the input graph in~$\I$ from leaves associated with vertices of the input graph in~$\I^c$.
Therefore, to establish~\cref{eq:directed_sep_rank_upper_bound_graph_pred,eq:directed_sep_rank_upper_bound_vertex_pred}, it suffices to find cuts in the respective tensor networks with sufficiently low modified multiplicative weights.
As is the case for undirected graphs with a single edge type (see~\cref{app:proofs:sep_rank_upper_bound:graph,app:proofs:sep_rank_upper_bound:vertex}), the cuts separating nodes corresponding to vertices in~$\I$ from all other nodes yield the desired upper bounds. 
\qed

%%%%%% DIRECTED MULTIPLE EDGE TYPES SEPARATION RANK LOWER BOUND PROOF
\subsection{Proof of~\cref{thm:directed_sep_rank_lower_bound}}
\label{app:proofs:directed_sep_rank_lower_bound}

The proof follows a line identical to that of~\cref{thm:sep_rank_lower_bound} (\cref{app:proofs:sep_rank_lower_bound}), requiring only slight adjustments.
We outline the necessary changes.

In the context of graph prediction, let $\cut^* \in \argmax_{ \cut \in \cutsetdir (\I) } \log \brk{ \alpha_{\cut} } \cdot \nwalk{L - 1}{\cut}{\vertices}$.
By~\cref{lem:grid_tensor_sep_rank_lb} (from~\cref{app:proofs:sep_rank_lower_bound}), to prove that~\cref{eq:directed_sep_rank_lower_bound_graph_pred} holds for weights~$\params$, it suffices to find template vectors for which $\log \brk{ \rank \mat{ \gridtensor{ \funcgraph{\params}{\graph} } }{\I} } \geq \log \brk{ \alpha_{\cut^*} } \cdot \nwalk{L - 1}{\cut^*}{\vertices}$.
Notice that, since the outputs of $\funcgraph{\params}{\graph}$ vary polynomially with~$\params$, so do the entries of~$\mat{ \gridtensor{ \funcgraph{\params}{\graph} } }{\I}$ for any choice of template vectors.
Thus, according to~\cref{lem:poly_mat_max_rank} (from~\cref{app:proofs:sep_rank_lower_bound:lemmas}), by constructing weights~$\params$ and template vectors satisfying $\log \brk{ \rank \mat{ \gridtensor{ \funcgraph{\params}{\graph} } }{\I} } \geq \log \brk{ \alpha_{\cut^*} } \cdot \nwalk{L - 1}{\cut^*}{\vertices}$, we may conclude that this is the case for almost all assignments of weights, meaning~\cref{eq:directed_sep_rank_lower_bound_graph_pred} holds for almost all assignments of weights.
For undirected graphs with a single edge type,~\cref{app:proofs:sep_rank_lower_bound:graph} provides such weights $\weightmat{1}, \ldots, \weightmat{L}, \weightmat{o}$ and template vectors.
The proof in the case of directed graphs with multiple edge types is analogous, requiring only a couple adaptations: \emph{(i)}~weight matrices of all edge types at layer $l \in [L]$ are set to the $\weightmat{l}$ chosen in~\cref{app:proofs:sep_rank_lower_bound:graph}; and \emph{(ii)}~$\cut_\I$ and $\cutset (\I)$ are replaced with their directed counterparts $\cutdir_\I$ and $\cutsetdir (\I)$, respectively.

In the context of vertex prediction, let $\cut_t^* \in \argmax_{ \cut \in \cutsetdir (\I) } \log \brk{ \alpha_{\cut, t} } \cdot \nwalk{L - 1}{\cut}{ \{ t \} }$.
Due to arguments similar to those above, to prove that~\cref{eq:directed_sep_rank_lower_bound_vertex_pred} holds for almost all assignments of weights, we need only find weights $\params$ and template vectors satisfying $\log \brk{ \rank \mat{ \gridtensor{ \funcvert{\params}{\graph}{t} } }{\I} } \geq \log \brk{ \alpha_{\cut^*_t, t} } \cdot \nwalk{L - 1}{\cut^*_t}{ \{ t \} }$.
For undirected graphs with a single edge type,~\cref{app:proofs:sep_rank_lower_bound:vertex} provides such weights and template vectors.
The adaptations necessary to extend~\cref{app:proofs:sep_rank_lower_bound:vertex} to directed graphs with multiple edge types are identical to those specified above for extending~\cref{app:proofs:sep_rank_lower_bound:graph} in the context of graph prediction.

Lastly, recalling that a finite union of measure zero sets has measure zero as well establishes that~\cref{eq:directed_sep_rank_lower_bound_graph_pred,eq:directed_sep_rank_lower_bound_vertex_pred} jointly hold for almost all assignments of weights.
\qed

%%%%%% PROOF OF PROD GNN AS TN
\subsection{Proof of~\cref{prop:tn_constructions}}
\label{app:proofs:tn_constructions}

We first prove that the contractions described by $\tngraph (\fmat)$ produce $\funcgraph{\params}{\graph} (\fmat)$.
Through an induction over the layer $l \in [L]$, for all $i \in \vertices$ and $\gamma \in [ \nwalk{L - l}{ \{i\} }{\vertices} ]$ we show that contracting the sub-tree whose root is $\deltatensordup{l}{i}{\gamma}$ yields $\hidvec{l}{i}$~---~the hidden embedding for $i$ at layer $l$ of the GNN inducing $\funcgraph{\params}{\graph}$, given vertex features $\fvec{1}, \ldots, \fvec{\abs{\vertices}}$.

For $l = 1$, fix some $i \in \vertices$ and $\gamma \in [ \nwalk{L - 1}{ \{i\} }{\vertices} ]$.
The sub-tree whose root is $\deltatensordup{1}{i}{\gamma}$ comprises $\abs{ \neigh (i ) }$ copies of $\weightmat{1}$, each associated with some $j \in \neigh (i)$ and contracted in its second mode with a copy of $\fvec{ j }$.
Additionally, $\deltatensordup{1}{i}{\gamma}$, which is a copy of $\deltatensor{ \abs{ \neigh (i) } +1 }$, is contracted with the copies of $\weightmat{1}$ in their first mode.
Overall, the execution of all contractions in the sub-tree can be written as $\deltatensor{ \abs{ \neigh (i) } + 1} \contract{ j \in [ \abs{ \neigh (i) } ] } \brk{ \weightmat{ 1 } \fvec{ \neigh (i)_j }}$, where $\neigh (i)_j$, for $j \in [\abs{\neigh (i)}]$, denotes the $j$'th neighbor of $i$ according to an ascending order (recall vertices are represented by indices from $1$ to $\abs{\vertices}$).
The base case concludes by~\cref{lem:delta_contract}: 
\[
\deltatensor{ \abs{ \neigh (i) } + 1} \contract{ j \in [ \abs{ \neigh (i) } ] } \brk*{ \weightmat{ 1 } \fvec{ \neigh (i)_j }} = \hadmp_{ j \in [ \abs{ \neigh (i) } ] } \brk*{ \weightmat{ 1 } \fvec{ \neigh (i)_j }} = \hidvec{1}{i}
\text{\,.}
\]

Assuming that the inductive claim holds for $l - 1 \geq 1$, we prove that it holds for $l$.
Let $i \in \vertices$ and $\gamma \in [ \nwalk{L - l}{\{i\}}{\vertices} ]$.
The children of $\deltatensordup{l}{i}{\gamma}$ in the tensor network are of the form $\weightmatdup{l}{ \neigh (i)_j }{ \dupmap{l}{i}{j} (\gamma) }$, for $j \in [ \abs{ \neigh (i) } ]$, and each $\weightmatdup{l}{ \neigh (i)_j }{ \dupmap{l}{i}{j} (\gamma) }$ is connected in its other mode to $\deltatensordup{l - 1}{ \neigh (i)_j }{ \dupmap{l}{i}{j} (\gamma) }$.
By the inductive assumption for~$l - 1$, we know that performing all contractions in the sub-tree whose root is $\deltatensordup{l - 1}{ \neigh (i)_j }{ \dupmap{l}{i}{j} (\gamma) }$ produces $\hidvec{l-1}{ \neigh (i)_j }$, for all $j \in [ \abs{ \neigh (i) } ]$.
Since $\deltatensordup{l}{i}{\gamma}$ is a copy of $\deltatensor{ \abs{ \neigh (i) } + 1 }$, and each $\weightmatdup{l}{ \neigh (i)_j }{ \dupmap{l}{i}{j} (\gamma) }$ is a copy of $\weightmat{l}$, the remaining contractions in the sub-tree of $\deltatensordup{l}{i}{\gamma}$ thus give:
\[
\deltatensor{ \abs{ \neigh (i) } + 1 } \contract{ j \in [ \abs{ \neigh (i) } ] } \brk*{ \weightmat{l} \hidvec{l - 1}{ \neigh (i)_{j } } }
\text{\,,}
\]
which according to~\cref{lem:delta_contract} amounts to:
\[
\deltatensor{ \abs{ \neigh (i) } + 1 } \contract{ j \in [ \abs{ \neigh (i) } ] } \brk*{ \weightmat{l} \hidvec{l - 1}{ \neigh (i)_{j } } } = \hadmp_{ j \in [ \abs{ \neigh (i) } ] } \brk*{ \weightmat{l} \hidvec{l - 1}{ \neigh (i)_{j } } } = \hidvec{l}{i}
\text{\,,}
\]
establishing the induction step.

\medskip

With the inductive claim at hand, we show that contracting $\tngraph (\fmat)$ produces $\funcgraph{\params}{\graph} (\fmat)$.
Applying the inductive claim for $l = L$, we have that $\hidvec{L}{1}, \ldots, \hidvec{L}{ \abs{ \vertices} }$ are the vectors produced by executing all contractions in the sub-trees whose roots are $\deltatensordup{L}{1}{1}, \ldots, \deltatensordup{L}{ \abs{\vertices} }{1}$, respectively.
Performing the remaining contractions, defined by the legs of $\deltatensor{ \abs{\vertices} + 1}$, therefore yields $ \weightmat{o} \brk1{ \deltatensor{ \abs{\vertices} + 1} \contract{ i \in [\abs{\vertices}] } \hidvec{L}{ i } }$. By~\cref{lem:delta_contract}:
\[
\deltatensor{ \abs{\vertices} + 1} \contract{ i \in [\abs{\vertices}] } \hidvec{L}{ i } = \hadmp_{ i \in [\abs{\vertices}] } \hidvec{L}{ i }
\text{\,.}
\]
Hence, $ \weightmat{o} \brk1{ \deltatensor{ \abs{\vertices} + 1} \contract{ i \in [\abs{\vertices}] } \hidvec{L}{ i } } = \weightmat{o} \brk{ \hadmp_{ i \in [\abs{\vertices}] } \hidvec{L}{ i } } = \funcgraph{\params}{\graph} (\fmat)$, meaning contracting $\tngraph (\fmat)$ results in $\funcgraph{\params}{\graph} (\fmat)$.

\medskip

An analogous proof establishes that the contractions described by $\tngraph^{(t)} (\fmat)$ yield $\funcvert{\params}{\graph}{t} (\fmat)$.
Specifically, the inductive claim and its proof are the same, up to $\gamma$ taking values in $[ \nwalk{L - l}{ \{i\} }{ \{t\}} ]$ instead of $[ \nwalk{L - l}{ \{i\} }{ \vertices } ]$, for $l \in [L]$.
This implies that $\hidvec{L}{t}$ is the vector produced by contracting the sub-tree whose root is $\deltatensordup{L}{t}{1}$.
Performing the only remaining contraction, defined by the leg connecting $\deltatensordup{L}{t}{1}$ with $\weightmat{o}$, thus results in $\weightmat{o} \hidvec{L}{t} = \funcvert{\params}{\graph}{t} (\fmat)$.
\qed

\subsubsection{Technical Lemma}
\label{app:proofs:tn_constructions:lemmas}

\begin{lemma}
	\label{lem:delta_contract}
	Let $\deltatensor{N + 1} \in \R^{D \times \cdots \times D}$ be an order $N + 1 \in \N$ tensor that has ones on its hyper-diagonal and zeros elsewhere, \ie~$\deltatensor{N + 1}_{d_1, \ldots, d_{N + 1}} = 1$ if $d_1 = \cdots = d_{N + 1}$ and $\deltatensor{N + 1}_{d_1, \ldots, d_{N + 1}} = 0$ otherwise, for all $d_1, \ldots, d_{N + 1} \in [D]$.
	Then, for any $\xbf^{(1)}, \ldots, \xbf^{(N)} \in \R^{D}$ it holds that $\deltatensor{N + 1} \contract{i \in [N]} \xbf^{(i)} = \hadmp_{i \in [N]} \xbf^{(i)} \in \R^D$.
\end{lemma}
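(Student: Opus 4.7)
The plan is to prove the identity by an entry-wise computation, unfolding the iterated contraction into a single sum and exploiting the hyper-diagonal support of $\deltatensor{N+1}$.

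First, I would apply \cref{def:tensor_contraction} inductively to express the iterated contraction $\deltatensor{N+1} \contract{i \in [N]} \xbf^{(i)}$ as a single $D$-dimensional vector whose $d$'th entry, for $d \in [D]$, equals
\[
\brk[s]*{\deltatensor{N+1} \contract{i \in [N]} \xbf^{(i)}}_d = \sum\nolimits_{d_1, \ldots, d_N = 1}^D \deltatensor{N+1}_{d_1, \ldots, d_N, d} \cdot \prod\nolimits_{i \in [N]} \xbf^{(i)}_{d_i} \text{\,.}
\]
Here I am using the fact that each $\xbf^{(i)}$ is a vector (order one), so contracting it with the current first mode of the running tensor simply multiplies by $\xbf^{(i)}_{d_i}$ and drops that mode; after $N$ such contractions a single free mode (originally mode $N+1$ of $\deltatensor{N+1}$) remains.

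Next, I would invoke the defining property of $\deltatensor{N+1}$: the summand is nonzero only when $d_1 = d_2 = \cdots = d_N = d$, in which case $\deltatensor{N+1}_{d, d, \ldots, d} = 1$. Hence the sum collapses to a single term, giving
\[
\brk[s]*{\deltatensor{N+1} \contract{i \in [N]} \xbf^{(i)}}_d = \prod\nolimits_{i \in [N]} \xbf^{(i)}_d = \brk[s]*{\hadmp_{i \in [N]} \xbf^{(i)}}_d \text{\,,}
\]
where the last equality is just the definition of the Hadamard product applied entry-wise. Since this holds for every $d \in [D]$, the two vectors coincide.

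There is no serious obstacle here: the only subtlety is bookkeeping of mode indices after each successive contraction (since each contraction shifts the remaining modes down by one), and this is handled cleanly by writing the iterated contraction as a single multi-index sum before invoking the structure of $\deltatensor{N+1}$. A brief inductive argument on $N$ could also be used in place of the direct computation, but the direct route above is the shortest.
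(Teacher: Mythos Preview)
Your proposal is correct and follows essentially the same approach as the paper: both compute the $d$'th entry of the contraction as $\sum_{d_1,\ldots,d_N} \deltatensor{N+1}_{d_1,\ldots,d_N,d}\prod_i \xbf^{(i)}_{d_i}$, collapse the sum via the hyper-diagonal support of $\deltatensor{N+1}$, and identify the result with the entrywise Hadamard product. Your added remarks on mode-index bookkeeping are more explicit than the paper's terse one-line computation, but the argument is the same.
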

\begin{proof}
	By the definition of tensor contraction (\cref{def:tensor_contraction}), for all $d \in [D]$ we have that:
	\[
	\brk1{ \deltatensor{N + 1} \contract{i \in [N]} \xbf^{(i)} }_{d} = \sum\nolimits_{d_1, \ldots, d_N = 1}^{D} \deltatensor{N + 1}_{d_1, \ldots, d_N, d} \cdot \prod\nolimits_{i \in [N]} \xbf^{(i)}_{d_i} = \prod\nolimits_{i \in [N]} \xbf^{(i)}_d = \brk1{ \hadmp_{i \in [N]} \xbf^{(i)} }_d
	\text{\,.}
	\]
\end{proof}

\end{document}